\theoremstyle{plain}
\newcommand\exend{{$\triangle$}}
\newcommand{\update}{\small \textsc{up}\xspace}
\newcommand{\aggregate}{\small\textsc{agg}\xspace}
\newcommand{\mes}{\small\textsc{msg}\xspace}
\newcommand{\init}{\small\textsc{init}\xspace}
\def\eqref#1{equation~\ref{#1}}
\def\1{\bm{1}}
\newcommand{\enc}{\textsf{Enc}\xspace}
\newcommand{\dec}{\textsf{Dec}\xspace}
\newcommand{\cmpnn}{{\sf C-MPNN}\xspace}
\newcommand{\rdwl}{{\sf rawl}}
\newcommand{\rawl}{{\sf rawl}}
\newcommand{\llbrace}{\{\!\!\{}
\newcommand{\rrbrace}{\}\!\!\}}
\def\vb{{\bm{b}}}
\def\vh{{\bm{h}}}
\def\vp{{\bm{p}}}
\def\vq{{\bm{q}}}
\def\vu{{\bm{u}}}
\def\vx{{\bm{x}}}
\def\vz{{\bm{z}}}
\def\mA{{\bm{A}}}
\def\mE{{\bm{E}}}
\def\mW{{\bm{W}}}
\def\mX{{\bm{X}}}
\def\mY{{\bm{Y}}}
\DeclareMathAlphabet{\mathsfit}{\encodingdefault}{\sfdefault}{m}{sl}
\SetMathAlphabet{\mathsfit}{bold}{\encodingdefault}{\sfdefault}{bx}{n}
\def\gF{{\mathcal{F}}}
\def\gN{{\mathcal{N}}}
\def\sR{{\mathbb{R}}}
\newcommand{\R}{\mathbb{R}}
\def\cmpnn{{\textnormal{C-MPNN}}}
\def\hcmpnn{{\textnormal{HC-MPNN}}}
\def\hcmpnns{{\textnormal{HC-MPNNs}}}
\def\hcnet{{\textnormal{HCNet}}}
\def\hcnets{{\textnormal{HCNets}}}
\newcommand{\hrwl}{{\sf hrwl}}
\newcommand{\hcwl}{{\sf hcwl}}
\newcommand{\col}{\mathsf{col}}
\newcommand{\rcol}{\mathsf{rcol}}
\newcommand{\ecol}{\mathsf{ecol}}
\newcommand{\tcolor}{{\sf col}}
\def\ultra{{\textsc{Ultra}}}
\def\mgnn{{\textsc{Motif}}}
\def\ingram{{\textsc{InGram}}}
\def\hubclass{\textsf{ConnectHub}}
\newcommand{\lift}{\textsc{Lift}}
\newcommand{\hash}{\textsc{Hash}}
\newcommand{\eval}{\text{Eval}}
\newcommand{\conto}{\rightarrow^{co}}
\newcommand{\F}{\mathcal{F}}
\theoremstyle{plain}
\newtheorem{theorem}{Theorem}[section]
\newtheorem{proposition}[theorem]{Proposition}
\newtheorem{corollary}[theorem]{Corollary}
\theoremstyle{definition}
\newtheorem{definition}[theorem]{Definition}
\newtheorem{example}[theorem]{Example}
\theoremstyle{remark}
\icmltitlerunning{How Expressive are Knowledge Graph Foundation Models?}
\begin{document}

\twocolumn[
\icmltitle{How Expressive are Knowledge Graph Foundation Models?}

\icmlsetsymbol{equal}{*}
\begin{icmlauthorlist}
\icmlauthor{Xingyue Huang}{oxford}
\icmlauthor{Pablo Barceló}{chile,imfd,cenia}
\icmlauthor{Michael M. Bronstein}{oxford,aithyra}
\icmlauthor{İsmail İlkan Ceylan}{oxford}
\icmlauthor{Mikhail Galkin}{google}
\icmlauthor{Juan L Reutter}{chile,imfd,cenia}
\icmlauthor{Miguel Romero Orth}{chile,cenia}
\end{icmlauthorlist}

\icmlaffiliation{oxford}{University of Oxford}
\icmlaffiliation{chile}{Universidad Católica de Chile}
\icmlaffiliation{google}{Google}
\icmlaffiliation{cenia}{CENIA}
\icmlaffiliation{imfd}{IMFD}
\icmlaffiliation{aithyra}{AITHYRA}

\icmlkeywords{Graph Neural Networks, Link Prediction, Expressivity Study, Graph Foundation Models}
\icmlcorrespondingauthor{Xingyue Huang}{University of Oxford}

\vskip 0.3in
]



\printAffiliationsAndNotice{}  

\begin{abstract}

Knowledge Graph Foundation Models (KGFMs) are at the frontier for deep learning on knowledge graphs (KGs), as they can generalize to completely novel knowledge graphs with different relational vocabularies. Despite their empirical success, our theoretical understanding of KGFMs remains very limited. In this paper, we conduct a rigorous study of the expressive power of KGFMs. Specifically, we show that the expressive power of KGFMs directly depends on the \emph{motifs} that are used to learn the relation representations. We then observe that the most typical motifs used in the existing literature are \emph{binary}, as the representations are learned based on how pairs of relations interact, which limits the model's expressiveness. As part of our study, we design more expressive KGFMs using richer motifs, which necessitate learning relation representations based on, e.g., how triples of relations interact with each other. Finally, we empirically validate our theoretical findings, showing that the use of richer motifs results in better performance on a wide range of datasets drawn from different domains.
\end{abstract}

\section{Introduction}
\label{sec:introduction}

\begin{figure}[t]
    \centering
    \begin{tikzpicture}[scale=0.9,
        every node/.style={circle, draw, fill=black, inner sep=2pt, line width=0.5pt}, 
        line width=1.2pt, 
        >=stealth,
        shorten >=3pt, 
        shorten <=3pt,
        transform shape
    ]
    \definecolor{cartoonred}{RGB}{190,80,80}
    \definecolor{cartoonblue}{RGB}{80,80,190}
    \definecolor{cartoongreen}{RGB}{80,160,80}
    \definecolor{cartoonyellow}{RGB}{150,120,0}
    \begin{scope}[scale=0.8]
        \node (pl)[label={[font=\small, xshift = 0em, yshift = -0.5em]90:$\mathsf{HSBC}$}] at (0,0) {};
        \node (ph)[label={[font=\small, xshift= 0em, yshift = -0.75em]90:$\mathsf{Finance}$}] at (2.4,0) {};
        \node (s)[label={[font=\small, xshift = 0em, yshift = 1.5em]-90:$\mathsf{Bloomberg}$}] at (0,-2) {};
        \node (a)[label={[font=\small, xshift= 0em, yshift=0.7em]-90:$\mathsf{Oxford}$}] at (2.4,-2) {};
        \node (m)[label={[font=\small, xshift = 0em, yshift=-1em]90:$\mathsf{Science}$}] at (4.5,-1) {};

        \draw[->, color=cartoonblue!70] (s) to node[midway, above, color=cartoonblue!70, draw=none, fill=none, yshift=-2.2em] {\footnotesize $\mathsf{provide}$} (a);
        \draw[->, color=cartoonblue!70] (s) to node[midway, above, color=cartoonblue!70, draw=none, fill=none, xshift=1.2em, rotate=90] {\footnotesize $\mathsf{provide}$} (pl);
        \draw[->, color=cartoonred!70] (s) to node[midway, above, color=cartoonred!70, draw=none, fill=none, yshift=-1.8em, xshift=1.2em, rotate=40] {\footnotesize $\mathsf{research}$} (ph);
        \draw[->, color=cartoonred!70] (pl) to node[midway, above, color=cartoonred!70, draw=none, fill=none, yshift=-1em] {\footnotesize $\mathsf{research}$} (ph);
        \draw[->, color=cartoonred!70] (a) to node[midway, above, color=cartoonred!70, draw=none, fill=none, xshift=2.3em, rotate=90] {\footnotesize $\mathsf{research}$} (ph);
        \draw[->, color=cartoonred!70] (a) to node[midway, above, color=cartoonred!70, draw=none, fill=none, yshift=-2em, xshift=1em, rotate=25] {\footnotesize $\mathsf{research}$} (m);

        \draw[dotted, fill = none, thick, rounded corners=10pt]
            (-1.2, 1) rectangle (6.5, -3);
        \node (label)[draw=none, fill=none] at (5.5, -2.5) {\Large $G_{\textsf{train}}$};
    \end{scope}
    \begin{scope}[yshift=-3.4cm,scale=0.8]
        \node (pl)[label={[font=\small, xshift = 0em, yshift = -1.7em]90:$\mathsf{Micron Tech}$}] at (0,0) {};
        \node (ph)[label={[font=\small, xshift= 0em, yshift = -2.5em]90:$\mathsf{SemiConductors}$}] at (2.4,0) {};
        \node (s)[label={[font=\small, xshift = 0em, yshift = 2em]-90:$\mathsf{Tokyo Electron}$}] at (0,-2) {};
        \node (a)[label={[font=\small, xshift= 0em, yshift=0.2em]-90:$\mathsf{Intel}$}] at (2.4,-2) {};
        \node (m)[label={[font=\small, xshift = 0em, yshift=-0.5em]90:$\mathsf{CPU}$}] at (4.5,-1) {};
    
         \draw[->, color=cartoongreen!70] (s) to node[midway, above, color=cartoongreen!70, draw=none, fill=none, yshift=-2.1em] {\footnotesize $\mathsf{supply}$} (a);
        \draw[->, color=cartoongreen!70] (s) to node[midway, above, color=cartoongreen!70, draw=none, fill=none, xshift=0.9em, rotate=90] {\footnotesize $\mathsf{supply}$} (pl);
        \draw[->, color=cartoonyellow!70] (s) to node[midway, above, color=cartoonyellow!70, draw=none, fill=none, yshift=-2em, xshift=1em, rotate=40] {\footnotesize $\mathsf{produce}$} (ph);
        \draw[->, color=cartoonyellow!70] (pl) to node[midway, above, color=cartoonyellow!70, draw=none, fill=none, yshift=-2.3em] {\footnotesize $\mathsf{produce}$} (ph);
        \draw[->, dashed, color=cartoonyellow!70] (a) to node[midway, above, color=cartoonyellow!70, draw=none, fill=none, xshift=2.9em, rotate=90] {\footnotesize $\mathsf{produce?}$} (ph);
        \draw[->, color=cartoonyellow!70] (a) to node[midway, above, color=cartoonyellow!70, draw=none, fill=none, yshift=-2em, xshift=1em, rotate=25] {\footnotesize $\mathsf{produce}$} (m);

        \draw[dotted, fill = none, thick, rounded corners=10pt]
            (-1.2, 1) rectangle (6.5, -3);
        \node (label)[draw=none, fill=none] at (5.5, -2.5) {\Large $G_{\textsf{test}}$};
    \end{scope}
    \end{tikzpicture}
    \vspace{-1em}
    \caption{
    Training is over relations $\mathsf{provide}$ and $\mathsf{research}$, and testing is over structurally similar relations $\mathsf{supply}$ and $\mathsf{produce}$. }
    \label{fig:relational-hypergraph}
    \vspace{-1em}
\end{figure}

Knowledge Graph Foundation Models (KGFMs) gained significant attention~\citep{galkin2023ultra,mao2024positiongraphfoundationmodels} for their success in link prediction tasks involving \emph{unseen entities and relations} on knowledge graphs (KGs). 
These models aim to generalize across different KGs by effectively learning \emph{relation invariants}: properties of relations that are transferable across KGs of different relational vocabularies~\citep{gao2023double}. 
KGFMs learn representations of relations by relying on their structural roles in the KG, which can be transferred to novel relations based on their ``structurally similarities'' to the observed relations.

\begin{example}
Consider the two KGs from \Cref{fig:relational-hypergraph} which use disjoint sets of relations. Suppose that the model is trained on $G_\text{train}$ and the goal is to predict the missing link $\mathsf{produce}(\mathsf{Intel},\mathsf{SemiConductors})$  in $G_\text{test}$.
Ideally, the model will predict the link by learning relation invariants that map $\mathsf{produce} \mapsto \mathsf{research}$ and $\mathsf{supply} \mapsto \mathsf{provide}$, as the {\em structural role} of these relations are similar in the respective graphs, even if the relations {\em themselves} differ. \hfill \exend
\end{example}

The dominant approach~\citep{geng2022relationalmessagepassingfully,ingram,galkin2023ultra} for computing relation invariants is by learning relational embeddings over a \emph{graph of relations}. In this graph, each node represents a relation type from the original KG and each edge represents a connection between two relations. 
 \begin{wrapfigure}{r}{3.1cm}
 \vspace{-1.5em}
     \centering
 \[\begin{tikzcd}
 	\bullet & \bullet & \bullet
 	\arrow["\alpha", from=1-1, to=1-2]
 	\arrow["\beta", from=1-2, to=1-3]
 \end{tikzcd}\]
 \vspace{-2em}
 \caption{A simple motif of a path of length two.}
     \label{fig:path-of-length-2}
 \end{wrapfigure}
One prominent choice of these connections is to identify whether two relations in the original KG match a set of shared patterns known as \emph{graph motifs}: small, recurring subgraphs that capture essential relational structures within the KG. \Cref{fig:path-of-length-2} depicts a simple motif: a path of length two, connecting three entities via two relations (i.e., a \emph{binary motif}). 
For instance,  the relations $\mathsf{provide}$ and $\mathsf{research}$ are connected via the entity $\mathsf{Oxford}$ in \Cref{fig:relational-hypergraph}, and hence match this motif with the map $\alpha \mapsto \mathsf{provide}$, $\beta \mapsto \mathsf{research}$. 

We saliently argue that existing models limit themselves to binary motifs, which restricts their ability to capture complex interactions involving more relations. We investigate the expressive power of these models, specifically, how the choice of motifs impacts the model's ability to capture relation invariants and help distinguish between different relational structures. Prior works have extensively studied
relational message-passing neural networks on KGs~\citep{barcelo2022wlgorelational,huang2023theory}, but these results do \emph{not} apply to KGFMs, which is the focus of our work.

\textbf{Approach.} We introduce \underline{MOT}if-\underline{I}nduced \underline{F}ramework for KGFMs~($\mgnn$) based on learning relation invariants from arbitrary (not necessarily binary) graph motifs. $\mgnn$ first constructs a \emph{relational hypergraph}, where each node corresponds to a relation in the original KG, and each \emph{hyperedge} represents a match of relations to a motif in the original KG. Then, it performs message passing over the relational hypergraph to generate relation representations, which are used in message passing over the original KG to enable link prediction over novel relations. Importantly, $\mgnn$ is a general framework that strictly subsumes existing KGFMs such as $\ultra$~\citep{galkin2023ultra} and $\ingram$~\citep{ingram} as detailed in \Cref{sec: motif_captures_KGFMs}, and provides a principled approach for enhancing the expressive power of other KGFMs such as TRIX~\citep{zhang2024trix} and RMPI~\citep{geng2022relationalmessagepassingfully}.

\textbf{Contributions.} Our contributions can be summarized as follows:
\begin{itemize}[leftmargin=.7cm]
    \item We introduce $\mgnn$ as a general framework capable of integrating \emph{arbitrary} graph motifs into KGFMs, subsuming existing KGFMs such as $\ultra$ and $\ingram$.
    \item To the best of our knowledge, we provide the \emph{first} rigorous analysis on the expressive power of KGFMs. Our study explains the capabilities and limitations of existing KGFMs in capturing relational invariants.
    \item We identify a \emph{sufficient} condition under which a newly introduced motif enhances the expressiveness of a KGFM within the framework of $\mgnn$. Using this theoretical recipe, we derive a new KGFM that is provably more expressive than $\ultra$.
    \item Empirically, we conduct an extensive analysis on $54$ KGs from diverse domains and validate our theoretical results through synthetic and real-world experiments. 
\end{itemize}

All proofs of the technical results can be found in the appendix of the paper, along with the additional experiments.

\section{Related work}

\textbf{Transductive and inductive (on node) link prediction.} Link prediction on KGs has been extensively studied in the literature. Early approaches like TransE~\citep{brodes2013transe}, RotatE~\citep{sun2019rotate}, and BoxE~\citep{abbound2020boxe} focus on the \emph{transductive} setting, 
where the learned entity 
embeddings are fixed, and thus 
inapplicable to unseen 
entities at test time.  
Multi-relational GNNs such as RGCN~\citep{schlichtkrull2017modeling} and CompGCN~\citep{vashishth2020compositionbased} remain transductive as they store entity and relation embeddings as parameters. To overcome this limitation, \citet{grail2020teru} introduce GraIL, which enables 
\emph{inductive} link prediction via the {\em labeling trick}. 
NBFNet \citep{zhu2022neural},  
A*Net~\citep{zhu2023anet}, RED-GNN~\citep{zhang2022redgnn}, and AdaProp~\citep{adaprop}, 
provide improvements 
by leveraging conditional message-passing, which is provably more expressive~\citep{huang2023theory}.
These models, once trained, can only be applied to KGs with the same relational vocabulary, limiting their applicability
to graphs with unseen relations. 

\textbf{Inductive (on node and relation) link prediction.}
$\ingram$~\citep{ingram} was one of the first approaches to study inductive link prediction over both new nodes and unseen relations by constructing a weighted relation graph to learn new relation representations. \citet{galkin2023ultra} extended this idea with the $\ultra$ architecture, which constructs a multi-relational graph of fundamental relations and leverages conditional message passing to enhance performance. $\ultra$ was among the first KGFMs to inspire an entire field of research~\citep{mao2024positiongraphfoundationmodels}. Concurrently, RMPI~\citep{geng2022relationalmessagepassingfully} explored generating multi-relational graphs through local subgraph extraction while also incorporating ontological schema. 
\citet{gao2023double} introduced the concept of \emph{double-equivariant} GNNs, which establish invariants on nodes and relations by leveraging subgraph GNNs in the proposed ISDEA framework to enforce double equivariance precisely. MTDEA~\citep{zhou2023multitaskperspetivelinkprediction} 
expands this framework 
with an adaptation procedure for multi-task generalization.
Further, TRIX~\citep{zhang2024trix} expands on $\ultra$ with recursive updates of relation and entity embeddings, and is the first work to compare expressivity among KGFMs.
Finally, KG-ICL~\citep{cui2024prompt} introduced a new KGFM utilizing in-context learning with a unified tokenizer for entities and relations. 

\textbf{Link prediction on relational hypergraphs.} 
Relational hypergraphs 
are a generalization of KGs used to 
represent higher-arity relational data.
Work on link prediction in relational hypergraphs first focused on shallow embeddings
~\citep{wen2016representation, liu2020tensor, fatemi2020knowledge}, and later 
G-MPNN~\citep{yadati2020gmpnn} and RD-MPNNs~\citep{zhou2023rdmpnn} 
advanced by  
incorporating message passing.
Recently, \citet{huang2024link} conducted an in-depth expressivity study on these models and proposed $\hcnets$, extending conditional message-passing to relational hypergraphs and achieving strong results on inductive link prediction.

\section{Preliminaries}
\textbf{Knowledge graphs.} A \emph{knowledge graph} (KG) is a tuple ${G=(V,E,R)}$, where $V$ is a set of nodes, $R$ is the set of relation types and $E\subseteq  V \times R \times V$ is a set of labeled edges, or {\em facts}. We write $r(u,v)$ to denote a labeled edge, or a fact, where $r\in R$ and $u,v\in V$. 
A (potential) \emph{link} in $G$ is a triple $r(u,v)$ in $V \times R \times V$ that may or may not be a fact in $G$. The \emph{neighborhood} of node $v\in V$ relative to a relation $r \in R$ is $\mathcal{N}_r(v) := \{u \mid r(u,v) \in E\}$.  

\textbf{Relational hypergraphs.} A \emph{relational hypergraph} is a tuple $G = (V,E,R)$, where $V$ is a set of nodes, $R$ is a set of relation types, and $E$ a set of \emph{hyperedges} (or {\em facts}) of the form $e=r(u_1, \ldots, u_k) \in E$ where $r \in R$ is a relation type, $u_1, \ldots, u_k \in V$ are nodes, and $k=\mathtt{ar}(r)$ is the arity of the relation $r$. We write $\rho(e)$ as the relation type $r \in R$ of the hyperedge $e \in E$, and $e(i)$ to refer to the node in the $i$-th arity position of the hyperedge $e$.

\textbf{Homomorphisms.} 
A  (\emph{node-relation}) \emph{homomorphism} from a KG ${G=(V,E,R)}$ to a KG ${G'=(V',E',R')}$ is a pair of mappings $h = (\pi,\phi)$, where 
$\pi: V \to V'$ and $\phi: R \to R'$, such that for every fact  
$r(u,v)$ in $G$, the fact $\phi(r)(\pi(u),\pi(v))$ belongs to $G'$. 
The \emph{image} $h(G)$ of $h$ is the knowledge graph $(\pi(V),h(E),\phi(R))$, where $h(E)$ contains the fact $\phi(r)(\pi(u),\pi(v))$ for each $r(u,v)$ in $G$.

\textbf{Isomorphism, relation invariants, link invariants.} 
An \emph{isomorphism} from KG ${G=(V,E,R)}$ to KG ${G'=(V',E',R')}$ is a pair of bijections $(\pi,\phi)$, where
$\pi: V \to V'$ and $\phi: R \to R'$, such that every fact  
$r(u,v)$ is in $G$ if and only if the fact $\phi(r)(\pi(u),\pi(v))$ is in $G'$.
Two graphs are {\em isomorphic} if there is an isomorphism between them.
For $k\geq 1$, a \emph{$k$-ary relation invariant} is a function $\xi$ associating to each KG $G=(V,E,R)$ a function $\xi(G)$ with domain $R^k$ such that for every isomorphic KGs $G$ and $G'$, every isomorphism $(\pi,\phi)$, and every tuple $\bar{r}\in R^k$ of relations in $G$, we have $\xi(G)(\bar{r})=\xi(G')(\phi(\bar{r}))$. A \emph{link invariant} is a function $\omega$ assigning to each KG $G=(V,E,R)$ a function $\omega(G)$ with domain $V\times R\times V$ such that for every isomorphic KGs $G$ and $G'$, every isomorphism $(\pi,\phi)$, and every link $q(u,v)$ in $G$, we have $\omega(G)(q(u,v))=\omega(G')(\phi(q)(\pi(u),\pi(v)))$.

For instance, a unary relation invariant $\xi$ evaluated on $G_{\text{train}}$ and $G_{\text{test}}$ in \Cref{fig:relational-hypergraph} may satisfy $\xi(G_{\text{train}})(\mathsf{research}) = \xi(G_{\text{test}})(\mathsf{produce})$ if the two relations play analogous structural roles. Similarly, a link invariant $\omega$ may assign equal values to $\mathsf{research}(\mathsf{Oxford},\mathsf{Finance})$ in $G_{\text{train}}$ and $\mathsf{produce}(\mathsf{Intel},\mathsf{SemiConductors})$ in $G_{\text{test}}$ under a structure-preserving mapping.

\section{$\mgnn$: A general framework for KGFMs }
\label{sec: framework}

We present $\mgnn$, a very general framework for KGFMs. 
Given a KG $G = (V,E,R)$, $\mgnn$ computes an encoding for each potential link $q(u,v)$ in $G$
following three steps:

\begin{enumerate}[leftmargin=.5cm]
    \item  \lift: Use a set ${\cal F}$ of motifs to compute a relational hypergraph $\lift_{{\cal F}}(G) = (V_\lift,E_\lift,R_\lift)$, using a procedure called \lift.
    \item \textsc{Relation encoder}: Apply a relation encoder on the hypergraph $\lift_{{\cal F}}(G)$ to obtain relation representations. 
    \item  \textsc{Entity Encoder}:  Use the relation representations from Step 2 and apply an entity encoder on the KG $G$ to obtain final link encodings. 
\end{enumerate}

The overall process is illustrated in \Cref{fig:mgnn_arch} and we now detail each of the steps.

\begin{figure*}[ht!]
    \centering
    \vspace{-4em}
    \begin{tikzpicture}[
        every node/.style={circle, draw, fill=white, inner sep=2pt, line width=0.5pt}, 
        line width=1.2pt, 
        >=stealth,
        shorten >=3pt, 
        shorten <=3pt,
        transform shape,
        square/.style={regular polygon,regular polygon sides=4},
        triangle/.style={regular polygon,regular polygon sides=3},
    ]
    \definecolor{cartoonred}{RGB}{190,80,80}
    \definecolor{cartoonblue}{RGB}{80,80,190}
    \definecolor{cartoongreen}{RGB}{80,190,80}
    \definecolor{cartoonyellow}{RGB}{190,128,0}
    \definecolor{darkgreen}{RGB}{1, 50, 32}
    \begin{scope}[transform shape, scale=0.88]
        \node (A)[label={[font=\large]135:$u$}]  at (0.0, 0.0) {}; 
        \node (B) at (1.152, 0.64) {}; 
        \node (D) at (0.96, -0.64) {}; 
        \node (E) at (2.112, -0.768) {}; 
        \node (F)[label={[font=\large]45:$v$}] at (2.88, 0.192) {};

        \draw[->, bend left=15] (A) to node[midway, above, draw=none, fill=none] {$r_2$} (B);
        \draw[->, bend right=15] (B) to node[midway, right, draw=none, fill=none] {$r_3$} (D);
        \draw[->, bend right=5] (D) to node[midway, above, draw=none, fill=none] {$r_1$} (E);
        
        \draw[->,  bend left=15] (E) to node[midway, left, draw=none, fill=none] {$r_3$} (F);
        \draw[->, bend right=15] (F) to node[midway, above, draw=none, fill=none] {$r_4$} (B);

        \draw[->, dashed,  bend right=90, looseness=1.5] (A) to node[midway, below, draw=none, fill=none] {$r_1?$} (F);

        \node[draw=none, fill=none,inner sep=10pt,label={[yshift=0.1cm]center: Knowledge Graph $G$}] () at (1.5,1.7) {};

    \draw[dotted] (4,-2) -- (4,2); 

    \end{scope}
    \begin{scope}[transform shape, scale=0.80, xshift = 7.5cm, yshift = 0cm,line width=0.6pt,inner sep=0pt, shorten >=-0.5pt, 
        shorten <=-0.5pt]
        \node[draw=none, fill=none] (n0)  at (-1.8, 0) {$P_1:$};
        \node[draw=none] (n1)  at (-0.8, 0) {$\bullet$}; 
        \node[draw=none]  (n2) at (0.2, 0) {$\bullet$}; 
        \node[draw=none]  (n3) at (1.2, 0) {$\bullet$}; 

        \node[draw=none, fill=none]  (p0)  at (-1.8, -0.5) {$P_2:$}; 
        \node[draw=none]  (p1)  at (-1.3, -0.5) {$\bullet$}; 
        \node[draw=none]  (p2) at (-0.2, -0.5) {$\bullet$}; 
        \node[draw=none]  (p3) at (0.7, -0.5) {$\bullet$};
        \node[draw=none]  (p4) at (1.7, -0.5) {$\bullet$};

        \draw[->] (n1) to node[midway, above, draw=none, fill=none] {$\alpha$}(n2);
        \draw[->] (n2) to node[midway, above, draw=none, fill=none] {$\beta$}(n3);

        \draw[->] (p1) to node[midway, below, draw=none, fill=none] {$\alpha$}(p2);
        \draw[->] (p2) to node[midway, below, draw=none, fill=none] {$\beta$}(p3);
        \draw[->] (p3) to node[midway, below, draw=none, fill=none] {$\gamma$}(p4);

        \draw[dotted, fill = none, thick, rounded corners=10pt]
            (-2.4, 0.5) rectangle (2, -1);

        \node (caption)[scale=1.15,draw=none, fill=none] at (-0.3, 2) {1. $\lift$ $G$ with motif set $\gF$};

    \end{scope}
    \begin{scope}[transform shape, scale=0.88, xshift = 11cm, yshift = -0.3cm,line width=0.6pt]
        \node (B)[square,color = cartoonblue, label={[font=\large]45:$r_1$}]  at (0, 1) {}; 
        \node (R)[square,color = cartoonred,label={[font=\large]135:$r_2$}]  at (-1, 0) {}; 
        \node (G)[square,color = cartoongreen,label={[font=\large]45:$r_3$}]  at (1, 0) {}; 
        \node (Y)[square,color = cartoonyellow,label={[font=\large]-135:$r_4$}]  at (0, -1) {}; 
        
        \draw[->] (R) to node[midway, above, draw=none, fill=none,yshift=-0.3em] {\scriptsize $P_1$}(G);
        \draw[<->] (B) to node[midway, above, draw=none, fill=none,yshift=-0.3em,xshift=0.3em] {\scriptsize $P_1$} (G);
        \draw[<->] (Y) to node[midway, right, draw=none, fill=none,yshift=-0.3em] {\scriptsize $P_1$} (G);

        \draw[draw = none, fill = gray, fill opacity=0.2, thick, rounded corners=10pt] 
            ($(B)+(-0.1,0.3)$) -- 
            ($(Y)+(-0.1,-0.3)$) -- 
            ($(G)+(0.3,0)$) -- cycle;

        \draw[draw = none, fill = gray, fill opacity=0.2, thick, rounded corners=10pt] 
            ($(B)+(0,0.3)$) -- 
            ($(R)+(-0.3,-0.1)$) -- 
            ($(G)+(0.3,-0.1)$) -- cycle;

        \node[draw=none, fill=none] (n0)  at (-0.8, 0.7) {\scriptsize \textcolor{gray}{$P_2$}};
        \node[draw=none, fill=none] (n1)  at (-0.3, -0.5) {\scriptsize \textcolor{gray}{$P_2$}};

        \node[draw=none, fill=none, inner sep=10pt, align=center] 
    at (0,2) 
    {2. \textsc{Relation Encoder} \\ over $\lift_{\cal F}(G)$};

    \end{scope}
    \begin{scope}[transform shape, scale=0.88, xshift = 14cm]
        \node (A)[label={[font=\large]135:$u$},fill=cartoonblue]  at (0.0, 0.0) {};
        \node[fill=yellow] (B) at (1.152, 0.64) {}; 
     
        \node[fill=yellow] (D) at (0.96, -0.64) {}; 
        \node[fill=yellow] (E) at (2.112, -0.768) {}; 
        \node[fill=yellow,double] (F)[label={[font=\large]45:$v$}] at (2.88, 0.192) {};

        \draw[->, color=cartoonred, bend left=15] (A) to  (B);

        \draw[->, color=cartoongreen, bend right=15] (B) to (D);
        \draw[->, color=cartoonblue, bend right=5] (D) to  (E);
        
        \draw[->, color=cartoongreen, bend left=15] (E) to  (F);

        \draw[->, color=cartoonyellow, bend right=15] (F) to (B);
       
        \node[draw=none, fill=none, inner sep=10pt, align=center] 
    at (1.5,1.7) 
    {3. \textsc{Entity Encoder} \\ over $G$};

        \draw[->, dashed,color=cartoonblue,  bend right=90, looseness=1.5] (A) to node[midway, below, draw=none, fill=none] {$r_1?$} (F);
    \end{scope}

    \node[draw=none,fill=none,inner sep=10pt](G1) at (2.5,0.5){};
    \node[draw=none,fill=none,inner sep=10pt](G21) at (8.8,0.5){};

    \draw[->, black, bend left=10] (G1) to (G21);

    \end{tikzpicture}
    \vspace{-1em}
    \caption{Overall framework of $\mgnn({\cal F})$ over the motif set ${\cal F}$: Given a query $r_1(u,v)$, $\mgnn({\cal F})$ first applies the $\lift_{{\cal F}}$ operation to generate the relational hypergraph $(V_\lift,E_\lift,R_\lift)$ over the set of motifs ${\cal F}$. Then a relation encoder is applied to generate a relation representation (indicated by color) conditioned on query $r_1$, followed by an entity encoder that conducts conditional message passing. }
    \label{fig:mgnn_arch}
\end{figure*}

\subsection{\lift}

A graph \emph{motif} is a pair $P = (G_M,\bar r)$, where $G_M = (V_M,E_M,R_M)$ is a connected KG and $\bar r$ is a tuple defining an order on the relation set $R_M$. 
Specifically, \emph{k-ary motif} is a motif with a motif graph containing exactly $k$ relation types, i.e., $|R_M| = k$. When $k=2$, we call it \emph{binary motif}.
The information extracted by motifs is defined via homomorphism.

\begin{definition} 
Let $G$ be a KG and $P = (G_M,\bar r)$ be a motif with $\bar r = (r_1,\dots,r_n)$. 
The \emph{evaluation} $\eval(P,G)$ of $P$ over $G$ is the set of tuples 
$(\phi(r_1),\dots,\phi(r_n))$, for each homomorphism $h = (\pi,\phi)$ from $G_M$ to $G$. \qed 
\end{definition}

\textbf{The $\lift$ operation.}  
Given a KG $G = (V, E, R)$ and a set of motifs ${\cal F}$, the operation $\lift_{\cal F}(G)$ constructs a \emph{relational hypergraph} $(V_\lift, E_\lift, R_\lift)$ as follows:
\begin{itemize}[leftmargin=1.5em]
    \item The node set is $V_\lift = R$, where each node corresponds to a relation type from the original KG.
    \item The relation set is $R_\lift = {\cal F}$, where each motif $P \in {\cal F}$ is treated as a distinct relation type in the relational hypergraph.
    \item The hyperedge set $E_\lift$ is defined as
    \[
        \left\{ P(r_n, \dots, r_n) \;\middle|\; P \in {\cal F}, (r_n, \dots, r_n) \in \eval(P, G) \right\}
    \]
    where $\eval(P, G)$ denotes the set of all $k$-tuples of relations in $G$ that match the motif $P$.
\end{itemize}

\subsection{Relation encoder}

A {\em relation encoder} is a tuple $(\gF, \enc_\lift)$, where $\gF$ is a set of motifs and $\enc_\lift$ computes relation representations for the set $R = V_\lift$ of nodes of $\lift_\gF(G) = (V_\lift,E_\lift,R_\lift)$. Given that our aim is to encode links of the form $q(u,v)$, the encoding computed by $\enc_\lift$ for relation $r \in R$ is \emph{conditioned} on the relation $q\in R$. These encodings are determined by a sequence of features $\vh^{(t)}_{r|q} \in \mathbb{R}^{d(t)}$, for $0\leq t\leq T$, defined iteratively as follows: 
\begin{align*}
    \vh_{r|q}^{(0)} = \init_1(q,r), \quad
    \vh_{r|q}^{(t+1)}  =\update_1 \big( \vh_{r|q}^{(t)},  \aggregate_1(M) \big),
\end{align*}
where $\init_1$, $\update_1$, $\aggregate_1$ are differentiable \emph{initialization}, \emph{update}, and \emph{aggregation} functions, respectively, and: 
\begin{multline*} 
M = \llbrace \mes_{\rho(e)} \big(\{(\vh_{r'|q}^{(t)}, j ) \mid \\ 
(r',j) \in \gN^i(e) \} \big) \mid (e,i) \in E_{\lift}(r) \rrbrace. 
\end{multline*} 
In this setting, $\mes_{\rho(e)}$ is a motif-specific \emph{message} function and 
we further define 
$E_{\lift}(r) = 
    \left\{
    (e,i) \mid  e(i)=r, e \in E_\lift, 1 \leq i \leq \mathtt{ar}(\rho(e))
    \right\}$ 
    as the set of edge-position pairs of a node $r$, and $\gN^{i}{(e)}$ as the \emph{positional neighborhood} of a hyperedge $e$ with respect to a position $i$: $\gN^{i}(e)=
    \left\{
    (e(j),j) \mid  j \neq i,  1 \leq j \leq \mathtt{ar}(\rho(e))
    \right\}$. 

We use $\enc_{\lift,q}[r]$ to denote the final relation encoding of a relation $r$ in $G$ when conditioned on $q$, that is, the last layer vector $\vh_{r|q}^{(T)}\in \mathbb{R}^{d(T)}$.  

\textbf{Remark 1}. Observe that the relation encoder computes binary relation invariants provided $\init_1$ is an invariant.

\subsection{Entity encoder}
$\mgnn$ uses an entity encoder to compute a link-level representation of a KG, regardless of its relation vocabulary. 
This works as follows. $\mgnn$ is defined as a tuple $(\enc_{{\rm KG}},(\gF,\enc_\lift))$, where $(\gF,\enc_\lift)$ is a relation encoder and $\enc_{{\rm KG}}$ is an entity encoder that computes node representations over KGs of the form 
$G = (V,E,R)$, 
conditioned on a node $u \in V$ and the relation $q$, according to the following iterative scheme:
\begin{align*}
\vh_{v|u,q}^{(0)} = \init_2(u,v,q), \quad
\vh_{v|u,q}^{(\ell+1)} = \update_2\big(\vh_{v|u,q}^{(\ell)}, \aggregate_2(N)\big), 
\end{align*} where $\init_2$, $\update_2$, $\aggregate_2$, and $\mes$ are differentiable \emph{initialization}, \emph{update}, \emph{aggregation} and \emph{message} functions, respectively, 
$v$ is an arbitrary node in $V$, and:
$$
N = \{\!\! \{ \mes(\vh_{w|u,q}^{(\ell)},\enc_{\lift,q}[r])  \mid w \in \mathcal{N}_r(v), r \in R \}\!\!\}.
$$

We use $\enc_{{\rm KG},u,q}[v]$ to denote the encoding of $v$, conditioned on  relation $q$ and source node $u$, which corresponds to $\vh_{v|u,q}^{(L)}\in \mathbb{R}^{d(L)}$ ($L$ being the number of layers).  
Finally, a unary decoder $\dec:\sR^{d(L)} \mapsto [0,1]$ is applied on $\enc_{{\rm KG},q,u}[v]$ to obtain the probability score for the existence of the potential link $q(u,v)$.

\textbf{Remark 2}. Observe that the entity encoder computes link invariants provided $\init_2$ is an invariant. 

\section{KGFMs captured by $\mgnn$}
\label{sec: motif_captures_KGFMs}

We revisit the $\ultra$ architecture~\citep{galkin2023ultra} (See details in \Cref{app:ULTRA_def}). The four fundamental relations used in $\ultra$ can be interpreted as the graph motifs illustrated in Figure \ref{fig:motifs-in-ULTRA}. Hence, any $\ultra$ architecture can be represented as a $\mgnn$ architecture, with $\enc_\lift$ and $\enc_\text{KG}$ being specific variants of NBFNets~\citep{zhu2022neural}. 
\begin{figure}[t]
\centering
\begin{tikzpicture}[
    every node/.style={circle, draw, fill=black, inner sep=1.5pt}, 
    >=stealth,
    shorten >=1.5pt, 
    shorten <=1.5pt,
    line width=0.8pt
]
    \node[fill=none,draw=none] (name1) at (-2.5,0) {\emph{h2t}:};
    \node (A) at (0, 0) {};
    \node (B) at (1.5, 0) {};
    \node (C) at (3, 0) {};

    \node[draw=none, fill=none, anchor=east] at ($(A) - (0.5, 0)$) {\((\alpha, \beta)\)};

    \draw[->] (A) to node[midway, above, draw=none, fill=none] {\(\alpha\)} (B);
    \draw[->] (B) to node[midway, above, draw=none, fill=none] {\(\beta\)} (C);
    
    \node[fill=none,draw=none] (name2) at (-2.5,-0.8) {\emph{t2h}:};
    \node (D) at (0, -0.8) {};
    \node (E) at (1.5, -0.8) {};
    \node (F) at (3, -0.8) {};

    \node[draw=none, fill=none, anchor=east] at ($(D) - (0.5, 0)$) {\((\alpha, \beta)\)};
    
    \draw[->] (E) to node[midway, above, draw=none, fill=none] {\(\alpha\)} (D);
    \draw[->] (F) to node[midway, above, draw=none, fill=none] {\(\beta\)} (E);

    \node[fill=none,draw=none] (name3) at (-2.5,-1.6) {\emph{t2t}:};
    \node (G) at (0, -1.6) {};
    \node (H) at (1.5, -1.6) {};
    \node (I) at (3, -1.6) {};

    \node[draw=none, fill=none, anchor=east] at ($(G) - (0.5, 0)$) {\((\alpha, \beta)\)};
    
    \draw[->] (G) to node[midway, above, draw=none, fill=none] {\(\alpha\)} (H);
    \draw[->] (I) to node[midway, above, draw=none, fill=none] {\(\beta\)} (H);

    \node[fill=none,draw=none] (name4) at (-2.5,-2.4) {\emph{h2h}:};
    \node (J) at (0, -2.4) {};
    \node (K) at (1.5, -2.4) {};
    \node (L) at (3, -2.4) {};

    \node[draw=none, fill=none, anchor=east] at ($(J) - (0.5, 0)$) {\((\alpha, \beta)\)};
    
    \draw[->] (K) to node[midway, above, draw=none, fill=none] {\(\alpha\)} (J);
    \draw[->] (K) to node[midway, above, draw=none, fill=none] {\(\beta\)} (L);
\end{tikzpicture}
 \vspace{-1em}
\caption{The 4 motifs used by $\ultra$ are shown here. $\textit{h2t}$ represents ``head-to-tail", with the others following similarly. 
Note that $\textit{h2t}$ and $\textit{t2h}$ are isomorphic but with different relation ordering.
}
\label{fig:motifs-in-ULTRA}
\end{figure}

$\mgnn$ can also represent a slight variant of the $\ingram$ architecture~\citep{ingram} when we replace the constructed weighted relation graph with a KG. Here the construction of $\lift$ uses a graph motif set $\gF = \{h2h, t2t\}$, with $\enc_\lift$ and $\enc_\text{KG}$ being variants of GATv2~\citep{brody2022attentivegraphattentionnetworks}, and replace the random initialization in $\init_1$ by an invariant function. Note that the definition of $\mgnn$ allows for unconditional message passing~\citep{huang2023theory} when both $\init_1$ and $\init_2$ are agnostic to the query $q$. 

We also note that, while other KGFMs like RMPI~\citep{geng2022relationalmessagepassingfully} and TRIX\footnote{See \Cref{app:trix} for a detailed comparison of the expressive powers of TRIX and $\mgnn$.}~\citep{zhang2024trix} do not technically fall under the framework of $\mgnn$, these models do rely on a $\lift$ operation to construct a graph of relations based on a predefined set of motifs, 
followed by relation and entity encoders. 
In particular, RMPI constructs an alternative relation graph inspired by the line graph, incorporating the motifs used in $\ultra$ along with two additional motifs, $\textsf{PARA}$ and $\textsf{LOOP}$. In turn, TRIX considers the same set of motifs as $\ultra$ but employs alternating updates on entity and relation representations.

\section{Expressive power}
\label{sec: expressive_power}

$\mgnn$ computes encodings for links in KGs. In this section, we aim to understand which links $\mgnn$ can separate and whether equipping $\mgnn$ with more graph motifs increases its separation power. Hence, our focus is on the potential power obtained when using a particular set ${\cal F}$ of motifs. To that extent, we define $\mgnn({\cal F})$ as the set of all $\mgnn$ instances that use a relation encoder built over ${\cal F}$, that is, the set of all $\mgnn$ instances of the form $(\enc_{\text{KG}},({\cal F},\enc_\lift))$.

\begin{definition} 
\label{def-refines}
Let $\F$ and $\F'$ be sets of graph motifs.  
Then $\F$ \emph{refines} $\F'$, denoted $\F \preceq \F'$, if for every KG $G$ and potential links $q(u,v)$, $q'(u',v')$ over $G$, whenever every $\mgnn$ instance in $\mgnn(\F)$ encodes $q(u,v)$ and $q'(u',v')$ in the same way, then every $\mgnn$ instance in $\mgnn(\F')$ also encodes $q(u,v)$ and $q'(u',v')$ in the same way. Furthermore, $\F$ and $\F'$ are said to have the same {\em separation power} if both $\F \preceq \F'$ and $\F' \preceq \F$ hold. We denote this as $\F \sim \F'$.
\qed
\end{definition}

Definition \ref{def-refines} captures the idea of the potential to separate, or distinguish, pairs of links in a KG using different sets of graph motifs. Specifically, whenever $\F \preceq \F'$, we know that using $\F'$ will not result in architectures that can separate links that cannot be separated by any $\mgnn$ instance in $\mgnn(\F)$. On the other hand, if $\F \sim \F'$, then we know that $\F$ and $\F'$ can be used interchangeably without altering the potential for separating links in our architectures. 

We start by observing a simple fact: isomorphic graph motifs do not make a difference in terms of 
separation power.

\begin{proposition} 
\label{prop:same_expressive}
Let
$P = (G_M,\bar r)$ and  $P' = (G_M',\bar r')$ be two graph motifs such that $G_M$ is isomorphic to $G_M'$. Then, for any set $\F$ of graph motifs:  
$$(\F \cup\{P\}) \, \sim \, (\F \cup\{P'\}) \, \sim \, (\F \cup\{P,P'\}).$$
\end{proposition}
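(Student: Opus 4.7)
The plan is to reduce both equivalences to explicit simulations between $\mgnn$ instances, exploiting the fact that the isomorphism between $G_M$ and $G_M'$ transports information between the two motifs in a reversible way. First I would fix an isomorphism $\tau=(\mu,\nu)\colon G_M\to G_M'$ and let $\sigma$ be the permutation on $\{1,\dots,n\}$ defined by $\nu(r_{\sigma(i)})=r_i'$. Every homomorphism $h=(\pi,\phi)\colon G_M\to G$ then corresponds bijectively to $h'=h\circ\tau^{-1}\colon G_M'\to G$, and consequently $\eval(P,G)$ and $\eval(P',G)$ are in bijection: each tuple $(a_1,\dots,a_n)\in\eval(P,G)$ corresponds to $(a_{\sigma(1)},\dots,a_{\sigma(n)})\in\eval(P',G)$. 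This induces, for each relation $r\in R$, a bijection $(e,i)\leftrightarrow(e',\sigma^{-1}(i))$ between the edge--position pairs at $r$ in $\lift_{\F\cup\{P\}}(G)$ and $\lift_{\F\cup\{P'\}}(G)$, under which the positional neighborhoods differ only by re-indexing positions via $\sigma^{-1}$.

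For the equivalence $\F\cup\{P\}\sim\F\cup\{P'\}$, given any instance $\mathcal M\in\mgnn(\F\cup\{P\})$ with message function $\mes_P$, I would construct $\mathcal M'\in\mgnn(\F\cup\{P'\})$ using the same $\init_1$, $\update_1$, $\aggregate_1$ and $\mes_Q$ for $Q\in\F$, but defining $\mes_{P'}(S):=\mes_P(\{(h,\sigma(j)):(h,j)\in S\})$. A layer-wise induction using the bijection above then shows that $\mathcal M$ and $\mathcal M'$ produce identical relation encodings on every KG, hence identical final link encodings when combined with a common entity encoder. The reverse simulation is symmetric, yielding $\F\cup\{P\}\sim\F\cup\{P'\}$.

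The equivalence $\F\cup\{P\}\sim\F\cup\{P,P'\}$ splits into two directions. The direction $\F\cup\{P,P'\}\preceq\F\cup\{P\}$ is routine: any $\mathcal M\in\mgnn(\F\cup\{P\})$ can be lifted to $\mgnn(\F\cup\{P,P'\})$ by setting $\mes_{P'}$ to output the neutral element of $\aggregate_1$ (e.g.\ $0$ under summation), so $P'$-hyperedges contribute nothing. For the reverse direction, given $\mathcal M\in\mgnn(\F\cup\{P,P'\})$ with messages $\mes_P,\mes_{P'}$, I would construct $\mathcal M'\in\mgnn(\F\cup\{P\})$ with doubled hidden dimension by setting $\mes_P^{\mathrm{new}}(S):=\bigl(\mes_P(S),\,\mes_{P'}(\{(h,\sigma^{-1}(j)):(h,j)\in S\})\bigr)$ and padding each $\mes_Q$ ($Q\in\F$) with zero on the second coordinate. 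Under a sum-like $\aggregate_1$, the two coordinates separately accumulate the contributions that in $\mathcal M$ came from $P$- and $P'$-hyperedges, so redefining $\update_1$ to first add the two coordinate blocks and then apply the original $\update_1$ recovers the original computation exactly.

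The hard part will be the non-trivial direction of the second equivalence, since I must simulate both $P$- and $P'$-messages from $P$-hyperedges alone. This requires a coordinated redefinition of $\mes_P$, $\aggregate_1$ and $\update_1$, and relies on the flexibility in choosing the differentiable components of an $\mgnn$ instance --- specifically that the aggregator can be a coordinate-wise sum-like function and that the update can route and recombine coordinate blocks (both easily realized by MLP-based implementations). With these standard assumptions, the bijection from the setup guarantees that the simulation is exact, and combining the two equivalences proves Proposition~\ref{prop:same_expressive}.
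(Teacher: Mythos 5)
Your proof takes a genuinely different route from the paper's. The paper first establishes (Propositions~\ref{app:upperbound-test} and~\ref{app:lowerbound-test}, Corollary~\ref{app:coro-test}) that the separation power of $\mgnn(\F)$ is exactly characterized by a two-stage WL-style coloring, and then proves the proposition entirely at the level of that test: the isomorphism between $G_M$ and $G_M'$ yields a bijection between the elements of the multisets fed to the injective $\hash$, and injectivity makes the argument indifferent to how a concrete model aggregates. You instead argue by direct model-to-model simulation. Your first equivalence, $\F\cup\{P\}\sim\F\cup\{P'\}$, is correct and clean as stated: the simulating instance keeps $\init_1$, $\update_1$, $\aggregate_1$ unchanged and only pre-composes $\mes_{P'}$ with the position permutation $\sigma$, so the two aggregated multisets are literally equal and the argument goes through for \emph{arbitrary} aggregation functions.

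The gap is in the second equivalence. Definition~\ref{def-refines} quantifies over all instances, and the framework allows $\aggregate_1$ to be an arbitrary differentiable multiset function, but both of your constructions only simulate instances whose aggregation is sum-like. In the ``routine'' direction, inserting extra $\mes_{P'}$ messages equal to a ``neutral element'' still changes the multiset handed to $\aggregate_1$ (a mean, a max over a different support, or a generic DeepSets-style aggregator has no neutral element), and in the hard direction the step ``add the two coordinate blocks and then apply the original $\update_1$'' recovers $\aggregate_1(M)$ only when $\aggregate_1$ is a sum; for a general aggregator of the instance being simulated this step simply fails. The fix is not deep but is needed: have each new message carry a tag identifying the motif that produced it (and, for $P$-hyperedges, the pair of $P$- and $P'$-messages), and let the new aggregator first reconstruct the original multiset $M$ from the tagged multiset before applying the original $\aggregate_1$ --- or, as the paper does, pass to the WL characterization first so that only injective components ever need to be reasoned about.
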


Returning to Figure \ref{fig:motifs-in-ULTRA}, it is easy to see that the motifs \emph{h2t} and \emph{t2h} are isomorphic. Hence, $\F = \{\textit{h2t}, \textit{t2h}, \textit{h2h}, \textit{t2t}\}$ has the same separation power as $\{\textit{h2t}, \textit{h2h}, \textit{t2t}\}$ or $\{\textit{t2h}, \textit{h2h}, \textit{t2t}\}$. But although these sets of motifs have the same separation power, this does not imply that the $\ultra$ architecture can drop either \textit{h2t} or \textit{t2h}. Indeed, while the set of links distinguished by any instance in $\mgnn(\F)$ is the same as those in $\mgnn(\{\textit{h2t}, \textit{h2h}, \textit{t2t}\})$ or $\mgnn(\{\textit{t2h}, \textit{h2h}, \textit{t2t}\})$, the $\ultra$ framework is a strict subset of $\mgnn(\F)$, so we cannot directly transfer these results to $\ultra$ framework. From a practical standpoint, the difference lies in the support for inverse relations: $\ultra$ bases its relation encoding on NBFNets~\citep{zhu2022neural}, while $\mgnn$ uses $\hcnets$~\citep{huang2024link}, which supports inverse relations by design (see  \Cref{app: C-MPNNs,app:ULTRA_def}). 

\subsection{When a new motif leads to more links separated?}

As a tool to assert when adding a new motif $P$ increases the number of links that can be separated using a set of graph motifs $\cal F$, we provide a necessary condition for the refinement of sets of motifs $\gF$ and $\gF'$. We need some terminology. 
A homomorphism $h = (\pi,\phi)$ from a KG $G = (V,E,R)$ to a KG $G' = (V',E',R')$ is \emph{relation-preserving} if $R \subseteq R'$ and
$\phi(R) = R$. Further, a graph $H$ is a \emph{relation-preserving core} if every relation-preserving homomorphism $h$ from $H$ to $H$ is \emph{onto}, that is, $h(H)=H$. It is possible to show the following: 

\begin{proposition} 
\label{prop:rp-core}
For every KG $G$, up to isomorphism, 
there is a unique KG $H$ such that: 
\begin{itemize} 
\item 
$H$ is a relation-preserving core, and
\item 
there are relation-preserving homomorphisms from $G$ to $H$ and from $H$ to $G$. 
\end{itemize} 
\end{proposition}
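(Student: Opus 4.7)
The plan is to follow the classical existence-and-uniqueness argument for cores of relational structures, adapted to the relation-preserving setting. A preliminary observation streamlines the whole proof: if relation-preserving homomorphisms exist in both directions between $G$ and $H$, then $R \subseteq R_H$ and $R_H \subseteq R$, so $R_H = R$. Thus every candidate $H$ in the statement shares its relation set with $G$, and its relation-preserving endomorphisms are simply vertex-edge homomorphisms whose relation component permutes $R$.

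For existence, I would pick $H$ of minimum size (say, $|V_H| + |E_H|$) among all KGs admitting mutual relation-preserving homomorphisms with $G$; the collection is nonempty since $G$ itself qualifies. To argue that any such minimum $H$ is a relation-preserving core, suppose for contradiction there is a relation-preserving endomorphism $h: H \to H$ with $h(H) \subsetneq H$. Composing, I obtain relation-preserving homomorphisms $G \to H \to h(H)$ and $h(H) \hookrightarrow H \to G$, witnessing mutual relation-preserving homomorphisms between $G$ and the strictly smaller KG $h(H)$, which contradicts minimality.

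For uniqueness, let $H_1$ and $H_2$ be two relation-preserving cores, each admitting mutual relation-preserving homomorphisms with $G$. Composing through $G$, I obtain relation-preserving homomorphisms $f_1: H_1 \to H_2$ and $f_2: H_2 \to H_1$. Then $f_2 \circ f_1$ is a relation-preserving endomorphism of $H_1$, hence onto by the core property, and similarly for $f_1 \circ f_2$. Since $H_1$ and $H_2$ are finite, these surjective endomorphisms are bijective on each of the three components (nodes, edges, relations), and this forces $f_1$ and $f_2$ themselves to be bijective on every component; in particular $|V_{H_1}| = |V_{H_2}|$ and $|E_{H_1}| = |E_{H_2}|$. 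The remaining step is to upgrade $f_1$ from a bijective homomorphism to an isomorphism: since $f_1$ is bijective on edges, every fact of $H_2$ has a unique preimage fact in $H_1$, and applying the componentwise inverse of $f_1$ recovers exactly this preimage, showing the inverse is itself a homomorphism.

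The main obstacle I anticipate is this last upgrade step, namely tracing surjectivity of the composed endomorphisms (given by the core property) down to bijectivity of the individual maps $f_1$ and $f_2$ on all three components simultaneously, and then verifying that the inverse of a bijective relation-preserving homomorphism between finite KGs is again a homomorphism. The argument is standard for cores, but the extra book-keeping imposed by the relation-preserving condition, in particular the interaction between the node bijection and the relation permutation, needs to be handled carefully.
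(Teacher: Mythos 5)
Your uniqueness argument is essentially the paper's: compose the mutual relation-preserving homomorphisms through $G$ to get $f_1\colon H_1\to H_2$ and $f_2\colon H_2\to H_1$, observe that $f_2\circ f_1$ and $f_1\circ f_2$ are onto endomorphisms by the core property, hence automorphisms of the finite structures, and conclude that $f_1$ and $f_2$ are bijective. You go further than the paper in two respects, both to your credit. First, the paper's proof only establishes uniqueness and silently omits existence; your minimality argument (take $H$ minimizing $|V_H|+|E_H|$ among KGs hom-equivalent to $G$ under relation-preserving homomorphisms, and note that a non-onto endomorphism would produce a strictly smaller hom-equivalent image) fills that gap, and your preliminary observation that $R_H=R$ for every candidate keeps the book-keeping under control. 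Second, the paper jumps from ``$f$ and $g$ are both onto'' to ``$H_1$ and $H_2$ are isomorphic,'' whereas you correctly identify the remaining step: one must check that a bijective relation-preserving homomorphism between these finite KGs has a homomorphic inverse, which follows because injectivity on nodes and relations forces $|E_{H_1}|\le|E_{H_2}|$ and symmetrically, so $f_1$ is a bijection on facts and cannot create new ones. Your proof is correct and, if anything, more complete than the one in the paper.
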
 

The KG $H$ is called the \emph{relation-preserving core of $G$}.

Our condition is laid in terms of a specific notion of homomorphism that we call \emph{core-onto} homomorphisms. Formally, a homomorphism $h = (\pi,\phi)$ from $G$ to $G'$ is \textit{core-onto} if the KG 
$h(G)$ is isomorphic to the relation-preserving core of $G'$. If such a homomorphism exists, we write $G \conto G'$.
Let us also say that a motif is \emph{trivial} if its relation-preserving core is isomorphic to the graph with a single fact $r(u,v)$. We are now ready to state the main result of this section. 

\begin{theorem}
\label{thm: theo-necesary-condition}
Let ${\cal F}, {\cal F}'$ be two sets of motifs, and assume that 
$\F \preceq \F'$. Then, for every non-trivial motif $P'\in \gF'$ there is a motif $P \in \gF$ such that $P \conto P'$. 
\end{theorem}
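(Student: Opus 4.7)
The plan is to prove the contrapositive: assume there exists a non-trivial motif $P' \in \F'$ such that no $P \in \F$ satisfies $P \conto P'$, and construct a KG $G$ together with two potential links that are separated by some $\mgnn(\F')$ instance but by no $\mgnn(\F)$ instance; this witnesses $\F \not\preceq \F'$.

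The natural base for the construction is $H'$, the (unique up to isomorphism) relation-preserving core of the underlying KG $G'_M$ of $P'$, guaranteed by Proposition \ref{prop:rp-core}. The first technical step is a lemma stating that for every $P \in \F$, no homomorphism from the underlying graph of $P$ into $G'_M$ has image isomorphic to $H'$: indeed, any such homomorphism would be core-onto by definition of $\conto$, contradicting the standing assumption $P \not\conto P'$. Intuitively, motifs in $\F$ can only match proper parts of $H'$ inside $G'_M$, whereas $P'$ matches the full rp-core via the canonical map $G'_M \to H'$.

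With this lemma, the next step is to design $G$ (extending $G'_M$ if necessary) so that two designated relations $r_1, r_2$, together with a corresponding pair of potential links, satisfy: (i) some $P'$-hyperedge in $\lift_{\F'}(G)$ involves $r_1$ in a position no $P'$-hyperedge involves $r_2$; yet (ii) for every $P \in \F$, the positional hyperedge neighborhoods of $r_1$ and $r_2$ in $\lift_{\F}(G)$ are isomorphic, because every $\F$-match in $G$ factors through a proper subgraph of $H'$ where $r_1$ and $r_2$ play symmetric roles. A candidate design is to glue $G'_M$ with a symmetrization gadget that forces $\F$-matches to respect an $(r_1 \leftrightarrow r_2)$-swap automorphism while leaving the critical $P'$-hyperedge asymmetric. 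The non-triviality of $P'$ (i.e., its rp-core is not a single fact) is exactly what guarantees that such a pair $(r_1, r_2)$ with an asymmetric $P'$-position exists.

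The argument then concludes via a standard WL-style characterization adapted to relational hypergraphs: two links are separated by some $\mgnn(\F)$ instance iff they receive different colors under the stable color refinement on $\lift_\F(G)$. The symmetry enforced in (ii) ensures $r_1, r_2$ receive the same color at every refinement step, so no $\mgnn(\F)$ instance separates the two test links; the asymmetry in (i) is then realized by a $\mgnn(\F')$ instance with sufficiently injective $\init_1, \update_1, \aggregate_1, \mes$ in the relation encoder and matching injectivity in the entity encoder. The main obstacle is making the gadget in the previous paragraph work \emph{uniformly} across every $P \in \F$ without simultaneously breaking the $P'$-asymmetry; for arbitrary $\F$ this likely requires an iterated amalgamation whose symmetry properties demand a careful case analysis based on the structure of each motif in $\F$, and this is where the bulk of the technical work would lie.
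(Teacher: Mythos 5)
Your high-level strategy is the same as the paper's: prove the contrapositive, reduce separation power of $\mgnn(\F)$ to a WL-style color refinement on $\lift_\F(G)$, and build a graph in which two relations have indistinguishable $\F$-neighborhoods but distinguishable $\F'$-neighborhoods, using the relation-preserving core of $P'$ and its non-triviality. However, the entire content of the theorem lives in the construction of $G$, and you explicitly leave that step open (``this is where the bulk of the technical work would lie''), so the proposal has a genuine gap rather than a complete argument. Worse, the specific gadget you sketch cannot work as stated: if the $(r_1\leftrightarrow r_2)$ swap is an automorphism of $G$ itself, then it induces an isomorphism of $\lift_{\F'}(G)$ as well (the lift of any motif set is invariant under isomorphisms of $G$), so it would also force the two $P'$-neighborhoods to be isomorphic and no $\mgnn(\F')$ instance could separate the two links either. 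Any correct construction must make the symmetry hold only at the level of $\lift_\F(G)$, not at the level of $G$.

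The paper resolves this as follows. First it isolates an easy case: if no motif in $\F$ admits \emph{any} homomorphism into $P'$, one simply takes $G_{P'}$ plus two isolated fresh facts $y(a,b)$ and $s(c,d)$; then $\lift_\F(G)$ has no hyperedges at all, while $\lift_{\F'}(G)$ contains the full tuple $\bar r'$ involving $y$ but (by non-triviality of $P'$) no such tuple involving $s$. In the remaining case, where homomorphisms exist but none is core-onto, the paper collects the set $H$ of all homomorphic images $h(P)$ of motifs $P\in\F$ inside the relation-preserving core of $P'$, and builds \emph{two disjoint copies} $G_1$ and $G_2$ of these images over \emph{disjoint fresh relation vocabularies} $f_1,f_2$, attaching $G_{f_1(P')}$ to the $f_1$-side only. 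Because motifs are connected and every $\F$-match into the $P'$-component factors (up to isomorphism) through one of the pre-placed images in $G_1$, one gets $\lift_\F(G)=\lift_\F(G_1)\cup\lift_\F(G_2)$ with the two parts isomorphic and disconnected, so the refinement cannot separate $f_1(y)(a,b)$ from $f_2(y)(g(a),g(b))$. On the $\F'$ side, $\lift_{\F'}(G)$ contains the tuple $f_1(\bar r')$ via the identity homomorphism, and the hypothesis that no $P\in\F$ has a core-onto homomorphism to $P'$ is used exactly to rule out the mirror tuple $f_2(\bar r')$: its existence would yield a relation-preserving homomorphism from $P'$ onto some $h(P)$, exhibiting $h(P)$ as the relation-preserving core of $P'$ and $h$ as core-onto, a contradiction. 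Your proposal contains neither the case split nor this duplication-with-renaming device, which is precisely what makes the argument work uniformly over arbitrary $\F$.
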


To prove this result, we define a coloring scheme analogous to the WL-test for relational hypergraphs presented in \citet{huang2024link}, which simulates the relation encoding defined by $\enc_\lift$. We then combine this scheme with techniques from \citet{huang2023theory} to produce a coloring for each link in the KG, effectively simulating the encoding provided by $\enc_{{\rm KG}}$ (See \Cref{app:test-motif}). We show that this scheme is as powerful as $\mgnn$ in its ability to distinguish links over KGs. With this tool in hand, we prove the converse of Theorem \ref{thm: theo-necesary-condition}: we show how to construct a KG $G$ and links $q(u,v)$, $q'(u',v')$ such that every $\mgnn$ instance in $\mgnn(\F)$ encodes them in the same way, but there is a $\mgnn$ instance in $\mgnn(\F')$ that encodes them differently, 
whenever $\gF'$ contains a motif that cannot be covered with a core-onto homomorphism from any motif in $\gF$. 

\subsection{Consequences for the design of $\mgnn$ models}
\label{sec: consequences_mgnn}

Consider $\mgnn$ models in $\mgnn(\gF)$. Theorem \ref{thm: theo-necesary-condition} implies that if $P \notin \gF$ cannot be covered via a core-onto homomorphism from any motif already in $\gF$, then incorporating $P$ into the set of graph motifs used by the architecture should lead to more expressive encodings. This has consequences for the design of $\mgnn$ architectures. 

\textbf{$k$-path motif.} Take a motif representing a path of length $k$: 
$$r_1(u_1,u_2), r_2(u_2,u_3),\ldots,r_k(u_k,u_{k+1}).$$
For $n \geq 1$, let $\gF_n^\text{path}$ denote the set of motifs 
formed from changing the orientation of any number of edges $r_j$ in the path of length $k$, with $k \leq n$, up to isomorphism. 
Theorem \ref{thm: theo-necesary-condition} tells us 
that $\gF_n^\text{path} \not\preceq \gF_m^\text{path}$
whenever $n < m$, since there is no core-onto homomorphism from any motif with $k\leq n$ edges to a motif with $m$ edges. 
Hence, every bigger path motif adds power for more expressive encodings. 
Further, the set of motifs used in ULTRA (\Cref{fig:motifs-in-ULTRA}) is $\F = \{\textit{h2t}, \textit{t2h}, \textit{h2h}, \textit{t2t}\}$, which, by Proposition \ref{prop:same_expressive}, has the same separation power as $\gF_{2}^{\text{path}}=\{\textit{h2t}, \textit{h2h}, \textit{t2t}\}$. We can leverage this observation to prove:
\begin{theorem}
    \label{thm: ultra_equiv_motif}
ULTRA has the same expressive power as $\mgnn(\gF_{2}^{\text{path}})$.
\end{theorem}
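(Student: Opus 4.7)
The plan is to prove ULTRA and $\mgnn(\gF_2^{\text{path}})$ have the same expressive power by establishing both directions of mutual simulation. By Proposition~\ref{prop:same_expressive}, $\gF_2^{\text{path}} \sim \F$ for $\F = \{\textit{h2t}, \textit{t2h}, \textit{h2h}, \textit{t2t}\}$, so it suffices to compare ULTRA with $\mgnn(\F)$ directly.

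One direction is immediate: any pair of links separated by some ULTRA instance is separated by some $\mgnn(\F)$ instance, since, as observed after Proposition~\ref{prop:same_expressive}, ULTRA is a (strict) subset of $\mgnn(\F)$. Combined with $\F \sim \gF_2^{\text{path}}$, this shows that $\mgnn(\gF_2^{\text{path}})$ is at least as expressive as ULTRA.

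The nontrivial direction is showing that ULTRA is at least as expressive as $\mgnn(\F)$, despite being a strict subset. My plan is to adapt the coloring scheme from the proof of Theorem~\ref{thm: theo-necesary-condition}, which characterizes the separation power of $\mgnn$ via iterative color refinement. I would define an analogous coloring scheme that tightly characterizes ULTRA's separation power and show that the two schemes induce the same equivalence classes on links. The key enabling observation is that $\F$ contains both $\textit{h2t}$ and $\textit{t2h}$, which are isomorphic but with reversed relation orderings; hence every hyperedge $\textit{h2t}(r_1,r_2)$ in $\lift_{\F}(G)$ is paired with a hyperedge $\textit{t2h}(r_2,r_1)$. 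This pairing makes ULTRA's relation graph bidirectional in a way that compensates for the positional hyperedge messaging of $\hcnets$: each message that $\hcnets$ aggregates at position $i$ from position $j$ of a binary hyperedge in the relation encoder of $\mgnn(\F)$ can be simulated in ULTRA by NBFNet propagation along the paired inverse-ordered motif.

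The main obstacle will be making this simulation formally tight, i.e., exhibiting explicit NBFNet-based message and aggregation functions in ULTRA that reproduce, iteration by iteration, the same relation color partition as the $\hcnets$-based encoder of $\mgnn(\F)$. A subtle point is that the query-conditioned initialization $\init_1$ must also be shown to align across the two schemes. Once the simulation is established at the relation-encoder level, the outer entity encoders agree as well---both apply the same form of NBFNet propagation on the original KG using the relation encodings as input---so equivalent relation encodings translate directly into equivalent link encodings, yielding the desired refinement and completing the proof.
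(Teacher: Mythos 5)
Your proposal is correct and follows essentially the same route as the paper: reduce via Proposition~\ref{prop:same_expressive} to comparing ULTRA with $\mgnn(\{\textit{h2t},\textit{t2h},\textit{h2h},\textit{t2t}\})$, get one direction from subsumption, and for the other direction characterize both relation encoders by matching coloring schemes, with the decisive observation being that $\textit{h2h}$/$\textit{t2t}$ are symmetric and $\textit{h2t}$/$\textit{t2h}$ are mutual inverses, so ULTRA's directed NBFNet propagation on the relation graph already recovers the bidirectional, position-aware messaging of the $\hcnet$ encoder. The paper discharges this last step by showing ${\rawl_2}$ and ${\rawl_2^+}$ coincide on $\lift_{\cal F}(G)$ and invoking the known equivalence ${\rawl_2^+}\equiv\hcwl_2$, which is just a packaged form of the simulation you describe.
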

\begin{wrapfigure}{r}{2.5cm}
    \centering
        \vspace{-1em}
    \begin{tikzpicture}[
    every node/.style={circle, draw, fill=black, inner sep=1.5pt, line width=0.5pt}, 
    line width=1.2pt, 
    >=stealth,
    shorten >=0.5pt, 
    shorten <=0.5pt,
    transform shape
]
        \node[draw=none,fill=none] (u) at (0,0) {$\bullet$};
        
        \node[draw=none,fill=none] (v1) at (1,1) {$\bullet$};
        \node[draw=none,fill=none] (v2) at (1,-1) {$\bullet$};
        \node[draw=none,fill=none] (v3) at (-1,-1) {$\bullet$};
        \node[draw=none,fill=none] (v4) at (-1,1) {$\bullet$};

        \draw[->] (v1) to node[midway, above, draw=none, fill=none] {\(\alpha\)} (u);
        \draw[->] (v2) to node[midway, right, draw=none, fill=none] {\(\beta\)}(u);
        \draw[->] (v3) to node[midway, left, draw=none, fill=none] {\(\gamma\)} (u);
        \draw[->] (v4) to node[midway, above, draw=none, fill=none] {\(\delta\)} (u);

        \node[draw=none,fill=none] (label) at (0,-1.5) {$(\alpha,\beta,\gamma,\delta)$};
        
    \end{tikzpicture}
    \vspace{-2em}
    \caption{The $4$-star motif.}
    \label{fig:4-star}
\end{wrapfigure}
\textbf{$k$-star motif.} As another example, consider the $k$-star motif :
$$r_1(v_1,u),r_2(v_2,u),\dots,r_k(v_k,u).$$\Cref{fig:4-star} shows an example of $4$-star motif. 
Once again, the set $\gF_n^\text{star}$ of motifs containing all $k$-stars with $k \leq n$ gives us a hierarchy in terms of separation power, where we have that $\gF_n^\text{star} \not\preceq \gF_m^\text{star}$ whenever $n < m$. Hence, every wider star motif adds power for more expressive encodings.

\begin{figure}[t]
\centering
\begin{tikzpicture}[
    every node/.style={circle, draw, fill=black, inner sep=1.5pt}, 
    >=stealth,
    shorten >=1.5pt, 
    shorten <=1.5pt,
    line width=0.8pt
]
    \node[fill=none,draw=none] (name1) at (-2.5,0) {\emph{tfh}:};
    \node (A) at (0, 0) {};
    \node (B) at (1.5, 0) {};
    \node (C) at (3, 0) {};
    \node (C1) at (4.5, 0) {};

    \node[draw=none, fill=none, anchor=east] at ($(A) - (0.5, 0)$) {\((\alpha, \beta, \gamma)\)};

    \draw[->] (A) to node[midway, above, draw=none, fill=none] {\(\alpha\)} (B);
    \draw[->] (B) to node[midway, above, draw=none, fill=none] {\(\beta\)} (C);
    \draw[->] (C) to node[midway, above, draw=none, fill=none] {\(\gamma\)} (C1);
    
    \node[fill=none,draw=none] (name2) at (-2.5,-0.8) {\emph{tft}:};
    \node (D) at (0, -0.8) {};
    \node (E) at (1.5, -0.8) {};
    \node (F) at (3, -0.8) {};
    \node (F1) at (4.5, -0.8) {};

    \node[draw=none, fill=none, anchor=east] at ($(D) - (0.5, 0)$) {\((\alpha, \beta, \gamma)\)};
    
    \draw[->] (D) to node[midway, above, draw=none, fill=none] {\(\alpha\)} (E);
    \draw[->] (E) to node[midway, above, draw=none, fill=none] {\(\beta\)} (F);
    \draw[->] (F1) to node[midway, above, draw=none, fill=none] {\(\gamma\)} (F);

    \node[fill=none,draw=none] (name3) at (-2.5,-1.6) {\emph{hfh}:};
    \node (G) at (0, -1.6) {};
    \node (H) at (1.5, -1.6) {};
    \node (I) at (3, -1.6) {};
    \node (I1) at (4.5, -1.6) {};

    \node[draw=none, fill=none, anchor=east] at ($(G) - (0.5, 0)$) {\((\alpha, \beta, \gamma)\)};
    
    \draw[->] (H) to node[midway, above, draw=none, fill=none] {\(\alpha\)} (G);
    \draw[->] (H) to node[midway, above, draw=none, fill=none] {\(\beta\)} (I);
    \draw[->] (I1) to node[midway, above, draw=none, fill=none] {\(\gamma\)} (I);
    
    \node[fill=none,draw=none] (name4) at (-2.5,-2.4) {\emph{hft}:};
    \node (J) at (0, -2.4) {};
    \node (K) at (1.5, -2.4) {};
    \node (L) at (3, -2.4) {};
    \node (L1) at (4.5, -2.4) {};

    \node[draw=none, fill=none, anchor=east] at ($(J) - (0.5, 0)$) {\((\alpha, \beta, \gamma)\)};
    
    \draw[->] (K) to node[midway, above, draw=none, fill=none] {\(\alpha\)} (J);
    \draw[->] (L) to node[midway, above, draw=none, fill=none] {\(\beta\)} (K);
    \draw[->] (L) to node[midway, above, draw=none, fill=none] {\(\gamma\)} (L1);
\end{tikzpicture}
\caption{The $3$-path motifs $(\gF_{3}^{\text{path}})$. \emph{tfh} stands for \emph{tail-forward-head}, and the rest follows.
}
\label{fig:motifs-in-motifs-3path-main}
\end{figure}

\textbf{A more expressive KGFM.} We also present new ways to improve the $\ultra$ architecture. Adding any motif with $3$ edges (shown in \Cref{fig:motifs-in-motifs-3path-main}) to the set of four motifs used in $\ultra$ enhances its expressive power.
As an example task shown in \Cref{fig:counter-example}, we show that $\ultra$ with its motifs generates a complete relation graph and fails to distinguish between $r_1$ and $r_2$. In contrast, $\mgnn(\gF_3^\text{path})$ constructs a hyperedge $(r_2, r_3, r_1)$ but not $(r_1, r_3, r_2)$, thereby differentiating between $r_1$ and $r_2$ and solving the task. (See \Cref{app: consequences_mgnn} for detailed proof.)

\begin{figure}[t]
\centering
\begin{tikzpicture}[
    every node/.style={circle, draw, fill=black, inner sep=1.5pt, line width=0.5pt}, 
    line width=1.2pt, 
    >=stealth,
    shorten >=3pt, 
    shorten <=3pt,
    transform shape
]
\definecolor{cartoonred}{RGB}{190,80,80}
\definecolor{cartoonblue}{RGB}{80,80,190}
\definecolor{cartoongreen}{RGB}{80,160,80}
\definecolor{cartoonyellow}{RGB}{150,120,0}

\node (u)  [label={[font=\small, xshift=0em, yshift=-2em]90:$u$}] at (1,1) {};
\node (m)   at (2.4,1) {};
\node (v2) [label={[font=\small, xshift=0em, yshift=-2em]90:$v_2$}]      at (5.4,0) {};
\node (w)   at (4,1) {};
\node (v1) [label={[font=\small, xshift=0em, yshift=-2em]90:$v_1$}]      at (5.4,2) {};

\draw[->, color=cartoongreen!70, bend right=15] 
    (v1) to node[midway, above, draw=none, fill=none] {\footnotesize $r_1$} (w);
\draw[->, color=cartoongreen!70, bend right=15] 
    (w) to node[midway, below, draw=none, fill=none] {\footnotesize $r_1$} (v1);

\draw[->, color=cartoonblue!70] 
    (u) to node[midway, above, color=cartoonblue!70, draw=none, fill=none] {\footnotesize $r_2$} (m);
\draw[->, color=cartoonblue!70, bend left=15]  
    (w) to node[midway, above, color=cartoonblue!70, draw=none, fill=none] {\footnotesize $r_2$} (v2);
\draw[->, color=cartoonblue!70, bend left=15]  
    (v2) to node[midway, below, color=cartoonblue!70, draw=none, fill=none] {\footnotesize $r_2$} (w);

\draw[->, color=cartoonred!70, dashed, bend left=30]  
    (u) to node[midway, above, color=cartoonred!70, draw=none, fill=none, sloped] {\footnotesize $r_3?$} (v1);
\draw[->, color=cartoonred!70, dashed, bend right=35]   
    (u) to node[midway, above, color=cartoonred!70, draw=none, fill=none, sloped] {\footnotesize $r_3?$} (v2);
\draw[->, color=cartoonred!70, bend left=15]  
    (m) to node[midway, above, color=cartoonred!70, draw=none, fill=none] {\footnotesize $r_3$} (w);
\draw[->, color=cartoonred!70, bend left=15]   
    (w) to node[midway, below, color=cartoonred!70, draw=none, fill=none] {\footnotesize $r_3$} (m);

\end{tikzpicture}
\caption{$\ultra$ cannot distinguish between 
\(r_3(u,v_1)\) and \(r_3(u,v_2)\) whereas $\mgnn(\gF_3^{\text{path}})$ can.}
\label{fig:counter-example}
\end{figure}

Furthermore, the core idea of augmenting existing KGFMs via more expressive motifs to provably enhance their expressivity fits seamlessly in the large bodies of KGFMs in the literature, such as $\ingram$~\citep{ingram}, RMPI~\citep{geng2022relationalmessagepassingfully}, and TRIX~\citep{zhang2024trix}, providing an orthogonal avenue for boosting their expressive power.

\subsection{Comparison with existing link prediction models}

Existing inductive link prediction models, such as $\cmpnn$s and $\text{R-MPNN}$s~\citep{huang2023theory}, can be viewed as special instances of $\mgnn$ with an empty motif set $\emptyset$ and $\init_1$ being a one-hot encoding of relations. However, this configuration breaks relation invariance on relational hypergraphs. 
Somewhat counter-intuitively, we note that an instance of $\mgnn$ that preserves relation invariance is inherently \emph{less expressive} than its corresponding entity encoder $\enc_{\text{KG}}$, which acts as a node invariant (but not a relation invariant). 
This implies, for example, that NBFNet~\citep{zhu2022neural} is strictly more expressive than $\ultra$ when measured only over graphs with the \emph{same relations types}. This increased expressive power comes at the cost of limited generalization to unseen relations.

\section{Experimental analysis}
We evaluate $\mgnn$ over a broad range of knowledge graphs to answer the following questions: 
\begin{itemize}
    \item[\textbf{Q1}.] Does $\mgnn$ equipped with more expressive graph motifs exhibit a stronger performance? 
    \item[\textbf{Q2}.] Does $\ultra$ have the same expressive power as $\mgnn(\gF_2^\text{path})$?
    \item[\textbf{Q3}.] How does a more refined relation invariant help link prediction tasks?
    \item[\textbf{Q4}.] What is the trade-off between expressiveness and scalability for $\mgnn$? (See \Cref{app:scalability,app: complexity}.)
\end{itemize}

In the experiments, we consider a basic architecture which replaces the relation encoder $\enc_\lift$ by HCNet~\citep{huang2024link} and the entity encoder $\enc_{{\rm KG}}$ by a slight modification of NBFNet~\citep{zhu2022neural} (see \Cref{app:mgnn_def}).

\subsection{Synthetic experiments: $\hubclass$}
\label{sec:synthetics}
We construct synthetic datasets $\hubclass(k)$ to validate $\mgnn$'s enhanced expressiveness with richer motifs (\textbf{Q1}).

\textbf{Task \& Setup.} The dataset $\hubclass(k)$ consists of multiple synthetic KGs, where in each KG the relations are partitioned into a \emph{positive} class $P$, a \emph{negative} class $N$, both of size $k+1$, and a query relation $q$.  Each KG contains a $(k+1)$-star \emph{hub} with positive relations, and for each possible subset of relations of $P$ and $N$ of size $k$, the KG contains a positive and negative $k$-star \emph{community}, respectively. An example of such KG is shown in \Cref{fig:hubclass} with $k=2$.
We consider the following classification task: \emph{predict whether there exists a link of relation $q$ from the hub center to positive communities but NOT to negative communities.}  The key to solving this task is successfully differentiating between relations from positive and negative classes.
We consider $\ultra$ and $\mgnn(\gF_m^\text{star})$ for varying $m$, and show the empirical results on these datasets in \Cref{tab:synthetic}. (Further details in \Cref{app:synthetics}.)

\textbf{$\ultra$ \& $\mgnn$ with inexpressive motifs.}
We claim both $\ultra$ and $\mgnn(\gF_m^\text{star})$ with $m \leq k$ cannot solve the task on $\hubclass(k)$. Since the only difference between $P$ and $N$ is the presence of the ($k+1$)-star hub of relations in $P$, these models cannot detect this higher-order motif. Thus, $\lift_\gF(G)$ will contain two disjoint isomorphic hypergraphs: one with positive relations and the other with negative relations. This results in indistinguishability between relations in $P$ and $N$, failing the task. Empirically, we find that $\mgnn(\gF_m^\text{star})$ with $m \leq k$ and $\ultra$ only reach $50\%$ accuracy, no better than a random guess, aligning with our theoretical studies.

\textbf{$\mgnn$ with expressive motifs.} We claim that $\mgnn(\gF_m^\text{star})$ with $m > k$ can detect the presence of the positive hub, precisely due to the additional $(k+1)$-star motifs. This allows the model to construct a hyperedge in $\lift_\gF(G)$ across all relations in $P$, while no such hyperedge exists for $N$, thus distinguishing $P$ from $N$.  Empirically, $\mgnn(\gF_m^\text{star})$ with $m = k+1$ reaches $100\%$ accuracy on $\hubclass(k)$, validating our theoretical results.

\begin{figure}[t]
\centering
\begin{tikzpicture}[
        every node/.style={circle, draw=none, fill=none, inner sep=2pt, line width=0.5pt, font=\scriptsize}, 
        line width=1.2pt, 
        >=stealth,
        shorten >=-0.5pt, 
        shorten <=-0.5pt,
        transform shape,
        square/.style={regular polygon,regular polygon sides=4},
        triangle/.style={regular polygon,regular polygon sides=3},
    ]
    \definecolor{cartoonred}{RGB}{190,80,80}
    \definecolor{cartoonblue}{RGB}{80,80,190}
    \definecolor{cartoongreen}{RGB}{80,190,80}
    \definecolor{cartoonyellow}{RGB}{190,128,0}
    \definecolor{darkgreen}{RGB}{1, 50, 32}
  \node (A1) at (-1,4) {\(\bullet\)};
  \node (A2) at (0,4.4) {\(\bullet\)};
  \node (A3) at (1,4) {\(\bullet\)};
  \node (B) at (0,3.5) {\(\bullet\)};

  \node (C1) at (-2,3.25) {\(\bullet\)};
  \node (C2) at (-1,3) {\(\bullet\)}; 
  \node (C3) at (2,3.25) {\(\bullet\)};
  \node (C4) at (1,3) {\(\bullet\)};
  
  \node (D1) at (-2,2.75) {\(\bullet\)};
  \node (D2) at (2,2.75) {\(\bullet\)};
  
  \node (E1) at (-2,2.25) {\(\bullet\)};
  \node (E2) at (-1,2) {\(\bullet\)}; 
  \node (E3) at (2,2.25) {\(\bullet\)};
  \node (E4) at (1,2) {\(\bullet\)};
  
  \node (F1) at (-2,1.75) {\(\bullet\)};
  \node (F2) at (2,1.75) {\(\bullet\)};
  
  \node (G1) at (-2,1.25) {\(\bullet\)};
  \node (G2) at (-1,1) {\(\bullet\)};
  \node (G3) at (2,1.25) {\(\bullet\)};
  \node (G4) at (1,1) {\(\bullet\)};
  
  \node (H1) at (-2,0.75) {\(\bullet\)};
  \node (H2) at (2,0.75) {\(\bullet\)};
  
  \draw[->, color=cartoonred] (A1) to node[midway,left, yshift=-0.3em] {\(p_1\)} (B) ;
  \draw[->, color=cartoonred] (A2) to node[midway,left] {\(p_2\)} (B);
  \draw[->, color=cartoonred] (A3) to node[midway, right, yshift=-0.3em] {\(p_3\)} (B) ;
  
  \draw[->, dashed] (B) to[bend right=10]  (C2);
  \draw[->, dashed] (B) to[bend right=10]  (E2);
  \draw[->, dashed] (B) to[bend left=15]  node[midway, below right] {\(q?\)} (G2) ;

  \draw[->, color=cartoonred] (C1) -- (C2) node[midway, above] {\(p_1\)};
  \draw[->, color=cartoonblue] (C3) -- (C4) node[midway, above] {\(n_1\)};
  \draw[->, color=cartoonred] (D1) -- (C2) node[midway, below] {\(p_2\)};
  \draw[->, color=cartoonblue] (D2) -- (C4) node[midway, below] {\(n_2\)};

  \draw[->, color=cartoonred] (E1) -- (E2) node[midway, above] {\(p_1\)};
  \draw[->, color=cartoonblue] (E3) -- (E4) node[midway, above] {\(n_1\)};
  \draw[->, color=cartoonred] (F1) -- (E2) node[midway, below] {\(p_3\)};
  \draw[->, color=cartoonblue] (F2) -- (E4) node[midway, below] {\(n_3\)};

  \draw[->, color=cartoonred] (G1) -- (G2) node[midway, above] {\(p_2\)};
  \draw[->, color=cartoonblue] (G3) -- (G4) node[midway, above] {\(n_2\)};
  \draw[->, color=cartoonred] (H1) -- (G2) node[midway, below] {\(p_3\)};
  \draw[->, color=cartoonblue] (H2) -- (G4) node[midway, below] {\(n_3\)};

\end{tikzpicture}
    \vspace{-1em}
    \caption{A example of a KG in one of the synthetic datasets $\hubclass(2)$. Relations in $P$ are in red and $N$ are in blue. 
    }
    \label{fig:hubclass}
    \vspace{-1em}
\end{figure}

\begin{table}[t]
\centering
\caption{Accuracy for $\hubclass(k)$ datasets with $\ultra$ and $\mgnn$ with different $m$-star $\mgnn(\gF_m^\text{star})$ for varying $k$.}
\label{tab:synthetic}
\begin{tabular}{cc|ccccccc}
\toprule
    \multicolumn{2}{c}{$k=?$}        & $\textbf{2}$         & $\textbf{3}$         & $\textbf{4}$         & $\textbf{5}$         & $\textbf{6}$         \\ 
\midrule
\midrule
   \multicolumn{2}{c}{$\ultra$}       & 0.50           & 0.50           & 0.50           & 0.50           & 0.50           \\
\midrule
\multirow{6}{*}{$\mgnn$} & $\gF_2^\text{star}$           & 0.50           & 0.50           & 0.50           & 0.50           & 0.50           \\
& $\gF_3^\text{star}$           & \textbf{1.00}  & 0.50           & 0.50           & 0.50           & 0.50           \\
& $\gF_4^\text{star}$           & --             & \textbf{1.00}  & 0.50           & 0.50           & 0.50           \\
& $\gF_5^\text{star}$            & --             & --             & \textbf{1.00}  & 0.50           & 0.50           \\
& $\gF_6^\text{star}$             & --             & --             & --             & \textbf{1.00}  & 0.50           \\
& $\gF_7^\text{star}$            & --             & --             & --             & --             & \textbf{1.00}  \\
\bottomrule
\end{tabular}
\end{table}

\subsection{Pretraining and fine-tuning experiments}
\label{sec:foundational}

\textbf{Datasets \& Evaluations.} For pretraining and fine-tuning experiments, we follow the protocol of \citet{galkin2023ultra} and pretrain on FB15k237~\citep{FB15k237}, WN18RR~\citep{Dettmers2018FB}, and CoDEx Medium~\citep{safavi-koutra-2020-codex}. We then apply zero-shot inference and fine-tuned inference over 51 KGs across three settings: inductive on nodes and relations (\textbf{Inductive} $e,r$), inductive on nodes (\textbf{Inductive} $e$), and \textbf{Transductive}. 
The detailed information of datasets, model architectures, implementations, and hyper-parameters used in the experiments are presented in \Cref{sec:further-experimental-details}.
Following convention~\citep{zhu2022neural}, on each knowledge graph and for each triplet $r(u,v)$, we augment the corresponding inverse triplet $r^{-1}(v,u)$ where $r^{-1}$ is a fresh relation symbol. 
For evaluation, we follow \emph{filtered ranking protocol}~\citep{brodes2013transe} and report Mean Reciprocal Rank (MRR), and Hits@10 for each dataset. We report both head and tails prediction results for each triplet on all datasets except three from~\citet{FB15k237-10-20-50}, where only tails prediction results are reported. 
The code is available at \href{https://github.com/HxyScotthuang/MOTIF/}{https://github.com/HxyScotthuang/MOTIF/}.

\begin{table*}[t]
\small
\centering
\caption{Average zero-shot and fine-tuned link prediction MRR and Hits@10 over 51 KGs.}
\label{tab:zeroshot}
\begin{tabular}{lccccccc||cc||cc}
\toprule
     & & \multicolumn{2}{c}{\textbf{Inductive $e, r$}} & \multicolumn{2}{c}{\textbf{Inductive $e$}} & \multicolumn{2}{c}{\textbf{Transductive}} & \multicolumn{2}{c}{\textbf{Total Avg}}
     & \multicolumn{2}{c}{\textbf{Pretrained}}\\ 
\textbf{Model} & \textbf{Motif (${\cal F}$)} & \multicolumn{2}{c}{(23 graphs)} & \multicolumn{2}{c}{(18 graphs)} & \multicolumn{2}{c}{(10 graphs)}  & \multicolumn{2}{c}{(51 graphs)} &
\multicolumn{2}{c}{(3 graphs)} 
\\
\cmidrule{3-12}
                && \textbf{MRR}     & \textbf{H@10}    & \textbf{MRR}    & \textbf{H@10}   & \textbf{MRR}    & \textbf{H@10} & \textbf{MRR}    & \textbf{H@10} & \textbf{MRR}    & \textbf{H@10}      \\ 
                      \midrule
                      \multicolumn{12}{c}{\textbf{Zero-shot Inference}} \\
                      \midrule
$\ultra$ &  &0.345 & 0.513 & 0.431 & 0.566 & 0.339 & 0.494 & 0.374 & 0.529 & -& -\\
\multirow{4}{*}{\mgnn}  & $\gF_3^\text{path}$ &  \textbf{0.349}        &   \textbf{0.525}       &     \textbf{0.436}       &    \textbf{0.577}  &    \textbf{0.343}         &       \textbf{0.496} &\textbf{0.378} & \textbf{0.537}  & -&  -   \\
& $\gF_2^\text{path}$ & 0.337 & 0.509 & 0.431 & 0.570 & 0.335 & 0.492 & 0.370 & 0.527 &-&-\\
& $\{\textit{h2t}\}$  &    0.330        &     0.503    &      0.422      &  0.559   &    0.325     &   0.482   & 0.361 & 0.518 & -& -\\
& $\emptyset$  &     0.074       &   0.121      &     0.107      & 0.169    &   0.054    &   0.101  & 0.082 & 0.134 & -& -\\
                      \midrule
                      \multicolumn{12}{c}{\textbf{Finetuned Inference}} \\
                      \midrule
     $\ultra$   &   & 0.397 & 0.556 & 0.442 & 0.582 & 0.384 & 0.543 & 0.410 & 0.563 & 0.407& \textbf{0.568}\\
     \mgnn   & $\gF_3^\text{path}$   & \textbf{0.401} & \textbf{0.558} & \textbf{0.455} & \textbf{0.594} & \textbf{0.394} & \textbf{0.549} & \textbf{0.419} & \textbf{0.569} & \textbf{0.415} & 0.565\\
\bottomrule
\end{tabular}
\end{table*}

\textbf{Setup.} To evaluate the impact of graph motifs, we construct four variants of $\mgnn$ models with different $\gF$:  $3$-path ($\gF_3^\text{path}$),  $2$-path ($\gF_2^\text{path}$), $\{\textit{h2t}\}$ only, and no motifs ($\emptyset$), defined in \Cref{sec: expressive_power}. We present the average zero-shot and fine-tuned results of $\mgnn$ over 51 KGs in \Cref{tab:zeroshot}, along with the corresponding results for $\ultra$ taken from \citet{galkin2023ultra}. 
The detailed per-dataset results of pretrained, zero-shot, and finetuned inference are shown in \Cref{app: zeroshot-v1,app: zeroshot-v2,app: finetune-v1,app: finetune-v2}, respectively. Note that in the zero-shot inference setting, \textbf{Inductive} $e,r$, \textbf{Inductive} $e$, and \textbf{Transductive} are in principle indifferent, as all models encounter unseen relations in the inference graph.

\textbf{Results.} First of all, note that $\mgnn(\gF_3^\text{path})$ outperforms $\mgnn(\gF_2^\text{path})$ over all 51 datasets on average in zero-shot inference. In the case of fine-tuned inference, a similar trend is present: $\mgnn(\gF_3^\text{path})$ consistently outperforms $\mgnn(\gF_2^\text{path})$. It is also worth noting that both $\mgnn(\gF_3^\text{path})$ and $\mgnn(\gF_2^\text{path})$ achieve further performance improvements after fine-tuning on the training set compared to zero-shot inference. These findings demonstrate that the additionally introduced motifs help the model to learn richer relation invariants (\textbf{Q1}), which are then leveraged for better predictions on new KGs with unseen nodes and relations. This observation is true on both zero-shot and fine-tuned inference.

Secondly, recall that $\mgnn(\gF_2^\text{path})$ and ULTRA have the same expressive power as shown in \Cref{thm: ultra_equiv_motif}.  This theoretical finding is supported in our empirical study, as we see that $\ultra$ performs similarly with $\mgnn(\gF_2^\text{path})$ in these experiments (\textbf{Q2}). 

Lastly, when it comes to comparing $\mgnn(\gF_2^\text{path})$ and $\mgnn(\{\textit{h2t}\})$, we observe a gradual decrease in all metrics on all datasets as the motifs are removed. This aligns with \Cref{thm: theo-necesary-condition}: adding $3$-paths on top of $2$-paths yields more expressive power on $\mgnn$ since there is no core-onto homomorphism from any $2$-path to $3$-path motifs. Similarly, adding $h2h$ or  $t2t$ on top of $h2t$ also yields more expressive power as there is no homomorphism from $h2t$ to $h2h$ or $t2t$. 
In the extreme case, we observe that removing all of the motifs in $\mgnn(\emptyset)$ prevents the model from learning any non-trivial relation invariants, thus exhibiting a complete failure in generalization to unseen KGs.

\textbf{Results on datasets with complex relational patterns (Q1).}
Our findings on domain-specific benchmarks further validate the benefits of increased expressiveness. On Metafam~\citep{zhou2023multitaskperspetivelinkprediction}, a dataset designed to challenge models with conflicting and compositional relational patterns, $\mgnn$ achieves a 45\% improvement in MRR over $\ultra$ in the zero-shot setting. This underscores the model's ability to capture non-trivial relational invariants when higher-order motifs are available. Similarly, on WIKITOPICS-MT~\citep{zhou2023multitaskperspetivelinkprediction}, where multiple KGs from diverse topics are combined to form a multi-task setting, $\mgnn(\gF_3^\text{path})$ consistently outperforms baselines. Averaged across 8 datasets, $\mgnn(\gF_3^\text{path})$ improves MRR from 0.331 to 0.358 and Hits@10 from 0.442 to 0.481; gains are even more pronounced on certain subsets (e.g., a 58\% relative improvement on MT1-tax). These results confirm that expressiveness improvements translate into practical gains when the structure of the data aligns with the capabilities of the underlying motifs.

\begin{table}[t]
\centering
\small
\vspace{-1em}
\caption{Average end-to-end MRR and Hits@10 results over 54 KGs.}
\label{tab:end2end}
\begin{tabular}{lcc}
\toprule
     &  \multicolumn{2}{c}{\textbf{Total Avg}}  \\ 
\textbf{Model}  & \multicolumn{2}{c}{(54 graphs)}  \\
\cmidrule{2-3}
                &\textbf{MRR}    & \textbf{H@10}      \\ 
                      \midrule
 \multirow{1}{*}{$\ultra$}           &  0.394  & 0.552            \\
\multirow{1}{*}{$\mgnn(\gF_3^\text{path})$}    &     \textbf{0.409}        &  \textbf{0.561}      \\
\bottomrule
\end{tabular}
\end{table}

\subsection{End-to-end experiments}
\label{sec:end-to-end}

We also conduct end-to-end experiments where for each dataset, we train a $\mgnn(\gF_3^\text{path})$ on the training set from scratch and evaluate its corresponding validation and test sets. 
We present the average results over all 54 KGs in \Cref{tab:end2end} (Full results in \Cref{app: end2end-v1} and \Cref{app: end2end-v2}). We continue to see improved performance of $\mgnn(\gF_3^\text{path})$ over $\ultra$ due to the additional motifs, allowing models to capture a richer set of information over relations (\textbf{Q1}).

\textbf{Degradation of relation learning~(\textbf{Q3}).} 
As a case study, we focus on the end-to-end experiments with WN-v2~\citep{grail2020teru}, an inductive (on node) dataset with only $20$ relations after inverse augmentation. The MRR for $\ultra$ is 0.296, severely lower than that of $\mgnn(\gF_3^\text{path})$ (0.684). To investigate, we plot the cosine similarity\footnote{The block structure is due to the concatenation of augmented inverse relations.} among the computed relation embeddings when queried relations are $r_0 = \textit{derivationally\_related\_form}$ and $r_2=\textit{hypernym}$ in \Cref{fig:cosine}. $\ultra$ produces highly similar relation representations, whereas $\mgnn$ with richer motifs generates distinguishing relation representations, aiding the entity encoder in link prediction.

Nevertheless, we notice that such degradation does not appear in the pretraining experiments, suggesting that either enriching relation learning with diverse motifs or training over multiple KGs could help avoid such degradation.

\begin{figure}
    \centering
    \includegraphics[width=0.7\linewidth]{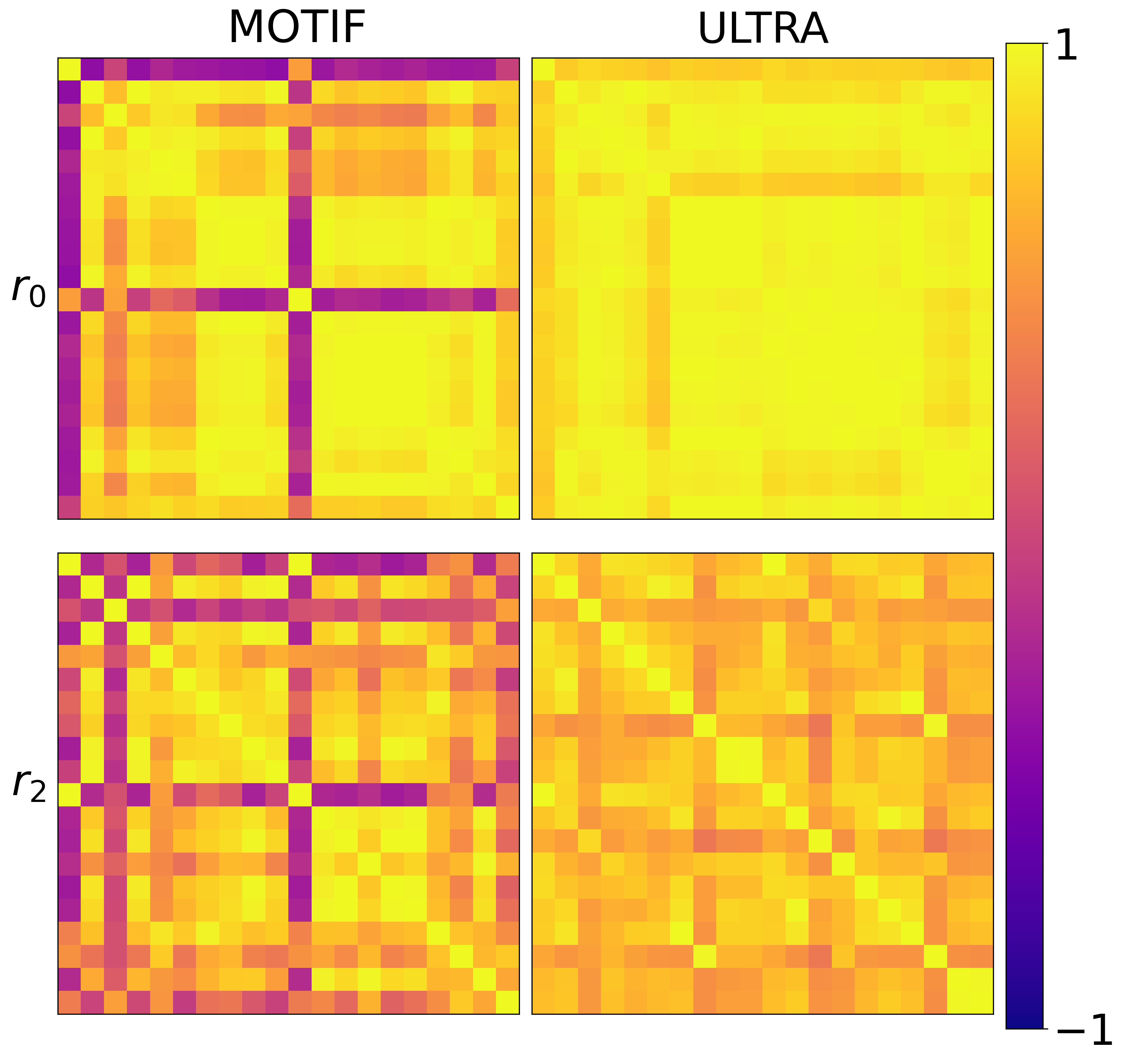}
    \caption{Cosine similarities between generated relations embeddings when queried relation $q$ is $r_0$ and $r_2$ in test set of WN-v2.}
    \vspace{-1em}
    \label{fig:cosine}
\end{figure}

\section{Conclusions}
We introduced $\mgnn$, a general framework that extends existing KGFMs via incorporating arbitrary motifs. In the context of this framework, we studied the expressive power of KGFMs and identified \emph{sufficient} conditions under which adding new motifs improves $\mgnn$s' ability to compute finer relation invariants. We thoroughly discussed the implications of our results on existing KGFMs. The theoretical findings are empirically validated on a diverse set of datasets.  

Our work provides key insights and concrete recipes for designing more expressive KGFMs. However, the typical trade-off between expressive power and computational complexity clearly also applies to our study. While using richer motifs provably increases $\mgnn$'s expressive power, this may come at the cost of computational efficiency in terms of both time and memory (please see the discussion in \Cref{app:scalability,app: complexity}). A promising avenue for future work is to enhance $\mgnn$'s computational efficiency, thus enabling scalability to real-world large-scale graphs.

\section*{Acknowledgment}
Bronstein is supported by EPSRC Turing AI World-Leading Research Fellowship No. EP/X040062/1
and EPSRC AI Hub on Mathematical Foundations of Intelligence: An ``Erlangen Programme'' for
AI No. EP/Y028872/1. Reutter is funded by Fondecyt grant 1221799. Barceló and Reutter are
funded by ANID–Millennium Science Initiative Program - CodeICN17002.  Barceló and Romero are funded by the National Center for Artificial
Intelligence CENIA FB210017, BasalANID.

\section*{Impact Statement}

This paper presents work aimed at advancing the field of Machine Learning. Our methods have potential applications in recommendation systems and knowledge graph completion, among others. While these applications can have significant societal impacts, including improving information retrieval and personalization, as well as potential risks such as bias amplification and misinformation propagation, we do not identify any specific, immediate concerns requiring special attention in this work.

\bibliography{references}
\bibliographystyle{icml2025}

\newpage
\appendix
\onecolumn

\section{C-MPNNs and HC-MPNNs}
\label{app: C-MPNNs}
In this section, we follow \citet{huang2023theory} and \citet{huang2024link} to define \emph{conditional message passing neural networks} (C-MPNNs) and \emph{hypergraph conditional message passing neural networks} (HC-MPNNs). We then follow \citet{huang2024link} to define
\emph{hypergraph conditional networks} (HCNets) as an architecture from HC-MPNNs. We also present their corresponding relational Weisfeiler Leman test
to study their expressive power rigorously. 
For ease of presentation, we omit the discussion regarding history functions and readout functions from \citet{huang2023theory}. Also, some notations are adapted to simplify the exposition.

\subsection{Model definitions}

\textbf{C-MPNNs.} Let $G=(V,E,R)$ be a KG. A \emph{conditional message passing neural network} (C-MPNN) iteratively computes node representations, relative to a fixed query $q\in R$ and a fixed source node $u\in V$, as follows:
\begin{align*}
\vh_{v|u,q}^{(0)} &= \init(u,v,q)\\
\vh_{v|u,q}^{(\ell+1)} &= \update \Big( \vh_{v|u,q}^{(\ell)}, \aggregate(\{\!\! \{ \mes_r(\vh_{w|u,q}^{(\ell)},\vz_q)|~  w \in \mathcal{N}_r(v), r \in R \}\!\!\})\Big), 
\end{align*} 
where $\init$, $\update$, $\aggregate$, and $\mes_r$ are differentiable \emph{initialization}, \emph{update}, \emph{aggregation}, and relation-specific \emph{message} functions, respectively.

We denote by $\vh_q^{(\ell)}:V\times V\to \mathbb{R}^{d(\ell)}$ the function $\vh_q^{(\ell)}(u,v):= \vh_{v|u,q}^{(\ell)}$, and denote $\vz_q$ to be a learnable vector representing the query $q\in R$. We can then interpret a C-MPNN as computing representations of pair of nodes. Given a fixed number of layers $L\geq 0$, a C-MPNN computes the final pair representations as $\vh_q^{(L)}$. In order to decode the likelihood of the fact $q(u,v)$ for some $q \in R$, we consider a unary decoder $\dec: \mathbb{R}^{d(L)}\to  \sR$. We additionally require $\init(u,v,q)$ to satisfy the property of \emph{target node distinguishability}: for all $q \in R$ and $ v \neq u \in V$, it holds that $\init(u,u,q) \neq \init(u,v,q)$.

\textbf{HC-MPNNs.} Given a relational hypergraph $H = (V, E, R)$ and a query $\vq= (q,\tilde{\vu}, t) := q(u_1,\cdots,u_{t-1},?,u_{t+1},\cdots,u_{m})$, for $\ell \geq 0$, an $\hcmpnn$ computes a sequence of feature maps $\vh_{v|\vq}^{(\ell)}$ as follows:
\begin{align*}
\vh_{v|\vq}^{(0)} &= \init(v,\vq), \\
\vh_{v|\vq}^{(\ell+1)}  &=\update\Big( \vh_{v|\vq}^{(\ell)}, \aggregate\big(\vh_{v|\vq}^{(\ell)},  \llbrace \mes_{\rho(e)}\big( \{(\vh_{w|\vq}^{(\ell)}, j ) \mid (w, j) \in \gN_i(e) \}, \vq \big), \mid (e,i) \in E(v) \rrbrace \big)  \Big), 
\end{align*}
where $\init$, $\update$, $\aggregate$, and $\mes_{\rho(e)}$ are differentiable \emph{initialization}, \emph{update}, \emph{aggregation}, and relation-specific \emph{message} functions, respectively. An $\hcmpnn$ has a fixed number of layers $L \geq 0$, and the final
conditional node representations are given by $\vh_{v|\vq}^{(L)} $. 
We denote by $\vh^{(\ell)}_\vq : V \rightarrow \R^{d(\ell)}$ the function $\vh^{(\ell)}_\vq (v) := \vh_{v|\vq}^{(\ell)}$.

\textbf{HCNets.} HCNets can be seen as an architecture of HC-MPNN frameworks by choosing appropriate initialization, aggregation, and message functions. Let $H=(V,E,R)$ be a relational hypergraph. For a query $\vq = (q, \tilde{\vu}, t) := q(u_1,\cdots,u_{t-1},?,u_{t+1},\cdots,u_{m})$, an $\hcnet$ computes the following representations for all $\ell \geq 0$:
\begin{align*}
\vh_{v|\vq}^{(0)} &= \sum_{i \neq t} \mathbbm{1}_{v = u_i} *(\vp_{i} + \vz_q ), \\
\vh_{v|\vq}^{(\ell+1)}  &=\sigma \Big( \mW^{(\ell)}\Big[\vh_{v|\vq}^{(\ell)} \Big\|  \sum_{(e,i) \in E(v) }g_{\rho(e),q}^{(\ell)}\Big(\odot_{j \neq i}(\alpha^{(\ell)}\vh_{e(j)|\vq}^{(\ell)} \!\!+ (1-\alpha^{(\ell)})\vp_{j})  \Big) \Big]
+ \vb^{(\ell)}
\Big), 
\end{align*}
where $g_{\rho(e),q}^{(\ell)}$ is a learnable message function which is often a diagonal matrix, $\sigma$ is an activation function,  $\mW^{(\ell)}$ is a learnable weight matrix, $\vb^{(\ell)}$ as learnable bias term per layer, $\vz_q$ is the learnable query vector for $q \in R$, and $\mathbbm{1}_{C}$ is the indicator function that returns $1$ if condition $C$ is true, and $0$ otherwise. $\alpha$ is a learnable scalar and $\vp_i$ to refer to the sinusoidal positional encoding~\citep{vaswani2023attention} at position $i$.

\subsection{Relational WL tests}
To characterize these models' expressive power, \citet{huang2023theory} and \citet{huang2024link} proposed the corresponding relational Weisfeiler-Leman algorithms to capture these models exactly.

\textbf{Relational assymetric local 2-WL test ($\rawl_2$).}
\label{app:rawl_2}
Following \citet{huang2023theory}, we define the relational asymmetric local 2-WL test ($\rawl_2$) as follows:
Let $G = (V,E,R)$ be a KG and $\eta: V\times V\to D$ an initial pairwise node coloring. We say $\eta$ satisfies \emph{target node distinguishability} if $\eta(u,u)\neq \eta(u,v)$ for all $u\neq v\in V$. Then, for each $\ell\geq 0$, we update the coloring as: 
\begin{align*}
\rdwl_{2}^{(0)}(u,v) & = \eta(u,v), \\
\rdwl_{2}^{(\ell+1)}(u,v) &= {\hash}\big(\rdwl_{2}^{(\ell)}(u,v), \llbrace (\rdwl_{2}^{(\ell)}(u,w), r)\mid w\in \mathcal{N}_r(v), r \in R\rrbrace\big),
\end{align*} 
where $\hash$ injectively maps the above pair to a fresh unique color. It is shown in Theorem 5.1, \citet{huang2023theory} that $\rdwl_{2}^{(\ell)}$ is a relational WL test that characterizes the expressive power of C-MPNNs, i.e., for all C-MPNN, given any knowledge graph $G$, $\ell\geq 0$, we have $\rdwl_{2}^{(\ell)}$ refines the $\ell$ layer of this C-MPNN. In addition, given any knowledge graph $G$, there exists a C-MPNN has equivalent expressive power of $\rdwl_{2}^{(\ell)}$ for all $\ell\geq 0$.

For any knowledge graph $G$, we will sometimes write $\rawl_2^+(G)$ as a shorthand of $\rawl_2(G^+)$, i.e., $\rawl_2$ applied on \emph{augmented knowledge graph} $G^+$. Formally speaking, let $G=(V,E,R)$ be a knowledge graph. Following \citet{huang2023theory}, we define its \emph{augmented knowledge graph} to be $G^+=(V,E^+,R^+)$, where $R^+$ is the disjoint union of $R$ and $R^- := \{r^-\mid r\in R\}$, and 
$E^+ = E \cup E^-$
where $E^-=\{r^{-}(v,u)\mid r(u,v)\in E, u\neq v\}.$

\textbf{Hypergraph relational local 1-WL test ($\hrwl_1$).}
\label{app:hrwl}
Similarly, following \citet{huang2024link}, we define the hypergraph relational local 1-WL test ($\hrwl_1$) as follows: Let $H = (V,E,R)$ be a relational hypergraph and $c: V\to D$ an initial node coloring. Then, for each $\ell\geq 0$, we update the coloring as:
\begin{align*}
    \hrwl_1^{(0)}(v) &= c(v), \\
    \hrwl_1^{(\ell+1)}(v) &= \hash \Big( 
    \hrwl_1^{(\ell)}(v), 
    \!\llbrace\big(
            \{ 
                (\hrwl_1^{(\ell)}(w),j) \!\mid (w, j) \in \gN_{i}(e)
            \}, \rho(e)
        \big) \!\mid\! (e,i) \!\in\! E(v)
    \rrbrace
    \Big).
\end{align*}
where the function $\hash$ is an injective mapping that maps the above pair to a unique color that has not been used in the previous iteration.
When we restrict $\hrwl_1$ to initial colorings $c$ that respect a given query $\vq := (q,\tilde{\vu},t) = q(u_1,\cdots,u_{t-1},u_{t+1},\cdots,u_k)$, we say that the coloring $c$ satisfies \emph{generalized target node distinguishability with respect to $\vq$} if:
\begin{align*}
    c(u) \neq c(v) \quad \forall u \in \tilde{\vu}, v \notin \tilde{\vu} \qquad  \text{and} \qquad 
    c(u_i) \neq c(u_j) \quad \forall u_i, u_j \in \tilde{\vu}, u_i \neq u_j.
\end{align*}
Then, Theorem 5.1, \citet{huang2024link} shows that $\hrwl_1$ with initial coloring $c$ satisfying \emph{generalized target node distinguishability} characterizes the expressive power of HC-MPNNs. 

\textbf{Hypergraph relational local 2-WL test ($\hcwl_2$).} Following \citet{huang2024link}, if we further restrict the test and apply only on knowledge graphs where all the edges have arity $2$ and additionally only with tail prediction, we can instead assign pair-wise coloring $\eta : V \times V \mapsto D$ satisfying \emph{target node distinguishability}, i.e. $\forall u \neq v, \eta(u,u) \neq \eta(u,v)$. Then, we define a \emph{relational hypergraph conditioned local 2-WL test}, denoted as $\hcwl_2$. $\hcwl_2$ iteratively updates binary coloring $\eta$ as follow for all $\ell \geq 0$:
\begin{align*}
   \hcwl_2^{(0)} &= \eta(u,v) \\
   \hcwl_2^{(\ell+1)}(u,v) &= \hash \Big( 
   \hcwl_2^{(\ell)}(u,v), \llbrace \big( 
           \{ 
               (\hcwl_2^{(\ell)}(u, w),j) \!\mid\! (w, j) \!\in\! \gN_{i}(e)
           \},
       \rho(e)
       \big) \mid (e,i) \in E(v) \rrbrace
   \Big)
\end{align*}

\section{Differences between TRIX and MOTIF}
\label{app:trix}

TRIX~\citep{zhang2024trix} is one of the first works to study its expressivity of KGFMs by comparing it to $\ultra$~\citep{galkin2023ultra}. However, our proposed framework $\mgnn$ is inherently incomparable from TRIX in terms of theoretical expressivity. 

TRIX enhances expressivity by introducing a model capable of implicitly counting the number of homomorphism matches in a knowledge graph. This capability allows TRIX to distinguish between relation pairs based on their frequency of occurrence within specific structural patterns. For instance, consider a knowledge graph with edges $$E = \{r_1(u_1,v_1), r_2(v_1,w_1), r_3(u_2,v_2), r_4(v_2,w_2), r_3(u_3,v_3), r_4(v_3,w_3)\}.$$ Here, the pair $(r_3, r_4)$ appears twice in 2-hop paths, while $(r_1, r_2)$ appears only once. TRIX can distinguish between these pairs due to its ability to count such occurrences.

In contrast, $\mgnn$ focuses on the existence of specific higher-order motifs within the knowledge graph, rather than their frequency. This design enables $\mgnn$ to capture complex relational structures that may not be distinguishable through frequency counts alone. For example, consider a knowledge graph with edges $$E = \{r_1(x_1,x_2), r_2(x_1,x_2), r_1(x_3,x_4), r_2(x_3,x_4), r_3(y_1,y_2), r_4(y_1,y_4), r_3(y_3,y_2), r_4(y_3,y_4)\}.$$ In this case, the pairs $(r_1, r_2)$ and $(r_3, r_4)$ may be indistinguishable under TRIX due to isomorphic relation graphs under head-to-head and tail-to-tail mappings. However, $\mgnn$ can distinguish between these pairs by identifying specific motifs, such as the PARA motif with edges $\{\alpha(x,y), \beta(x,y)\}$, which maps only to $(r_1, r_2)$.

Therefore, while TRIX excels at distinguishing relation pairs based on frequency counts of specific patterns, $\mgnn$ provides a complementary approach by focusing on the structural existence of complex motifs. This fundamental difference renders the two frameworks incomparable in terms of expressivity, as each can capture relational distinctions that the other cannot. However, note that incorporating higher-order motifs into TRIX could similarly boost its expressivity, highlighting our framework as a complementary way for improving KGFMs.

\section{A WL test for $\mgnn$}
\label{app:test-motif}

We start by introducing a two-stage WL test that matches the separating power of $\mgnn({\cal F})$, for a set of motifs ${\cal F}$. Using this equivalence, we show Proposition \ref{prop:same_expressive} and Theorem \ref{thm: theo-necesary-condition}, in sections \ref{app:same_expressive} and \ref{app:theo-necesary-condition}, respectively. Characterizing the separating power of $\mgnn({\cal F})$ via a suitable test, allows us to reason about simpler coloring schemes, which, in turn, makes the proofs more manageable. Also, the equivalence between $\mgnn({\cal F})$ and our two-stage WL test may be of independent interest.

We assume a given KG $G=(V,E,R)$. Our test assigns one color to each link $q(u,v)$, for $q\in R$ and $u,v\in V$ (recall that links are not necessarily facts in $G$). In the first stage, we color the nodes of 
$\lift_{{\cal F}}(G)=(V_\lift,E_\lift,R_\lift)$, that is, the set $V_\lift=R$. When coloring a relation $r\in R$, we always condition on another relation $q\in R$. We denote by $\hcwl_{{\cal F}}^{(t)}(q,r)$ the color assigned to $r$, conditioned on $q$, on iteration $t\geq 0$. We assume a predefined number of iterations $T\geq 0$, which will be a parameter of our test. In the second stage, we color the nodes $v$ of the KG $G$, conditioned on a relation $q\in R$ and a source node $u\in V$. This determines the final color $\tcolor_{{\cal F}, T}^{(\ell)}(q(u,v))$ for the link $q(u,v)$. Crucially, the second stage relies on the first stage in that it uses the colors $\hcwl_{{\cal F}}^{(T)}(q,r)$ internally. Roughly speaking, the first-stage coloring follows the coloring strategy of $\hrwl_1$ applied to $\lift_{\cal F}(G)$, while the second-stage coloring follows the strategy of $\rawl_2$, after incorporating the relation encodings obtained in the first stage. (see Section \ref{app: C-MPNNs} for the definition of $\hrwl_1$ and $\rawl_2$). 

Formally, the first-stage colors $\hcwl_{{\cal F}}^{(t)}$ are defined via the following rules:
\begin{align*}
    \hcwl_{{\cal F}}^{(0)}(q,r) & = \mathbbm{1}_{r = q} * 1\\
    \hcwl_{{\cal F}}^{(t+1)}(q,r) & = \hash\Big( \hcwl_{{\cal F}}^{(t)}(q,r), 
     \llbrace \big(\{(\hcwl_{{\cal F}}^{(t)}(q,s), j ) \mid 
(s,j) \in \gN^i(e) \}, \rho(e) \big) \mid (e,i) \in E_{\lift}(r) \rrbrace \Big).
\end{align*}
Recall that 
$$E_{\lift}(r) = 
  \left\{
    (e,i) \mid  e(i)=r, e \in E_\lift, 1 \leq i \leq \mathtt{ar}(\rho(e))
    \right\}.$$
    is the set of edge-position pairs of a node $r$, and $\gN^{i}{(e)}$ is the positional neighborhood of a hyperedge $e$ with respect to a position $i$: $$\gN^{i}(e)=
    \left\{
    (e(j),j) \mid  j \neq i,  1 \leq j \leq \mathtt{ar}(\rho(e))
    \right\}.$$

$\mathbbm{1}_{r = q}$ takes value $1$ if $r=q$, and $0$ otherwise. The function $\hash$ is a fixed injective function taking values in the natural numbers. 

The second-stage colors $\tcolor_{{\cal F}, T}^{(\ell)}$, where $T\geq 0$ is a parameter, are defined via the following rules:
\begin{align*}
    \tcolor_{{\cal F}, T}^{(0)}(q(u,v)) & = \mathbbm{1}_{v = u} * \hcwl_{{\cal F}}^{(T)}(q,q)\\
   \tcolor_{{\cal F}, T}^{(\ell+1)}(q(u,v)) & = \hash\Big(\tcolor_{{\cal F}, T}^{(\ell)}(q(u,v)), \llbrace \big(\tcolor_{{\cal F}, T}^{(\ell)}(q(u,w)),\hcwl_{{\cal F}}^{(T)}(q,r)\big)\mid w \in \mathcal{N}_r(v), r \in R \rrbrace \Big). 
\end{align*}

We start by showing that our coloring scheme upper bounds the separation power of any model in $\mgnn({\cal F})$. For simplicity, we assume that any model in $\mgnn({\cal F})$ has the form $(\enc_{{\rm KG}},(\gF,\enc_\lift))$, where 
\begin{enumerate}
\item the initialization function $\init_1$ used by the relational encoder $\enc_\lift$ is defined as $\init_1(q,r) = \mathbbm{1}_{r = q} * \mathbf{1}^d$, and 
\item the initialization function $\init_2$ used by the entity encoder is defined as $\init_2(u,v,q) = \mathbbm{1}_{v = u} *\vh_{q|q}^{(T)}$. 
\end{enumerate}
This is a common choice, and in particular, it is the choice we take in our basic architecture for the experiments (see Section \ref{app:mgnn_def}). 

\begin{proposition}
\label{app:upperbound-test}
Let $G=(V,E,R)$ be a KG and $(\enc_{{\rm KG}},(\gF,\enc_\lift))$ be a $\mgnn({\cal F})$ instance, where $\enc_\lift$ and $\enc_{{\rm KG}}$ involve $T\geq 0$ and $L\geq 0$ layers, respectively. Then for every pair of links $q(u,v)$ and $q'(u',v')$, we have that
$$\col_{{\cal F},T}^{(L)}(q(u,v)) = \col_{{\cal F},T}^{(L)}(q'(u',v')) \implies \enc_{{\rm KG},u,q}[v] = \enc_{{\rm KG},u',q'}[v'].$$
\end{proposition}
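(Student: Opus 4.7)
The plan is to decouple the argument into two cleanly separated induction steps, mirroring the two-stage structure of both the test and the model. First I would establish a lemma for the relation encoder, showing that for every $t\geq 0$ and all relations $q,r,q',r'\in R$,
\[
\hcwl_{{\cal F}}^{(t)}(q,r) = \hcwl_{{\cal F}}^{(t)}(q',r') \implies \vh_{r|q}^{(t)} = \vh_{r'|q'}^{(t)}.
\]
Then I would use this lemma as a black box to prove the main claim by induction on the layers of the entity encoder. The reason to prove the two statements separately is that the relation encoding is computed to completion (at layer $T$) and then \emph{injected} into the entity encoder's initialization, so the two arguments should interact only through this interface, not be entangled layer by layer.

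For the relation encoder lemma, the base case is immediate from the assumption $\init_1(q,r) = \mathbbm{1}_{r=q}*\mathbf{1}^d$: the two $0$-colors agree iff the two initialization vectors agree. For the inductive step, suppose $\hcwl_{{\cal F}}^{(t+1)}(q,r)=\hcwl_{{\cal F}}^{(t+1)}(q',r')$. Since $\hash$ is injective, the $t$-colors on $r$ and $r'$ (conditioned on $q,q'$) agree, and the multisets of triples $(\{(\hcwl_{{\cal F}}^{(t)}(q,s),j)\mid (s,j)\in \gN^i(e)\},\rho(e))$ indexed by $(e,i)\in E_\lift(r)$ and $(e,i)\in E_\lift(r')$ agree. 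This gives a bijection between $E_\lift(r)$ and $E_\lift(r')$ preserving edge types and, within each matched pair, a further bijection on positional neighborhoods matching $t$-colors pointwise. By the induction hypothesis each matched $\vh_{s|q}^{(t)}$ equals its $q'$-counterpart, hence the multisets fed into $\aggregate_1$ coincide, and applying $\update_1$ to the same arguments yields the same output.

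For the main proposition I would induct on $\ell$. For the base case, $\col_{{\cal F},T}^{(0)}(q(u,v)) = \col_{{\cal F},T}^{(0)}(q'(u',v'))$ forces either (i) both indicators $\mathbbm{1}_{v=u}$ and $\mathbbm{1}_{v'=u'}$ to vanish, in which case $\init_2(u,v,q)=0=\init_2(u',v',q')$, or (ii) both to equal one, in which case $\hcwl_{{\cal F}}^{(T)}(q,q) = \hcwl_{{\cal F}}^{(T)}(q',q')$; by the lemma above, $\vh_{q|q}^{(T)} = \vh_{q'|q'}^{(T)}$, whence $\init_2(u,v,q) = \init_2(u',v',q')$. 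The mixed cases are impossible because $\hash$ may be chosen so that no hashed color equals $0$. The inductive step is once again standard: equality of $\col_{{\cal F},T}^{(\ell+1)}$-colors gives, via injectivity of $\hash$, equality of the $\ell$-colors at $q(u,v),q'(u',v')$ and of the multisets of pairs $(\col_{{\cal F},T}^{(\ell)}(q(u,w)),\hcwl_{{\cal F}}^{(T)}(q,r))$ indexed by $(w,r)$ with $w\in\gN_r(v)$. Matching these pairwise and invoking both the outer induction hypothesis and the relation encoder lemma, the multiset arguments to $\aggregate_2$ coincide, so $\update_2$ produces identical outputs.

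The main obstacle is conceptually minor but technically fiddly: one must check that the indicator-based initializations do not accidentally collide (handled by choosing $\hash$ to avoid $0$), and that the injectivity of $\hash$ on multisets lifts to a bijection that simultaneously aligns neighbors and their relation encodings. Once this bookkeeping is in place, both inductions are routine WL-to-GNN simulations and the two stages combine cleanly via the interface at layer $T$.
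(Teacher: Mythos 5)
Your proposal is correct and follows essentially the same route as the paper's proof: an inner induction establishing that equal $\hcwl_{\cal F}^{(t)}$-colors force equal relation-encoder states (the paper's claim ($\dagger$)), followed by an outer induction on the entity-encoder layers that invokes this lemma at the initialization and in each aggregation step, with the same device of assuming $\hash$ avoids $0$ to rule out mixed cases in the base case. No substantive differences.
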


\begin{proof}
We show first the following claim:

($\dagger$) for every quadruple of relations $q,r,q',r'\in R$, we have that
$$\hcwl_{{\cal F}}^{(T)}(q,r) = \hcwl_{{\cal F}}^{(T)}(q',r') \implies \enc_{\lift,q}[r] = \enc_{\lift,q'}[r'].$$

Recall that $\enc_{\lift,p}[s]$ is determined by the sequence of vectors $\vh_{s|p}^{(t)}$ and that $\enc_{\lift,p}[s]= \vh_{s|p}^{(T)}$. 

Take a quadruple $q,r,q',r'\in R$. We prove by induction that for every $0\leq t\leq T$, we have that 
$$\hcwl_{{\cal F}}^{(t)}(q,r) = \hcwl_{{\cal F}}^{(t)}(q',r') \implies \vh_{r|q}^{(t)} = \vh_{r'|q'}^{(t)}.$$

For the base case, take $t=0$. Assume that $\hcwl_{{\cal F}}^{(0)}(q,r) = \hcwl_{{\cal F}}^{(0)}(q',r')$. By the definition of $\hcwl_{{\cal F}}^{(0)}$, either $q=r$ and $q'=r'$, or $q\neq r$ and $q'\neq r'$. In either case, we obtain $\vh_{r|q}^{(0)} = \mathbbm{1}_{r = q} * \mathbf{1}^d = \mathbbm{1}_{r' = q'} * \mathbf{1}^d=\vh_{r'|q'}^{(0)}$. 

For the inductive case, suppose that 
$$\hcwl_{{\cal F}}^{(t)}(q,r) = \hcwl_{{\cal F}}^{(t)}(q',r') \implies \vh_{r|q}^{(t)} = \vh_{r'|q'}^{(t)}$$
and assume that $\hcwl_{{\cal F}}^{(t+1)}(q,r) = \hcwl_{{\cal F}}^{(t+1)}(q',r')$. By the definition of $\hcwl_{{\cal F}}$, it follows that
\begin{align*}
\hcwl_{{\cal F}}^{(t)}(q,r) & = \hcwl_{{\cal F}}^{(t)}(q',r') \\
\llbrace \big(\{(\hcwl_{{\cal F}}^{(t)}(q,s), j ) \mid 
(s,j) \in \gN^i(e) \}, \rho(e) \big) \mid (e,i) \in E_{\lift}(r) \rrbrace & =\\
\llbrace \big(\{(\hcwl_{{\cal F}}^{(t)}(q',s'), j ) & \mid 
(s',j) \in \gN^i(e) \}, \rho(e) \big) \mid (e,i) \in E_{\lift}(r') \rrbrace.
\end{align*}
We can apply the inductive hypothesis to the first equation and obtain that $\vh_{r|q}^{(t)} = \vh_{r'|q'}^{(t)}$. The second equation together with the inductive hypothesis implies that 
\begin{align*}
\llbrace \big(\{(\vh_{s|q}^{(t)}, j ) \mid 
(s,j) \in \gN^i(e) \}, \rho(e) \big) \mid (e,i) \in E_{\lift}(r) \rrbrace & =\\
\llbrace \big(\{(\vh_{s'|q'}^{(t)}, j ) & \mid 
(s',j) \in \gN^i(e) \}, \rho(e) \big) \mid (e,i) \in E_{\lift}(r') \rrbrace.
\end{align*}
As a consequence, we have that
\begin{align*}
M = \llbrace \mes_{\rho(e)} \big(\{(\vh_{s|q}^{(t)}, j ) \mid 
(s,j) \in \gN^i(e) \} \big) \mid (e,i) \in E_{\lift}(r) \rrbrace & = \\
\llbrace \mes_{\rho(e)} \big(\{(\vh_{s'|q}^{(t)}, j ) & \mid 
(s',j) \in \gN^i(e) \} \big) \mid (e,i) \in E_{\lift}(r) \rrbrace = M'
\end{align*}
where $\mes_{\rho(e)}$ is the motif-specific message function used in the $(t+1)$-th layer. It follows that 
$$\vh_{r|q}^{(t+1)} = \update_1 \big( \vh_{r|q}^{(t)},  \aggregate_1(M) \big) =  \update_1 \big( \vh_{r'|q'}^{(t)},  \aggregate_1(M') \big) =  \vh_{r'|q'}^{(t+1)}$$
as required. ($\update_1$ and $\aggregate_1$ are the update and aggregate functions in the $(t+1)$-th layer.)

Take a pair of links $q(u,v)$ and $q'(u',v')$. Now we are ready to show 
$$\col_{{\cal F},T}^{(L)}(q(u,v)) = \col_{{\cal F},T}^{(L)}(q'(u',v')) \implies \enc_{{\rm KG},u,q}[v] = \enc_{{\rm KG},u',q'}[v'].$$
Recall that $\enc_{{\rm KG},x,p}[y]$ is determined by the sequence of vectors $\vh_{y|x,p}^{(\ell)}$ and that $\enc_{{\rm KG},x,p}[y]=\vh_{y|x,p}^{(L)}$.

We show by induction that for every $0\leq \ell\leq L$, it holds that
$$\col_{{\cal F},T}^{(\ell)}(q(u,v)) = \col_{{\cal F},T}^{(\ell)}(q'(u',v')) \implies \vh_{v|u,q}^{(\ell)} = \vh_{v'|u',q'}^{(\ell)}.$$

In the base case $\ell=0$. Suppose $\col_{{\cal F},T}^{(0)}(q(u,v)) = \col_{{\cal F},T}^{(0)}(q'(u',v'))$, i.e.,  $\mathbbm{1}_{v = u} * \hcwl_{{\cal F}}^{(T)}(q,q) = \mathbbm{1}_{v' = u'} * \hcwl_{{\cal F}}^{(T)}(q',q')$. Note that without loss of generality, we can assume that $\hash$ never takes the value $0$, and hence $\hcwl_{{\cal F}}^{(T)}(q,q)\neq 0$ and $\hcwl_{{\cal F}}^{(T)}(q',q')\neq 0$. We then have two cases:
\begin{enumerate}
\item $v\neq u$ and $v'\neq u'$. We obtain $\vh_{v|u,q}^{(0)} = \vh_{v'|u',q'}^{(0)}$ as required.
\item $v= u$, $v'= u'$ and $\hcwl_{{\cal F}}^{(T)}(q,q) = \hcwl_{{\cal F}}^{(T)}(q',q')$. By the claim ($\dagger$), we have that $\vh_{q|q}^{(T)} = \vh_{q'|q'}^{(T)}$. We conclude that $\vh_{v|u,q}^{(0)} = \vh_{v'|u',q'}^{(0)}$ as desired.
\end{enumerate}

For the inductive step, suppose that 
$$\col_{{\cal F},T}^{(\ell)}(q(u,v)) = \col_{{\cal F},T}^{(\ell)}(q'(u',v')) \implies \vh_{v|u,q}^{(\ell)} = \vh_{v'|u',q'}^{(\ell)}.$$
Assume that $\col_{{\cal F},T}^{(\ell+1)}(q(u,v)) = \col_{{\cal F},T}^{(\ell+1)}(q'(u',v'))$. It follows that
\begin{align*}
\col_{{\cal F},T}^{(\ell)}(q(u,v)) & = \col_{{\cal F},T}^{(\ell)}(q'(u',v'))\\
 \llbrace \big(\tcolor_{{\cal F}, T}^{(\ell)}(q(u,w)),\hcwl_{{\cal F}}^{(T)}(q,r)\big)\mid w \in \mathcal{N}_r(v), r \in R \rrbrace & =  \llbrace \big(\tcolor_{{\cal F}, T}^{(\ell)}(q'(u',w)),\hcwl_{{\cal F}}^{(T)}(q',r)\big)\mid w \in \mathcal{N}_r(v'), r \in R \rrbrace. 
\end{align*}

Using the inductive hypothesis and claim ($\dagger$), we obtain that 
\begin{align*}
\vh_{v|u,q}^{(\ell)} & = \vh_{v'|u',q'}^{(\ell)}\\
 \llbrace \big(\vh_{w|u,q}^{(\ell)},\vh_{r|q}^{(T)}\big)\mid w \in \mathcal{N}_r(v), r \in R \rrbrace & =  \llbrace \big(\vh_{w|u',q'}^{(\ell)},\vh_{r|q'}^{(T)}\big)\mid w \in \mathcal{N}_r(v'), r \in R \rrbrace. 
\end{align*}
It follows that 
 $$ N = \llbrace \mes\big(\vh_{w|u,q}^{(\ell)},\enc_{\lift,q}[r]\big)\mid w \in \mathcal{N}_r(v), r \in R \rrbrace  =  \llbrace \mes\big(\vh_{w|u',q'}^{(\ell)},\enc_{\lift,q'}[r]\big)\mid w \in \mathcal{N}_r(v'), r \in R \rrbrace = N'.$$
 We conclude that 
 $$\vh_{v|u,q}^{(\ell+1)} = \update_2 \big( \vh_{v|u,q}^{(\ell)},  \aggregate_2(N) \big) =  \update_2 \big( \vh_{v'|u',q'}^{(\ell)},  \aggregate_2(N') \big) =  \vh_{v'|u',q'}^{(\ell+1)}.$$
as required. ($\update_2$ and $\aggregate_2$ are the update and aggregate functions in the $(\ell+1)$-th layer.)
\end{proof}

We now show that $\mgnn({\cal F})$ can be as powerful as our test, regarding separating links. 

\begin{proposition}
\label{app:lowerbound-test}
Let $G=(V,E,R)$ be a KG, and let $T,L\geq 0$ be natural numbers. Then there exists an instance  $(\enc_{{\rm KG}},(\gF,\enc_\lift))$ of $\mgnn({\cal F})$ such that for every pair of links $q(u,v)$ and $q'(u',v')$, we have that
$$\col_{{\cal F},T}^{(L)}(q(u,v)) = \col_{{\cal F},T}^{(L)}(q'(u',v')) \iff \enc_{{\rm KG},u,q}[v] = \enc_{{\rm KG},u',q'}[v'].$$
\end{proposition}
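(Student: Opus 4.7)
The plan is to prove the $\Leftarrow$ direction (the $\Rightarrow$ direction being immediate from Proposition \ref{app:upperbound-test}) by an explicit construction: I will build an $\mgnn(\gF)$ instance whose relation encoder tracks the coloring $\hcwl_\gF^{(\cdot)}$ and whose entity encoder then tracks $\col_{\gF,T}^{(\cdot)}$, both in an injective way. Since the KG $G=(V,E,R)$ is finite, only finitely many colors ever appear, so standard multiset injectivity results (e.g.\ of the GIN/Deep Sets kind) apply.

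Concretely, the first step is to define the relation encoder $\enc_\lift$ layer by layer, proving by induction on $t$ that there exist parameter choices such that $\vh_{r|q}^{(t)} = \vh_{r'|q'}^{(t)}$ iff $\hcwl_\gF^{(t)}(q,r) = \hcwl_\gF^{(t)}(q',r')$. The base case $t=0$ holds by the prescribed $\init_1$ and the definition of $\hcwl_\gF^{(0)}$, both of which collapse to two classes ($r=q$ vs.\ $r\neq q$). For the inductive step, writing each layer's update as $\update_1(\vh,\aggregate_1(M))$ where $M$ is the motif-specific message multiset appearing in the definition of $\enc_\lift$, I will select motif-specific $\mes_{\rho(e)}$, a sum aggregator, and an MLP update of sufficient dimension so that the composition injectively encodes the pair $(\hcwl_\gF^{(t)}(q,r), \{\!\!\{\ldots\}\!\!\})$. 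This is precisely the situation covered by injective-multiset lemmas; the hash invoked in $\hcwl_\gF^{(t+1)}$ can be matched by such an injective realization because the multisets lie in a finite set.

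The second step applies the analogous inductive argument to the entity encoder $\enc_{\rm KG}$, showing $\vh_{v|u,q}^{(\ell)} = \vh_{v'|u',q'}^{(\ell)}$ iff $\col_{\gF,T}^{(\ell)}(q(u,v))=\col_{\gF,T}^{(\ell)}(q'(u',v'))$. For the base case $\ell=0$, the prescribed initialization $\init_2(u,v,q)=\mathbbm{1}_{v=u}\cdot \vh_{q|q}^{(T)}$ mirrors $\col_{\gF,T}^{(0)}$, provided we ensure $\vh_{q|q}^{(T)}\neq 0$ for every $q\in R$; this is a harmless normalization of the $\enc_\lift$ construction of step one, and by that step $\vh_{q|q}^{(T)}$ injectively encodes $\hcwl_\gF^{(T)}(q,q)$. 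The inductive step uses the same Deep Sets / GIN style construction: pick $\mes$ and an MLP $\update_2$ so that $\update_2(\vh_{v|u,q}^{(\ell)},\aggregate_2(N))$ injectively determines the pair $(\vh_{v|u,q}^{(\ell)},N)$, which by the inductive hypothesis and step one is in bijection with the pair $(\col_{\gF,T}^{(\ell)}(q(u,v)), \{\!\!\{(\col_{\gF,T}^{(\ell)}(q(u,w)),\hcwl_\gF^{(T)}(q,r)) \mid w\in \mathcal{N}_r(v), r\in R\}\!\!\})$ used by $\col_{\gF,T}^{(\ell+1)}$.

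The main obstacle, as is typical in WL--GNN equivalence proofs, is coordinating the injective multiset realizations across layers while respecting the architectural restrictions of $\mgnn$: the messages must be motif-specific (for $\enc_\lift$) and must interact with positions via $\gN^i(e)$, and in the entity encoder the relation input to $\mes$ is forced to be the output $\enc_{\lift,q}[r]$ of step one. Handling the positional information within $\mes_{\rho(e)}$ is dealt with by letting $\mes_{\rho(e)}$ embed its inputs through a position-indexed injective encoding (which exists because each $\rho(e)\in\gF$ has a fixed arity and the color set is finite). Everything else reduces to standard injective-MLP arguments, completing the construction and hence the $\Leftarrow$ direction.
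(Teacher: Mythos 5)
Your proposal is correct and follows essentially the same route as the paper: choose the update, aggregation, and message functions of both encoders to be injective (realizable via GIN/Deep-Sets-style lemmas over the finite color sets), then induct on layers to show the feature maps track $\hcwl_{\gF}^{(t)}$ and $\col_{\gF,T}^{(\ell)}$ exactly, with the converse direction supplied by Proposition~\ref{app:upperbound-test}. Your explicit handling of the nonzero normalization of $\vh_{q|q}^{(T)}$ and of the position-indexed encodings inside $\mes_{\rho(e)}$ matches the paper's treatment.
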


\begin{proof}
It suffices to choose $(\enc_{{\rm KG}},(\gF,\enc_\lift))$ such that the functions $\update_1^{(t)}$, $\aggregate_1^{(t)}$ and $\mes_{\rho(e)}^{(t)}$ defining $\enc_\lift$, and the functions $\update_2^{(\ell)}$, $\aggregate_2^{(\ell)}$ and $\mes^{(\ell)}$ defining $\enc_{{\rm KG}}$, are injective. Indeed, suppose this condition holds. We prove first by induction that for every $q,r,q',r'\in R$ and $0\leq t\leq T$, 
$$\vh_{r|q}^{(t)} = \vh_{r'|q'}^{(t)}\implies \hcwl_{{\cal F}}^{(t)}(q,r) = \hcwl_{{\cal F}}^{(t)}(q',r').$$
For $t=0$, the result follows by our assumption on $\init_1$. Suppose that $\vh_{r|q}^{(t+1)} = \vh_{r'|q'}^{(t+1)}$. As $\update_1^{(t)}$ is injective, we have that $\vh_{r|q}^{(t)} = \vh_{r'|q'}^{(t)}$ and $\aggregate_1^{(t)}(M) = \aggregate_1^{(t)}(M')$. From the former and by induction, we obtain $\hcwl_{{\cal F}}^{(t)}(q,r) = \hcwl_{{\cal F}}^{(t)}(q',r')$. From the later, as  $\aggregate_1^{(t)}$ is injective, we obtain $M=M'$. Using the fact that $\mes_{\rho(e)}^{(t)}$ is injective, we conclude that 
$$\llbrace \{(\vh_{s|q}^{(t)}, j ) \mid 
(s,j) \in \gN^i(e) \}  \mid (e,i) \in E_{\lift}(r) \rrbrace = \llbrace \{(\vh_{s|q'}^{(t)}, j ) \mid 
(s,j) \in \gN^i(e) \}  \mid (e,i) \in E_{\lift}(r') \rrbrace.$$
By induction, we have
$$\llbrace \{(\hcwl_{{\cal F}}^{(t)}(q,s), j ) \mid 
(s,j) \in \gN^i(e) \}  \mid (e,i) \in E_{\lift}(r) \rrbrace = \llbrace \{(\hcwl_{{\cal F}}^{(t)}(q',s), j ) \mid 
(s,j) \in \gN^i(e) \}  \mid (e,i) \in E_{\lift}(r') \rrbrace.$$
This implies that $\hcwl_{{\cal F}}^{(t+1)}(q,r) = \hcwl_{{\cal F}}^{(t+1)}(q',r')$ as required. 
Now we prove by induction that for every pair of links $q(u,v)$ and $q'(u',v')$, and $0\leq \ell \leq L$, it holds that
$$\vh_{v|u,q}^{(\ell)} = \vh_{v'|u',q'}^{(\ell)}\implies \col_{{\cal F},T}^{(\ell)}(q(u,v)) = \col_{{\cal F},T}^{(\ell)}(q'(u',v')).$$
Note that this implies that 
$$\col_{{\cal F},T}^{(L)}(q(u,v)) = \col_{{\cal F},T}^{(L)}(q'(u',v')) \iff \enc_{{\rm KG},u,q}[v] = \enc_{{\rm KG},u',q'}[v'].$$
The implication holds for $\ell=0$ by the assumption on $\init_2$ and the fact that $\vh_{q|q}^{(T)}=\vh_{q'|q'}^{(T)}\implies \hcwl_{{\cal F}}^{(T)}(q,q) = \hcwl_{{\cal F}}^{(T)}(q',q')$.
Assume that $\vh_{v|u,q}^{(\ell+1)} = \vh_{v'|u',q'}^{(\ell+1)}$. As $\update_2^{(t)}$ is injective, we have that $\vh_{v|u,q}^{(\ell)} = \vh_{v'|u',q'}^{(\ell)}$ and $\aggregate_2^{(\ell)}(N) = \aggregate_2^{(\ell)}(N')$. From the former and by induction, we obtain $\col_{{\cal F},T}^{(\ell)}(q(u,v)) = \col_{{\cal F},T}^{(\ell)}(q'(u',v'))$. From the later, since  $\aggregate_2^{(\ell)}$ is injective, we have $N=N'$. As $\mes^{(\ell)}$ is injective, we obtain that 
$$ \llbrace \big(\vh_{w|u,q}^{(\ell)},\vh_{r|q}^{(T)}\big)\mid w \in \mathcal{N}_r(v), r \in R \rrbrace =  \llbrace \big(\vh_{w|u',q'}^{(\ell)},\vh_{r|q'}^{(T)}\big)\mid w \in \mathcal{N}_r(v'), r \in R \rrbrace.$$ 
By induction, we have
$$ \llbrace \big(\col_{{\cal F},T}^{(\ell)}(q(u,v)),\hcwl_{{\cal F}}^{(T)}(q,r)\big)\mid w \in \mathcal{N}_r(v), r \in R \rrbrace =  \llbrace \big(\col_{{\cal F},T}^{(\ell)}(q'(u',v')),\hcwl_{{\cal F}}^{(T)}(q',r)\big)\mid w \in \mathcal{N}_r(v'), r \in R \rrbrace.$$ 
This implies that $\col_{{\cal F},T}^{(\ell+1)}(q(u,v)) = \col_{{\cal F},T}^{(\ell+1)}(q'(u',v'))$ as desired. 

It is easy to find injective funtions $\update_1^{(t)}$, $\aggregate_1^{(t)}$,  $\mes_{\rho(e)}^{(t)}$, $\update_2^{(\ell)}$, $\aggregate_2^{(\ell)}$, $\mes^{(\ell)}$. For instance $\update_1^{(t)}$ and $\update_2^{(\ell)}$ can be vector concatenation, and $\aggregate_1^{(t)}$,  $\mes_{\rho(e)}^{(t)}$, $\aggregate_2^{(\ell)}$, $\mes^{(\ell)}$ can be chosen using Lemma 5 from \citet{xu2018how} (see also Lemma VIII.5 from \citet{GroheLogicGNN}). Another alternative is to choose $\update_1^{(t)}$, $\aggregate_1^{(t)}$,  $\mes_{\rho(e)}^{(t)}$ according to Theorem 4.1 in \citet{huang2024link}, and $\update_2^{(\ell)}$, $\aggregate_2^{(\ell)}$, $\mes^{(\ell)}$ according to Theorem 5.1 in \citet{huang2023theory}.

\end{proof}

As a direct corollary of Propositions \ref{app:upperbound-test} and \ref{app:lowerbound-test}, we obtain:

\begin{corollary}
\label{app:coro-test}
Let ${\cal F}$ and ${\cal F}'$ be two set of motifs. Then ${\cal F}\preceq {\cal F}'$ if and only if for every KG $G=(V,E,R)$ and link $q(u,v)$, $q'(u',v')$, whenever $\col_{{\cal F}, T}^{(L)}(q(u,v))=\col_{{\cal F}, T}^{(L)}(q'(u',v'))$ for every $T,L\geq 0$, then $\col_{{\cal F'}, T}^{(L)}(q(u,v))=\col_{{\cal F'}, T}^{(L)}(q'(u',v'))$ for every $T,L\geq 0$. 
\end{corollary}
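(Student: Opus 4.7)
The plan is to derive Corollary \ref{app:coro-test} as a direct consequence of Propositions \ref{app:upperbound-test} and \ref{app:lowerbound-test}, which together establish a tight correspondence between the WL-style coloring $\col_{{\cal F}, T}^{(L)}$ and the set of encodings produced by instances of $\mgnn({\cal F})$. Concretely, I plan to prove the two implications of the equivalence separately, translating the definition of $\F \preceq \F'$ back and forth through the two propositions, being careful to match the layer parameters $T$ (for the relation encoder) and $L$ (for the entity encoder) with the iteration indices used in the WL tests.

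For the ``only if'' direction, assume $\F \preceq \F'$ and suppose that $\col_{{\cal F}, T}^{(L)}(q(u,v))=\col_{{\cal F}, T}^{(L)}(q'(u',v'))$ for every $T,L\ge 0$. I would first invoke Proposition \ref{app:upperbound-test}: for each $T,L$ this coloring equality implies that every instance in $\mgnn({\cal F})$ with these layer counts produces the same encoding on the two links; since this holds for all $T,L$, every instance of $\mgnn({\cal F})$ collapses the two links. By the definition of $\preceq$, the same is true for $\mgnn({\cal F}')$. Then I would apply Proposition \ref{app:lowerbound-test}: for each fixed $T,L$ it exhibits an $\mgnn({\cal F}')$ instance whose encodings coincide on a pair of links \emph{iff} the corresponding $\col_{{\cal F}', T}^{(L)}$ colors coincide. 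Since this particular instance must collapse the pair, we conclude $\col_{{\cal F}', T}^{(L)}(q(u,v))=\col_{{\cal F}', T}^{(L)}(q'(u',v'))$ for every $T,L\ge 0$.

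For the ``if'' direction, assume the coloring-level refinement and suppose that every instance of $\mgnn({\cal F})$ gives the same encoding to $q(u,v)$ and $q'(u',v')$. I would use Proposition \ref{app:lowerbound-test} in the reverse way: for every choice of $T,L$ there exists an instance in $\mgnn({\cal F})$ separating these links precisely when the colors $\col_{{\cal F}, T}^{(L)}$ differ on them; hence by hypothesis the colors must agree for all $T,L$. By the assumed WL implication, the colors under $\F'$ also agree for all $T,L$, and then Proposition \ref{app:upperbound-test} gives that every instance of $\mgnn({\cal F}')$ encodes the two links identically, which is exactly $\F \preceq \F'$.

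The argument is essentially a bookkeeping exercise and I do not anticipate any deep obstacle, since both propositions have already done the technical work. The one delicate point to keep explicit is the ``for every $T,L$'' quantification: the definition of $\preceq$ ranges over \emph{all} instances, which have arbitrary layer counts, so the bridge through Propositions \ref{app:upperbound-test}--\ref{app:lowerbound-test} must be applied uniformly over $T$ and $L$, and it is important to note that Proposition \ref{app:lowerbound-test} yields a \emph{separate} separating instance for each $(T,L)$ rather than a single one working for all parameters simultaneously.
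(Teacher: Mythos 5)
Your proposal is correct and is precisely the derivation the paper intends: the paper presents this corollary as "a direct corollary of Propositions \ref{app:upperbound-test} and \ref{app:lowerbound-test}" without spelling out the bookkeeping, and your two-directional argument, including the careful handling of the universal quantification over $T,L$ and the observation that Proposition \ref{app:lowerbound-test} supplies a separate witness instance for each $(T,L)$, fills in exactly that bookkeeping.
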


\section{Missing proofs in the paper}

\subsection{Proof of Proposition \ref{prop:same_expressive}}
\label{app:same_expressive}
\emph{
Let
$P = (G_M,\bar r)$ and  $P' = (G_M',\bar r')$ be two graph motifs such that $G_M$ is isomorphic to $G_M'$. Then, for any set $\F$ of graph motifs:  
$$(\F \cup\{P\}) \, \sim \, (\F \cup\{P'\}) \, \sim \, (\F \cup\{P,P'\}).$$}

\begin{proof}
We only show that $(\F \cup\{P\}) \sim 
(\F \cup\{P,P'\})$, the part that $(\F \cup\{P\}) \sim 
(\F \cup\{P'\})$ is analogous. 

For readability, let us write $\F_1 = \F \cup\{P\}$ and $\F_2 = \F \cup\{P,P'\}$. 
Further, $E_1$ and $E_2$ correspond to the edges of $\lift_{\F_1}(G)$ and $\lift_{\F_2}(G)$, respectively. 

We make use of Corollary \ref{app:coro-test}, 
and show that for every KG $G=(V,E,R)$, every pair of links $q(u,v), q'(u',v')$ and every $T,L\geq 0$, 
$$\col_{{\cal F}_1,T}^{(L)}(q(u,v)) = \col_{{\cal F}_1,T}^{(L)}(q(u',v')) \iff \col_{{\cal F}_2,T}^{(L)}(q(u,v)) = \col_{{\cal F}_2,T}^{(L)}(q(u',v')).$$

By definition of the second stage colors, all we need to show is that for every $s,s'\in R$,

$$\hcwl_{{\cal F}_1}^{(T)}(q,s) = \hcwl_{{\cal F}_1}^{(T)}(q',s') \iff\hcwl_{{\cal F}_2}^{(T)}(q,s) = \hcwl_{{\cal F}_2}^{(T)}(q',s').$$

Once again, we only need to show one direction, the other one is analogous. 
We prove the $(\Rightarrow)$ direction by induction on $0\leq t\leq T$. The base case is immediate because the inizialization does not depend on $\F_1$ or $\F_2$. Assume then this condition holds for all $t < T$. On step $t$, let $q,s$ and $q',s'$ be pairs satisfying the left hand side. 
According to the coloring step, $\hcwl_{{\cal F}_1}^{(t)}(q,s)$ corresponds to the hash of a pair given by the previous color $\hcwl_{{\cal F}_2}^{(t-1)}(q,s)$ and the multiset of the colors of all neighbours of $q$ in any tuple in any relation in $\F_1$. For $\F_2$, the color is the same except we include 
$$\llbrace \big(\{(\hcwl_{{\cal F}_2}^{(t)}(q,x), j ) \mid 
(x,j) \in \gN^i(P') \}, \rho(e) \big) \mid (P',i) \in E_{2}(s) \rrbrace,$$

where $E_2(s)$ is the set of pairs $(e,i)$ where $e$ is an edge in $E_2$ and $e(i) = s$. 
Hence, since this is the only term that differs from the coloring according to $\F_1$, to prove the right hand side, it suffices to show that the above multiset is the same as 
$$\llbrace \big(\{(\hcwl_{{\cal F}_2}^{(t)}(q',x'), j' ) \mid 
(x',j') \in \gN^{i'}(P') \}, \rho(e) \big) \mid (P',i') \in E_{2}(s') \rrbrace,$$

Now, because of isomorphism, there is a bijection $f$ that maps each index position $i$ of $\bar r$ to a position $f(i)$ of $\bar r'$, in such a way that a pair $(P,i)$ is in $E_1(s)$ if and only if $(P',f(i))$ is in $E_2(s)$. Further, for every fact $P(r_1,\dots,r_n)$ in $\lift_{\F_1}(G)$, there is a fact $P(r'_1,\dots,r'_n)$ such that $r_i = r'_f(i)$. 
Hence, for every pair $(x,j) \in \gN^k(P')$, $(P',k) \in E_2(s)$, there is a pair $(x,p)$ in $\gN^{f^{-1}(k)}(P)$, $(P,f^{-1(i)}) \in E_1$. By induction, since 
$\hcwl_{{\cal F}_1}^{(t)}(q,s) = \hcwl_{{\cal F}_1}^{(t)}(q',s')$, the color of $(q,x)$ by $\hcwl_{{\cal F}_1}^{(t-1)}$ must then correspond to the color of some node $(q',x')$ 
in $\gN^{f^{-1}(k)}(P)$, $(P,f^{-1(i)}) \in E_1(s')$. Once again, by isomorphism, we find the pair 
$(x',j) \in \gN^k(P')$, $(P',k) \in E_2(s')$, where by induction the color of $(q',x')$ by $\hcwl_{{\cal F}_2}^{(t-1)}$ now corresponds to that of $(q,x)$. This proves a bijection between all elements in the multiset, from which we directly obtain that 
$\hcwl_{{\cal F}_2}^{(t)}(q,s) = \hcwl_{{\cal F}_2}^{(t)}(q',s')$.
\end{proof}

\subsection{Proof of Proposition \ref{prop:rp-core}}
\emph{
For every KG $G$, up to isomorphism, 
there is a unique KG $H$ such that: 
\begin{itemize} 
\item 
$H$ is a relation-preserving core, and
\item 
there are relation-preserving homomorphisms from $G$ to $H$ and from $H$ to $G$. 
\end{itemize} }

\begin{proof}
Let us assume there are two such cores $H_1$ and $H_2$. Since there are relation-preserving homomorphisms from $G$ to $H_1$, from $G$ to $H_2$, from $H_1$ to $G$ and from $H_2$ to $G$, componing these homomorphisms establishes there are relation-preserving homomorphisms $f$ from $H_1$ to $H_2$ and $g$ from $H_2$ to $H_1$; note that the composition of two relation-preserving homomorphisms is also a relation-preserving homomorphism. 

But now $f\circ g$ is a relation-preserving homomorphism from $H_1$ to $H_1$ and $g\circ f$ is a homomorhpsim from $H_2$ to $H_2$. This means that 
$f\circ g$ and $g\circ f$ are onto. Moreover, since they are endomorphisms, they must be an automorphism. Hence, $f$ and $g$ are both onto, which means that $H_1$ and $H_2$ are isomorphic. 

\end{proof}

\subsection{Proof of Theorem \ref{thm: theo-necesary-condition}}
\label{app:theo-necesary-condition}

\emph{Let ${\cal F}, {\cal F}'$ be two sets of motifs, and assume that 
$\F \preceq \F'$. Then, for every non-trivial motif $P'\in \gF'$ there is a motif $P \in \gF$ such that $P \conto P'$.}

\begin{proof}
We prove the converse of this statement: assume that there is a non-trivial motif $P' \in \gF'$ that does not satisfy the conditions of the Theorem, that is, that there is no motif $P\in \gF$ such that $P \conto P'$. We show how to construct a KG $G$ and a pair of links $q(u,v)$ and $q'(u',v')$ such that 
$\col_{{\cal F},T}^{(L)}(q(u,v)) = \col_{{\cal F},T}^{(L)}(q(u',v'))$ for every $T,L\geq 0$, 
but $\col_{{\cal F}',T}^{(L)}(q(u,v)) \neq \col_{{\cal F'},T}^{(L)}(q(u',v'))$, for some $T,L\geq 0$. This, by Corollary \ref{app:coro-test}, suffices for the proof. 

Let us write $P' = (G_{P'}, \bar r')$. 
We split the proofs into two cases, depending on whether there exists a motif $P'$ in $\gF$ such that there is no homomorphism from any motif $P$ in $\gF$ to $P'$. 

\textbf{Case 1}. Assume that $\gF'$ contains a motif $P'$ such that there is no node-relation homomorphism from any motif $P$ in $\gF$ to $P'$. Notice that this implies that all motifs in $\gF$ cannot be mapped to the trivial motif with just the fact $Y(x,z)$, as otherwise, we would have motifs in $\gF$ that can be mapped to any edge in $P'$, resulting in a node-relation homomorphism to motif $P'$. 

Let $y$ be an arbitrary relation in $P'$ and $s$ a fresh relation. 
Construct $G$ by taking $G_{P'}$ plus two  triples $y,(a,b)$ and $s(c,d)$, with $a,b,c,d$ fresh nodes.
We claim that $\col_{\F'}(G,y(a,b)) \neq \col_{\F'}(G,s(c,d))$ but $\col_{\gF}(G,y(a,b)) = \col_{\gF}(G,s(c,d))$, which is what we were looking for. 

By our assumptions it follows that $\lift_\F(G)$ is a graph without relations, as there are no homomorphisms from any motif in $\F$ to $G$. Therefore, we have that $\hcwl_y(y) = \hcwl_s(s)$ over $\lift_\F(G)$: 
in both cases the $\hcwl$ coloring involves a single node with the same color, not connected to any other node, so we stop after just one iteration. In turn, this entails that $\col_\F(y(a,b)) = \col_F(s(c,d))$, as once again nodes $a,b$ and $c,d$ are not connected to any other part in $G$.

On the other hand, $\lift_{\F'}(G)$ contains at least one tuple with distinct elements in the relation $R_{P'}$ corresponding to motif $P'$, consisting of $\bar r'$: this comes from the identity homomorphism from $P'$ to $G_{P'}$. But since $P'$ is non-trivial, there cannot be a relation-preserving homomorphism $h$ from $P'$ to the graph formed just by fact $s(u,v)$, as this would entail that the relation preserving core of $P'$ is indeed 
$s(u,v)$, and hence $P'$ would be trivial, which we assumed otherwise. Then, there cannot be a tuple of different relations that includes $s$ in relation $R_{P'}$ in  $\lift_{\F'}(G)$. This entails that the color $\hcwl_y(y)$ and $\hcwl_s(s)$ over $\lift_{\F'}(G)$ are already different after the first  step. 
Hence, $\col_{\F'}(y(a,b)) \neq \col_{\F'}(s(c,d))$ as promised.

\textbf{Case 2}. Next assume that there is a motif $P'$ in $F'$ such that there are no core-onto homomorphisms from any motif $P$ in $\F$ to $P$ (but there may be regular node-relation homomorphisms). 

Let then $H$ contain all different motifs (up to isomorphism) $h(P) = (h(\bar r),h(G_P))$\footnote{Slightly abusing the notation, we use $h(G_P)$ to denote the motif in which every fact $b(a,c)$ is replaced by $h(b)((h(a),h(c))$.} such that $P \in \F$ and $h$ is a homomorphism from $P$ to the relation preserving core of $P'$ By our assumptions, none of these homomorphisms is core-onto.
Define two functions $f_1$ and $f_2$ that map each relation name from a motif in $H$ to a fresh relation, and that are the identity on nodes.
Further, construct a graph $G_1$ as follows: for every motif  $h(P)$ in $H$, replace every node in  
$f_1(h(P))$ by a fresh node, and add it to $G_1$. Construct graph $G_2$ in the same fashion except we use  $f_2(h(P))$. Finally, extend $f_1$ to be the identity on all other relations in $(\bar r')$ (recall that $P' = (\bar r',G_{P'})$ not in any motif in $H$, so that $f_1(P')$ is a well-defined motif and its corresponding graph  
$G_{f_1(P')}$ is a well defined graph. 

Our graph $G$ corresponds to the union of $G_1$, $G_2$ and $G_{f_1(P')}$. Clearly, $G_1$ and $G_2$ are isomorphic, let $g$ be one such isomorphism. 
By construction, we also have that $\lift_\F(G_1)$ is isomorphic to  $\lift_\F(G_2)$ and $\lift_{\F'}(G_1)$ is isomorphic to  $\lift_{\F'}(G_2)$.

For an arbitrary motif in $H$ and a fact $y(x,z)$ of this motif, consider facts 
$f_1(y)(a,b)$ and $f_2(y)(g(a),g(b))$ in $G_1$ and $G_2$, respectively, that result from processing triple $y(x,z)$ in this motif according to our construction (here all of $a,b,g(a),g(b)$ are fresh nodes). 
As in \textbf{Case 1}, we show that these triples are separated by $\col_\F'$ but not by $\col_\F$. 

First, note that $\lift_\F(G) = \lift_\F(G_1) \cup \lift_\F(G_2)$, as (1) $G_1$ and $G_2$ are not connected, and (2) any homomorphism $h$ from a motif $P$ in $\F$ to $G_1 \cup G_{f_1(P')}$ has a corresponding homomorphism $h^*$ from $P$ to $G_1$ such that $h(P)$ and $h^*(P)$ are isomorphic, and therefore any tuple in the relation $R_P$ in  $\lift_\F(G_1 \cup G_{f_1(P')})$ is also in the relation $R_P$ of $\lift_\F(G_1)$.

The fact that $\lift_\F(G) = \lift_\F(G_1) \cup \lift_\F(G_2)$, that $\lift_\F(G_1)$ and $\lift_\F(G_2)$ are isomorphic and that they are not connected establishes that 
$\hcwl_{f_1(Y)}(f_1(X)) = \hcwl_{f_2(Y)}(f_2(X))$ for any relation name $X$ in $G$.  
Then, again since both connected components in $\lift_\F(G)$ are isomorphic, and our game is invariant to isomorphisms (assuming the same coloring of relations), it must be the case that $\col_\F(f_1(Y)(a,b)) = \col_\F(f_2(Y)(g(a),g(b)))$.

Let us now look at $\lift_{\F'}(G)$. 
Notice first that any tuple $t$ in $R_{P'}$ in $\lift_{\F'}(G_2)$ has a corresponding tuple $f_2 \circ f_1^{-1}(t)$ in $\lift_{\F'}(G_1)$. In other words, $f_2 \circ f_1^{-1}$
is a bijection from $\lift_{\F'}(G_2)$ to 
$\lift_{\F'}(G_1 \cup G_{f_1(P')})$ whose inverse is $f_1 \circ f_2^{-1}$. This is because these tuples arise from homomorphisms that start from $P'$ and end in some motif in $H$, and both $G_1$ and $G_2$ have the same copy of this motif. 

To finish we claim that there one tuple $t$ in relation $R_{P'}$ in graph $\lift_{\F'}(G_1 \cup G_{f_1(P')})$ such that $f_1 \circ f_2^{-1}(t)$ is not in relation 
$R_{P'}$ in graph  $\lift_{\F'}(G_2)$. 
This means that for any element $r$ in $t$ the cardinality of edges in $R_{P'}$ where $r$ participates is strictly higher than the cardinality of edges where $f_1 \circ f_2^{-1}(r)$ participate, which implies that 
$\hcwl_{r}(s) \neq  \hcwl_{f_1 \circ f_2^{-1}(r)}(f_1 \circ f_2^{-1}(s))$ for any relation name $s$ that belongs to $t$ because the color in step (1) is already different. As in \textbf{Case 1} above, this can be used to show that $\col_{\F'}(f_1(Y)(a,b)) \neq \col_{F'}(f_2(Y)(g(a),g(b)))$.

The tuple $t$ corresponds to $f_1(\bar r')$. This tuple exists in relation $R_{P'}$ in  $\lift_{\F'}(G_1 \cup G_{f_1(P')})$ because $f_1$is a homomorphism from $P'$ to $G_{f_1(P')}$ 
We show that $\lift_{\F'}(G)$ does not contain any tuple consisting of all different nodes from $f_2(\bar r')$ in relation $R_{P'}$, which suffices for our claim. Note that this implies that all relation variables from $P'$ are in the preimage of $f_2$ (i.e., are part of a motif in $H$), otherwise the claim is obvious as the size of $\bar r'$ is bigger than the range of $f_2$. 

Suppose for the sake of contradiction that such a tuple $f_2(\bar r')$ is in $R_{P'}$. By construction, any tuple from elements in $f_2(\bar r')$ must come from  $\lift_{\F'}(G_2)$. Hence, tuple $f_2(\bar r')$ is added to $\lift_{\F'}(G_2)$ due to a homomorphism $\hat h$ from $P'$ to a motif $h(P)$ in $H$ (for a motif $P$ in $\F$ and a homomorphism $h$ from $P$ to the core of $P'$), and since the tuple contains all elements in $f_2(\bar r')$, this homomorphism $\hat h$ is relation-preserving. By composition of homomorphisms this also entails a relation-preserving homomorphism from the (relation-preserving) core of $P'$ to $h(P)$. 

Further, since $h$ is a homomorphism from $P$ to the relation-preserving core of $P'$, the identity is a homomorphism from $h(P)$ to said core of $P'$, and since $h(P)$ contains all relations in $f_2(\bar r')$, this homomorphism is also relation-preserving. This is a contradiction because we have found 
that $h(P)$ is the relation-preserving core of $P'$ and therefore $h$ is a core-onto homomorphism from $P$ to $P'$. 
\end{proof}

\subsection{Proof of Theorem \ref{thm: ultra_equiv_motif}}
\label{app: ultra_equiv_motif}
ULTRA has the same expressive power as $\mgnn(\gF_{2}^{\text{path}})$.
\begin{proof}
By \Cref{prop:same_expressive}, we show that $\mgnn(\{h2t,h2h,t2t\})$ and $\mgnn(\{t2h,h2h,t2t\})$ has the same expressive power as $\mgnn(\{h2t,t2h,h2h,t2t\})$. We also note that $\{t2h,h2h,t2t\}$ and $\{h2t,h2h,t2t\}$ are the same as $\gF_2^{\text{path}}$. Thus, it is enough to show that $\ultra$, in its general form, has the same separating power than $\mgnn({\cal F})$, where ${\cal F}=\{h2t,t2h,h2h,t2t\}$. 

Recall that the separating power of $\mgnn({\cal F})$ can be characterized by the coloring scheme $\col_{\cal F}$ defined in Section \ref{app:test-motif}. Following the same strategy as in Propositions \ref{app:upperbound-test} and \ref{app:lowerbound-test}, it is straightforward to show that the separating power of $\ultra$ matches the following test.

Let $G=(V,E,R)$ be a KG. The first-stage colors $\rcol_{{\sf ultra}}^{(t)}$ are defined via the following rules:
\begin{align*}
    \rcol_{{\sf ultra}}^{(0)}(q,r) & = \mathbbm{1}_{r = q} * 1\\
    \rcol_{{\sf ultra}}^{(t+1)}(q,r) & = \hash\Big( \rcol_{{\sf ultra}}^{(t)}(q,r), 
     \llbrace \big(\rcol_{{\sf ultra}}^{(t)}(q,r'), r_\lift \big) \mid r' \in \mathcal{N}_{r_\lift}(r), r_\lift \in R_\lift \rrbrace \Big).
\end{align*}

$\mathbbm{1}_{r = q}$ takes value $1$ if $r=q$, and $0$ otherwise. The function $\hash$ is a fixed injective function taking values in the natural numbers. 

The second-stage colors $\ecol_{{\sf ultra}, T}^{(\ell)}$, where $T\geq 0$ is a parameter, are defined via the following rules:
\begin{align*}
    \ecol_{{\sf ultra}, T}^{(0)}(q(u,v)) & = \mathbbm{1}_{v = u} * \rcol_{{\sf ultra}}^{(T)}(q,q)\\
   \ecol_{{\sf ultra}, T}^{(\ell+1)}(q(u,v)) & = \hash\Big(\ecol_{{\sf ultra}, T}^{(\ell)}(q(u,v)), \llbrace \big(\ecol_{{\sf ultra}, T}^{(\ell)}(q(u,w)),\rcol_{{\sf ultra}}^{(T)}(q,r)\big)\mid w \in \mathcal{N}_r(v), r \in R \rrbrace \Big). 
\end{align*}
 We aim to show that for every links $q(u,v), q'(u',v')$ and $T, L\geq 0$, we have
 $$ \ecol_{{\sf ultra}, T}^{(L)}(q(u,v)) = \ecol_{{\sf ultra}, T}^{(L)}(q'(u',v')) \iff  \col_{{\cal F}, T}^{(L)}(q(u,v)) = \col_{{\cal F}, T}^{(L)}(q'(u',v')).$$

 Since the definition of the second-stage colorings $\ecol_{{\sf ultra}, T}$ and $\col_{{\cal F}, T}$ are identical, it suffices to show that for every $q,r,q',r'\in R$, and every $0\leq t\leq T$, it holds that
 $$\rcol_{{\sf ultra}}^{(t)}(q,r) = \rcol_{{\sf ultra}}^{(t)}(q',r') \iff 
 \hcwl_{{\cal F}}^{(t)}(q,r) = \hcwl_{{\cal F}}^{(t)}(q',r').$$

 This proposition follows from the following observations. The update rule of $\rcol_{{\sf ultra}}^{(t)}$ coincides with the one of $\rawl_2^{(t)}$ (defined in Section \ref{app: C-MPNNs}) applied to $\lift_{\F}(G)$. On the other hand, the update rule of $\hcwl_{{\cal F}}^{(t)}$ coincides with the one of $\hcwl_2^{(t)}$ (defined in Section \ref{app: C-MPNNs}) applied to $\lift_{\F}(G)$. It follows from Theorem H.4 in \citet{huang2024link} that ${\rawl_2^+}^{(t)}$ is equivalent to 
$\hcwl_2^{(t)}$. Here ${\rawl_2^+}^{(t)}$ corresponds to the test ${\rawl_2}^{(t)}$ applied to the augmented graph $G^+$, which  extends each fact $r(u,v)$ in $G$ with its inverse fact $r^{-}(v,u)$ (see definition in Section \ref{app: C-MPNNs}). The key observation is that over $\lift_{\cal F}(G)$, the test ${\rawl_2}^{(t)}$ and ${\rawl_2^+}^{(t)}$ are also equivalent. This is because of the structure of ${\cal F}=\{h2t,t2h,h2h,t2t\}$, as observed in \Cref{app:ULTRA_def}: The relation $h2h$ is symmetric, and hence $h2h^{-}(q,r)\in \lift_{\cal F}(G)^+ \iff h2h(q,r)\in \lift_{\cal F}(G)^+$. Similarly for $t2t$. On the other hand, $h2t(q,r)\in \lift_{\cal F}(G) \iff t2h(r,q)\in \lift_{\cal F}(G)$. Hence, $h2t^-(q,r)\in \lift_{\cal F}(G)^+ \iff t2h(q,r)\in \lift_{\cal F}(G)^+$. Similarly for $t2h$. Therefore, adding inverses over $\lift_{\cal F}(G)$ is superfluous, and ${\rawl_2}^{(t)}$ and ${\rawl_2^+}^{(t)}$ are  equivalent. We conclude that ${\rawl_2}^{(t)}$ and $\hcwl_2^{(t)}$ are equivalent, and then $\rcol_{{\sf ultra}}^{(t)}$ and $\hcwl_{{\cal F}}^{(t)}$  are equivalent, as required.
\end{proof}

\section{$\mgnn$ architecture}
\label{app:mgnn_def}

In this section, we will introduce the architecture of $\mgnn$ used in the experimental sections. We adopt $\hcnets$~\citep{huang2024link} as the relation encoder for the relational hypergraphs, and following $\ultra$~\citep{galkin2023ultra}, we adopt a slight modification of NBFNet~\citep{zhu2022neural} as the entity encoder.

\textbf{Model architectures.} 
Specifically, for the relation encoder, we have that for $0 \leq t \leq T$,
\begin{align*}
\vh_{r|q}^{(0)} &= \mathbbm{1}_{r = q} *\mathbf{1}^d , \\
\vh_{r|q}^{(t+1)}  &=\sigma \Big( \mW^{(t)}\Big[\vh_{v|q}^{(t)} \Big\|  \sum_{(e,i) \in E_\lift(r) }\vz_{\rho(e)}^{(t)} \odot\Big(\odot_{j \neq i}(\alpha^{(t)}\vh_{e(j)|q}^{(t)} \!\!+ (1-\alpha^{(t)})\vp_{j})  \Big) \Big]
+ \vb^{(t)}
\Big), 
\end{align*} 
For entity encoder, we follow $\ultra$~\citep{galkin2023ultra} and adopt the same variation of NBFNet~\citep{zhu2022neural}, which computes the pairwise representation as follows for $0 \leq \ell \leq L$:
\begin{align*}
\vh_{v|u,q}^{(0)} &= \mathbbm{1}_{v = u} *\vh_{q|q}^{(T)}\\
\vh_{v|u,q}^{(\ell+1)} &= \sigma\Big(\mW^{(\ell)}\Big[\vh_{v|u,q}^{(0)} \Big\| \sum_{r \in R}\sum_{ w \in \mathcal{N}_r(v)}\vh_{w|u,q}^{(\ell)} \odot \operatorname{MLP}^{(\ell)}(\vh_{r|q}^{(T)}) \Big] + \vb^{(\ell)} \Big), 
\end{align*} 

where $\sigma$ is ReLU activation,  $\mW$ and $\vb$ are learnable weight matrix and bias term per layer, $\mathbf{1}^d$ is an all-one vector of $d$ dimension, and $\mathbbm{1}_{C}$ is the indicator function that returns $1$ if condition $C$ is true, and $0$ otherwise. We denote $*$ as scalar multiplication, and $\odot$ as element-wise multiplication of vectors. In the relation encoder, we note $\alpha$ to be a learnable scalar and $\vp_i$ to be a learnable positional encoding at $i$-th position.  Finally. we decode the probability of a query $q(u,v)$ by passing through representation $\vh_{v|u,q}^{(L)}$ through a Multi-Layer Perceptron (MLP). 

 \section{$\ultra$} 
\label{app:ULTRA_def}

In this section, we follow \citet{galkin2023ultra} and define $\ultra$, a KGFM that computes invariants on nodes and relations on knowledge graphs. 

\textbf{Model architectures.} Given a knowledge graph $G = (V,E,R,c)$ and a query $q(u,?)$, $\ultra$ first constructs a knowledge graph $\lift_\gF(G) = (V_\lift,E_\lift,R_\lift)$ with each node representing a relation $r$, and $4$ \emph{fundamental} relations in $R_\lift$, defined as follow:
\begin{itemize}[leftmargin=.7cm]
    \item tail-to-head (\textit{t2h}) edges, \textit{t2h}$(r_1,r_2) \in E_\lift \iff \exists u, v, w \in V, r_1(u,v)\land r_2(v,w) \in E$
    \item head-to-head (\textit{h2h}) edges, \textit{h2h}$(r_1,r_2) \in E_\lift \iff \exists u, v, w \in V, r_1(v,u)\land r_2(v,w) \in E$
    \item head-to-tail (\textit{h2t}) edges,  \textit{h2t}$(r_1,r_2) \in E_\lift \iff \exists u, v, w \in V, r_1(v,u)\land r_2(w,v) \in E$
    \item tail-to-tail (\textit{t2t}) edges, \textit{t2t}$(r_1,r_2) \in E_\lift \iff \exists u, v, w \in V, r_1(u,v)\land r_2(w,v) \in E$
\end{itemize}

Then, $\ultra$ applies a relation encoder to generate representation $\vh_{r|q}$ via a variant of NBFNet~\citep{zhu2022neural} that initializes with all-one vector $\mathbf{1}^d$ on the queried relations $q \in V_\lift$. For $0 \leq t \leq T$, the relation encoder iteratively computes $\vh_{r|q}^{(t)}$ as follow:
\begin{align*}
\vh_{r|q}^{(0)} &= \mathbbm{1}_{r = q} *\mathbf{1}^d\\
\vh_{r|q}^{(t+1)} &=
\sigma\Big(\mW^{(t)}\Big[\vh_{r|q}^{(0)} \| \sum_{r_\lift \in R_\lift}\sum_{ r' \in \mathcal{N}_{R_\lift}(r)}\vh_{r'|q}^{(t)} \odot \vz_{r_\lift} )\Big] + \vb^{(t)} \Big),
\end{align*} 
where $\vz_{r_\lift}$ are learnable vectors for each one of the fundamental relations, $\sigma$ is ReLU activations, and similar to previous notation, $\|$ stands for concatenations, and $\mW^{(t)}$ and $\vz^{(t)}$ are learnable weight matrices and bias vectors for each $0 \leq t \leq T$, correspondingly.  

Finally, after the relational embedding $\vh_{v|q}^{(T)}$ is obtained given the query $q(u,?)$, $\ultra$ proceeds to computes the final pairwise embedding
for $0 \leq \ell \leq L$:
\begin{align*}
\vh_{v|u,q}^{(0)} &= \mathbbm{1}_{v = u} *\vh_{q|q}^{(T)}\\
\vh_{v|u,q}^{(\ell+1)} &= \sigma\Big(\mW^{(\ell)}\Big[\vh_{v|u,q}^{(0)} \| \sum_{r \in R}\sum_{ w \in \mathcal{N}_r(v)}\vh_{w|u,q}^{(\ell)} \odot \operatorname{MLP}^{(\ell)}(\vh_{r|q}^{(T)}) )\Big] + \vb^{(\ell)} \Big), 
\end{align*} 

\textbf{Frameworks.}
We follow the notation of $\mgnn$ and write $\ultra$ in its most general form. Later, we will also use $\ultra$ to refer to this general form when the context is clear.
For the relational encoder, we have the following form:
\begin{align*}
\vh_{r|q}^{(0)} &= \init_1(q,r)\\
\vh_{r|q}^{(t+1)} &= \update_1(\vh_{r|q}^{(t)}, \aggregate_1(\{\!\! \{ \mes_{r_\lift}(\vh_{r'|q}^{(t)})|~  r' \in \mathcal{N}_{r_\lift}(r), r_\lift \in R_\lift \}\!\!\}))), 
\end{align*} 
$\init_1$, $\update_1$, $\aggregate_1$, and $\mes_r$ are differentiable \emph{initialization}, \emph{update}, \emph{aggregation}, and fundamental-relation-specific \emph{message} functions, respectively. 
Similarly for the entity encoder, we have this format:
\begin{align*}
\vh_{v|u,q}^{(0)} &= \init_2(u,v,q)\\
\vh_{v|u,q}^{(\ell+1)} &= \update_2(\vh_{v|u,q}^{(\ell)}, \aggregate_2(\{\!\! \{ \mes(\vh_{w|u,q}^{(\ell)},\vh^{(T)}_{r|q})|~  w \in \mathcal{N}_r(v), r \in R \}\!\!\}))), 
\end{align*} 
where $\init_2$, $\update_2$, $\aggregate_2$, and $\mes$ are differentiable \emph{initialization}, \emph{update}, \emph{aggregation} and \emph{message} functions, respectively.

\textbf{Characteristic of fundamental relation.}
We also identify a few characteristics of the fundamental relationship from $\ultra$ when constructing the relation graphs:

\begin{enumerate}[leftmargin=.7cm]
    \item \textbf{Self-loop.} \textit{h2h} and \textit{t2t} always exist for all relation, i.e., $\forall r \in R, \textit{h2h}(r,r) \land \textit{t2t}(r,r)$. This will induce two self-loops for all relation. 
    \item \textbf{Symmetric.} Relation \textit{h2h} and \textit{t2t} are \emph{symmetric}: If \textit{h2h}$(r_1,r_2) \in E_\lift$ then we must have \textit{h2h}$(r_2,r_1) \in E_\lift$, as they satisfy the same assumption for homomorphism. The same applied for \textit{t2t}.
    \item \textbf{Inverse.} Relation \textit{h2t} and \textit{t2h} are always \emph{inverse relation} to each other, i.e., if \textit{h2t}$(r_1,r_2) \in E_\lift$ then we must have \textit{t2h}$(r_2,r_1) \in E_\lift$, as they satisfy the same assumption, and vice versa.
\end{enumerate}

As a standard practice, current link prediction models augment the knowledge graph via inverse relations. However, adding inverse relations implies a lot of symmetrical edges in the constructed relation graphs. First, note that for every $r(u,v) \in E$, we have $r^{-1}(v,u) \in E^-$. Thus, by default, we have that \textit{t2h}$(r, r^{-1})$, \textit{h2t}$(r, r^{-1})$,\textit{h2t}$(r^{-1}, r)$, \textit{t2h}$(r^{-1}, r) \in E_\lift$.

Then we consider what happened in the constructed relation graphs: 
\begin{itemize}[leftmargin=.7cm]
    \item \textbf{\textit{t2h} relations.} Assume \textit{t2h}$(r_1, r_2) \in E_\lift$, i.e.,$\exists u, v, w \in V, r_1(u,v), r_2(v,w) \in E$, then 
            $$\textit{t2t}(r_1, r_2^{-1}),\textit{h2h}(r_1^{-1}, r_2), \textit{h2t}(r_1^{-1}, r_2^{-1}) \in E_\lift$$
    \item \textbf{\textit{h2h} relation.} Assume \textit{h2h}$(r_1, r_2) \in E_\lift$, i.e.,$\exists u, v, w \in V, r_1(v,u), r_2(v,w) \in E$, then 
        $$\textit{h2t}(r_1, r_2^{-1}), \textit{t2h}(r_1^{-1}, r_2), \textit{t2t}(r_1^{-1}, r_2^{-1}) \in E_\lift$$
    \item \textbf{\textit{h2t} relation.} Assume \textit{h2t}$(r_1, r_2) \in E_\lift$, i.e.,$\exists u, v, w \in V, r_1(v,u), r_2(w,v) \in E$, then 
            $$\textit{h2h}(r_1, r_2^{-1}), \textit{t2t}(r_1^{-1}, r_2), \textit{t2h}(r_1^{-1}, r_2^{-1}) \in E_\lift$$
    \item \textbf{\textit{t2t} relation.} Assume \textit{t2t}$(r_1, r_2) \in E_\lift$, i.e.,$\exists u, v, w \in V, r_1(u,v), r_2(w,v) \in E$, then 
        $$\textit{t2h}(r_1, r_2^{-1}), \textit{h2t}(r_1^{-1}, r_2), \textit{h2h}(r_1^{-1}, r_2^{-1}) \in E_\lift$$
\end{itemize}

\section{Sparse matrix multiplication for relational hypergraph construction}
\label{app: spmm}
To compute the adjacency matrix for the relational hypergraph, we hereby define the implementation details of lift operation $\lift$. We follow a similar strategy from \citet{galkin2023ultra} and compute the four motifs of $\ultra$, \textit{h2t},\textit{t2h},\textit{t2t},\textit{h2h}, via sparse matrix multiplication. We then proceed to show higher-order motif computation presented in $\mgnn$ instances in the experiments. 

\subsection{$2$-path motifs ($\gF_2^\text{path}$)}
Given a knowledge graph $G =(V,E,R,c)$ with $n := |V|$ nodes and $m:=|R|$ relations, we represent the (multi-relational) adjacency matrix  $\bm{A} \in\mathbb{R}^{n \times m \times n}$ in sparse format. 
We can then gather the adjacency matrix by scatter max over the first node dimension and the last node dimension, obtaining two (sparse) matrices $\mE_h \in \mathbb{R}^{n \times m}$ and $\mE_t \in \mathbb{R}^{m \times n}$, representing any incoming edges of any relation from any nodes and outgoing edges of any relation from any nodes, respectively. Note that since the adjacency matrix is binary, i.e., all of the coordinates are either $0$ or $1$, taking scatter max is equivalent to taking logical OR operation. Note that these binary interactions are actually all possible $2$-path motifs.

For motif  computing interactions between relations are then equivalent to one sparse matrix multiplication (spmm) operation between adjacencies:
\begin{align*}
\boldsymbol{A}_{\textit{h2h}} & =\operatorname{spmm}\left(\boldsymbol{E}_h^T, \boldsymbol{E}_h\right) \in \mathbb{R}^{ m \times m} \\
\boldsymbol{A}_{\textit{t2t}} & =\operatorname{spmm}\left(\boldsymbol{E}_t^T, \boldsymbol{E}_t\right) \in \mathbb{R}^{ m \times m} \\
\boldsymbol{A}_{\textit{h2t}} & =\operatorname{spmm}\left(\boldsymbol{E}_h^T, \boldsymbol{E}_t\right) \in \mathbb{R}^{ m \times m} \\
\boldsymbol{A}_{\textit{t2h}} & =\operatorname{spmm}\left(\boldsymbol{E}_t^T, \boldsymbol{E}_h\right) \in \mathbb{R}^{ m \times m} 
\end{align*}

Note that we only take the existence of any edges, thus, the respective edge index is extracted from all non-zero values. Finally, we concatenate all the considered relations types as the final adjacency matrix of $\lift_\gF(G)$ through $\lift$ operation.

\subsection{$3$-path motifs ($\gF_3^\text{path}$)}

\begin{figure}[t]
\centering
\begin{tikzpicture}[
    every node/.style={circle, draw, fill=black, inner sep=1.5pt}, 
    >=stealth,
    shorten >=1.5pt, 
    shorten <=1.5pt,
    line width=0.8pt
]
    \node[fill=none,draw=none] (name1) at (-2.5,0) {\emph{tfh}:};
    \node (A) at (0, 0) {};
    \node (B) at (1.5, 0) {};
    \node (C) at (3, 0) {};
    \node (C1) at (4.5, 0) {};

    \node[draw=none, fill=none, anchor=east] at ($(A) - (0.5, 0)$) {\((\alpha, \beta, \gamma)\)};

    \draw[->] (A) to node[midway, above, draw=none, fill=none] {\(\alpha\)} (B);
    \draw[->] (B) to node[midway, above, draw=none, fill=none] {\(\beta\)} (C);
    \draw[->] (C) to node[midway, above, draw=none, fill=none] {\(\gamma\)} (C1);
    
    \node[fill=none,draw=none] (name2) at (-2.5,-0.8) {\emph{tft}:};
    \node (D) at (0, -0.8) {};
    \node (E) at (1.5, -0.8) {};
    \node (F) at (3, -0.8) {};
    \node (F1) at (4.5, -0.8) {};

    \node[draw=none, fill=none, anchor=east] at ($(D) - (0.5, 0)$) {\((\alpha, \beta, \gamma)\)};
    
    \draw[->] (D) to node[midway, above, draw=none, fill=none] {\(\alpha\)} (E);
    \draw[->] (E) to node[midway, above, draw=none, fill=none] {\(\beta\)} (F);
    \draw[->] (F1) to node[midway, above, draw=none, fill=none] {\(\gamma\)} (F);

    \node[fill=none,draw=none] (name3) at (-2.5,-1.6) {\emph{hfh}:};
    \node (G) at (0, -1.6) {};
    \node (H) at (1.5, -1.6) {};
    \node (I) at (3, -1.6) {};
    \node (I1) at (4.5, -1.6) {};

    \node[draw=none, fill=none, anchor=east] at ($(G) - (0.5, 0)$) {\((\alpha, \beta, \gamma)\)};
    
    \draw[->] (H) to node[midway, above, draw=none, fill=none] {\(\alpha\)} (G);
    \draw[->] (H) to node[midway, above, draw=none, fill=none] {\(\beta\)} (I);
    \draw[->] (I1) to node[midway, above, draw=none, fill=none] {\(\gamma\)} (I);
    
    \node[fill=none,draw=none] (name4) at (-2.5,-2.4) {\emph{hft}:};
    \node (J) at (0, -2.4) {};
    \node (K) at (1.5, -2.4) {};
    \node (L) at (3, -2.4) {};
    \node (L1) at (4.5, -2.4) {};

    \node[draw=none, fill=none, anchor=east] at ($(J) - (0.5, 0)$) {\((\alpha, \beta, \gamma)\)};
    
    \draw[->] (K) to node[midway, above, draw=none, fill=none] {\(\alpha\)} (J);
    \draw[->] (L) to node[midway, above, draw=none, fill=none] {\(\beta\)} (K);
    \draw[->] (L) to node[midway, above, draw=none, fill=none] {\(\gamma\)} (L1);
\end{tikzpicture}
\caption{The $3$-path motifs used by $\mgnn(\gF_{3}^{\text{path}})$ are shown here.
}
\label{fig:motifs-in-motifs-3path}
\end{figure}

\label{app: higher-order-motifs}
Note that it is also possible to compute higher-order $k$-path ($\gF_k^\text{path}$) motifs via a similar strategy, i.e., via iteratively carrying out spmm over the adjacency matrix $\bm{A}^{k-2}$. As an example, we first define the motifs considered in the experiment of $\mgnn$ with all the $3$-path motifs, namely, \textit{tfh},\textit{tft},\textit{hfh},\textit{hft}, shown in \Cref{fig:motifs-in-motifs-3path}.

\begin{itemize}[leftmargin=.7cm]
    \item tail-forward-head (\textit{tfh}) edges,\\ \textit{tfh}$(r_1,r_2,r_3) \in E_\lift \iff \exists u,v,w,x \in V, r_1(u,v)\land r_2(v,w) \land r_3(w,x)$
    \item tail-forward-tail (\textit{tft}) edges,\\ \textit{tft}$(r_1,r_2,r_3) \in E_\lift \iff \exists u,v,w,x \in V, r_1(u,v)\land r_2(v,w) \land r_3(x,w)$
	\item head-forward-head (\textit{hfh}) edges,\\ \textit{hfh}$(r_1,r_2,r_3) \in E_\lift \iff \exists u,v,w,x \in V, r_1(v,u)\land r_2(v,w) \land r_3(x,w)$
	\item head-forward-tail (\textit{hft}) edges,\\ \textit{hft}$(r_1,r_2,r_3) \in E_\lift \iff \exists u,v,w,x \in V, r_1(v,u)\land r_2(v,w) \land r_3(w,x)$
\end{itemize}

To compute these efficiently via spmm, we follow a similar strategy of generating the multi-relational adjacency matrix $\bm{A} \in \sR^{n \times m \times n}$ and $\mE_h \in \mathbb{R}^{n \times m}$ and $\mE_t \in \mathbb{R}^{m \times n}$. Then, with the same strategy, the computation of these interactions among three relations is equivalent to one sparse matrix of size $\sR^{m \times m \times m}$:
\begin{align*}
\boldsymbol{A}_{\textit{hfh}} & =\operatorname{spmm}\left(\boldsymbol{E}_h^T, \boldsymbol{A}, \boldsymbol{E}_h\right) \in \mathbb{R}^{ m \times m \times m} \\
\boldsymbol{A}_{\textit{tft}} & =\operatorname{spmm}\left(\boldsymbol{E}_t^T, \boldsymbol{A}, \boldsymbol{E}_t\right) \in \mathbb{R}^{ m \times m \times m} \\
\boldsymbol{A}_{\textit{hft}} & =\operatorname{spmm}\left(\boldsymbol{E}_h^T, \boldsymbol{A}, \boldsymbol{E}_t\right) \in \mathbb{R}^{ m \times m \times m} \\
\boldsymbol{A}_{\textit{tfh}} & =\operatorname{spmm}\left(\boldsymbol{E}_t^T, \boldsymbol{A}, \boldsymbol{E}_h\right) \in \mathbb{R}^{ m \times m \times m} 
\end{align*}

The final adjacency matrix of $\lift_\gF(G)$ can be generated analogously by concatenating all the considered relation types. Note that it is also possible to construct even higher-order Path-based motifs or other types of motifs with spmm or alternative efficient computation methods. We leave this as future work.

\section{Scalability analysis}
\label{app:scalability}

Scalability is generally a concern for KGFMs since they carry out inductive (on node and relation) link prediction between a given pair of nodes, which relies heavily on the structural properties of these nodes as well as the relations (due to the lack of node and relation features). While incorporating more expressive motifs enhances the model's predictive power, it also introduces additional computational costs in terms of time and space complexity. This trade-off between expressiveness and scalability is critical in practical applications and warrants careful consideration. Specifically, we aim to answer the question: \textbf{What is the trade-off between expressiveness and scalability for $\mgnn$?} (\textbf{Q4}).

To explore this trade-off, we conducted a scalability analysis, as reported in \Cref{tab:scalability}. All experiments were performed on the FB15k-237 dataset using a batch size of $64$ on a single NVIDIA H100 GPU. We report the number of parameters, the number of edges in the constructed relational hypergraphs, training time per batch, inference time per batch, and GPU memory usage.

\subsection{Impact of motifs on scalability}
The number of parameters in the models remains relatively stable as richer motifs are introduced, as the additional learnable parameters grow linearly with $|R|$ (the number of relations) and $d$ (embedding dimension) due to the inclusion of additional \emph{fundamental} relation embeddings. However, the difference between $\mgnn$ and $\ultra$ is notable: $\mgnn$ includes additional positional encodings in the $\hcnet$ to capture edge directionality, leading to slightly more parameters.

The key scalability challenge arises from the rapid increase in the number of edges in the constructed relational hypergraphs $\lift_\gF(G)$ as more complex motifs are introduced. For example: binary edges scale with $O(|V|^2|R|)$, while higher-order motifs such as $\gF_3^\text{path}$ introduce edges that scale with $O(|V|^3|R|)$. This results in an exponential increase as we consider higher-order motifs such as $k$-path or $k$-star in the number of induced hyper-edges, leading to significantly higher computational and memory requirements.

In terms of training and inference time, higher-order motifs (e.g., $\gF_3^\text{path}$) dramatically increase both training and inference times. For instance, training time per batch for $\gF_3^\text{path}$ is 3.66 seconds compared to 1.80 seconds for the $\gF_2^\text{path}$ motif.
Additionally, GPU memory consumption also increases with more complex motifs. While $\gF_3^\text{path}$ consumes 17.85 GB, the simpler motifs such as  $\gF_2^\text{path}$ and $\{\textit{h2t}\}$ use approximately 13 GB.

\subsection{Optimizing scalability with Triton kernels}
To mitigate the scalability bottlenecks, we have included custom implementation via Triton kernel\footnote{https://github.com/triton-lang/triton} to account for the message passing process on both knowledge graphs and relational hypergraphs, which on average halved the training times and dramatically reduced the space usage of the algorithm (5 times reduction on average). The idea is not to materialize all the messages explicitly as in PyTorch Geometric~\citep{Fey/Lenssen/2019}, but to directly write the neighboring features into the corresponding memory addresses. Compared with materializing all hyperedge messages, which takes $O(k|E|)$ where $k$ is the maximum arity, computing with Triton kernel only is $O(|V|)$ in memory. This will enable fast and scalable message passing on both knowledge graphs and relational hypergraphs in $\mgnn$. As a result, we observe that all of the instance of $\mgnn$ can be run on a 20GB GPU. We have shown our implementation in the provided codebase. 

While adding higher-order motifs significantly enhances performance, it also introduces scalability challenges, particularly in terms of memory and computational requirements. Our Triton kernel implementation alleviates some of these issues, but further research is needed to explore more efficient ways to scale models for extremely large knowledge graphs and relational hypergraphs.

\begin{table*}[t]
\small
\centering
\caption{Scalability analysis of link prediction using motifs applied on FB15k-237: number of edges in constructed relational hypergraphs, number of parameters used, training time per batch, inference time per batch, and GPU memory taken on a single H100. (batch size = $64$)}
\label{tab:scalability}
\begin{tabular}{lc|ccccc}
\toprule
\textbf{Model} & \textbf{Motif (${\cal F}$)} & \textbf{\#Parameter} & \textbf{\#Edges in $\lift_\gF(G)$}  & \textbf{Training Time(s)} & \textbf{Inference Time(s)}  & \textbf{GPU Memory(GB)}\\ 
\midrule
\multirow{1}{*}{\ultra}  
& & 168,705 & 108,240 & 1.19 & 0.066 & 12.87 \\ 
\multirow{4}{*}{\mgnn}  
& $\gF_3^\text{path}$ & 181,185 &4,288,092 & 3.66 & 2.613 & 17.85\\ 
& $\gF_2^\text{path}$ & 179,649 & 81,180 & 1.80 & 0.100 & 13.14 \\ 
& $\{\textit{h2t}\}$  & 178,881 & 27,060 & 1.56 & 0.087 & 13.10 \\ 
& $\emptyset$         & 178,497 & 0      & 1.48 & 0.065 & 12.88\\ 
\bottomrule
\end{tabular}
\end{table*}

\section{Computational complexity}
\label{app: complexity}

In this section, we present the theoretical computational complexity of the model variants considered in the experiments (\textbf{Q4}). Given a knowledge graph $G = (V,E,R,c)$, we have that $|V|, |E|, |R|$ represents the size of vertices, edges, and relation types, respectively. $d$ is the hidden dimension of the model, and $L$ is the number of layers in the model. $m$ denotes the maximum arity of the hyperedges, and in the experiment, it's either $2$ or $3$.  

\subsection{Generating relational hypergraph}
\textbf{$2$-path ($\gF_2^\text{path}$).} The complexity of generating the relation graph for $\ultra$ is completely determined via initial scatter max over the node dimension, which has a time complexity of $\mathcal{O}(|V|^2|R|)$, followed by sparse-matrix multiplication over $\bm{E}_h^T$ or $\bm{E}_t^T$  and $\bm{E}_h$ or $\bm{E}_t$. The time complexity of this operation is $\mathcal{O}(\textsf{nnz}(\bm{X})\times\textsf{nnz}(\bm{Y}))$ where $\textsf{nnz}$ is a function finding the number of non-zero element in sparse matrix $\bm{X} \in \{\bm{E}_h^T,\bm{E}_t^T\}$, $\bm{Y} \in \{\bm{E}_h,\bm{E}_t\}$. Note that if we do not use spmm, this time complexity will blow up to $\mathcal{O}(|V||R|^2)$. Thus, the total complexity for constructing the relation graph in $\ultra$ will become $\mathcal{O}(|V||R|(|V|+|R|))$.

\textbf{$3$-path ($\gF_3^\text{path}$).} Similarly, for the construction of the relational hypergraph in $\mgnn$, i.e., all the $3$-path and the $2$-path, we need another spmm to additionally account for the forward adjacency matrix $\bm{A}$. The time complexity of this operation under sparse matrix multiplication is then $\mathcal{O}(\textsf{nnz}(\mX) \times \textsf{nnz}(\mA) \times \textsf{nnz}(\mY) )$, $\bm{X} \in \{\bm{E}_h^T,\bm{E}_t^T\}$, $\bm{Y} \in \{\bm{E}_h,\bm{E}_t\}$. Without spmm, this time complexity will become $\mathcal{O}(|V|^2|R|^2 + |V||R|^3)$, which dominates the construction of $\bm{E}_h,\bm{E}_t$ together with all the $2$-Path motifs and thus also serve as the total complexity. 

Additionally, it's important to note that the relational hypergraph can be generated as a preprocessing step during inference. During training, we typically exclude the positive triplets currently being used for training from the training knowledge graph to prevent overfitting. This exclusion often leads to changes in the relational hypergraph structure, requiring us to rebuild the relational hypergraph for each batch. Therefore, the complexity of constructing these relational hypergraphs needs to be considered.

\textbf{$m$-Star ($\gF_m^\text{star}$).} In the synthetic experiments, we consider $m$-Star motifs for variety of $m$, ranging from $2$ to $8$. It is computationally more expensive to generate $m$-Star since there does not exist an spmm computation for fast computation.  To compute $m$-Star, we need to go through each node $v$ and check the combination $\deg(v) \choose m$ and check the existence individually to fill up the table. Thus, the complexity will be $\mathcal{O}(|V|{|R| \choose m})$, which is exponential to the number of relations $R$ and thus hard to implement in the relation-rich real-world datasets. As a result, we exclude the motifs of $m$-Star in real-world experiments.

\subsection{C-MPNNs and HC-MPNNs} 

We take the complexity analysis results from \citet{huang2023theory} and \citet{huang2024link}, shown in \Cref{table:model-complexities}. In this table, we report both the complexity of a single forward pass as well as the amortized complexity of a query $q(u,v)$.

\subsection{$\ultra$ and $\mgnn$}

The complexity of $\ultra$ is thus the combination of relation graph generation with $2$-path motifs, applying $\cmpnn$ on the constructed relation graph $\lift_\gF(G)$, and then another $\cmpnn$ on knowledge graphs. (Note that NBFNet is an instance of $\cmpnn$, shown in \citet{huang2023theory}). Thus, given a $\ultra$ model with $T$ layers on relation encoders and $L$ layers on its entity encoder, the overall complexity of a forward pass is going to be 
\begin{align*}
    \mathcal{O}(|V||R|(|V|+|R|) + L(|R|^2d + |R|d^2) + L(|E|d + |V|d^2))
\end{align*}
as the number of edges in the constructed relation graph is bounded by $O(|R|^2)$, and the number of nodes in the relation graph is $O(|R|)$. 
The complexity of $\mgnn$ equipped with $3$-path with $T$ layers on relation encoders and $L$ layers on its entity encoder can be derived in a similar manner:
\begin{align*}
    \mathcal{O}(|V|^2|R|^2 + |V||R|^3 + L(|R|^3d + |R|d^2) + L(|E|d + |V|d^2)).
\end{align*}
However, note that due to spmm, the construction of the relation (hyper)graph is very efficient in practical scenarios. Additionally, we note that the amortized complexity of a query of $\ultra$ is
\begin{align*}
    \mathcal{O}\Big(|R|(|V|+|R|) + \frac{T(|R|^2d + |R|d^2)}{|V|} + L(\frac{|E|d}{|V|} + d^2)\Big)
\end{align*}
and that for $\mgnn$ equipped with $3$-path is
\begin{align*}
    \mathcal{O}\Big(|V||R|^2 + |R|^3 + \frac{T(|R|^3d + |R|d^2)}{|V|} + L(\frac{|E|d}{|V|} + d^2)\Big).
\end{align*}

\begin{table*}[!t]
\centering
\caption{Asymptotic runtime complexities for relation and entity encoder. 
}
\label{table:model-complexities}
\begin{tabular}{ccc}
\toprule
 \textbf{Model} & \textbf{Complexity of a forward pass} & \textbf{Amortized complexity of a query} \\
\midrule
$\cmpnn$s & $\mathcal{O}(L(|E|d + |V|d^2))$ & $\mathcal{O}(L(\frac{|E|d}{|V|} + d^2))$ \\
$\hcmpnns$ & $\mathcal{O}(L(m|E|d + |V|d^2))$ & $\mathcal{O}(L(\frac{m|E|d}{|V|} + d^2))$ \\
\bottomrule
\end{tabular}
\end{table*}

\section{On adding node and relation features}
\label{app:adding-node-and-relation-features}

In general, the task of link prediction on knowledge graphs does not consider any node features. Thus, on the surface, it seems that $\mgnn$ does not take the node features into account, and so do many models~\citep{grail2020teru} that utilize the labeling tricks~\citep{LabelingTrick2021} and conditional message passing~\citep{zhu2022neural,zhu2023anet,galkin2023ultra,huang2023theory,huang2024link}. 

\textbf{Adding node features.} Incorporating node features, such as text embeddings, is relatively simple and can be done by simply concatenating the node feature vector, $\vx_v$, with the current representation, $\vh_v$, to produce $\vh_v^* = [\vh_v || \vx_v]$. 
Since the initialization of $\mgnn$s only requires satisfying target node distinguishability, concatenating the node features will preserve this property. As a result, all theoretical results remain applicable to $\mgnn$ when node features are included. It’s also important to note that this concatenation method has already been successfully employed in knowledge graphs with node features, as demonstrated in \citet{LabelingTrick2021,galkin2023ultra}. 

\textbf{Adding relation features.} The same principal can apply since $\mgnn$ explicitly computes the (conditioned) relation representations. Specifically, given any feature vector of relations $\vx_r$ such as text embeddings, we can consider the augmented relation embedding $\vh_r^* = [\vh_r || \vx_r]$ in place of the original computed relation embeddings $\vh_r$. We leave how adding relation features could potentially boost the zero-shot and fine-tuned performance of KGFMs as an important future work. 

\section{Proof of the counter-example in \Cref{sec: consequences_mgnn}}
\label{app: consequences_mgnn}

We present a counter-example of link prediction in \Cref{fig:counter-example}, showing that $\ultra$ cannot distinguish between links $r_3(u,v_1)$ and $r_3(u,v_2)$ whereas $\mgnn(\gF_{3}^{\text{path}})$ can correctly distinguish. 
We first define the knowledge graph of interest $G = (V,E,R,c)$ where
\begin{align}
    V &= \{u,x,y,v_1,v_2\} \\
    R &= \{r_1,r_2,r_3\} \\
    E &= \{r_1(y,v_1),r_1(v_1,y),r_2(u,x),r_2(y,v_2),r_2(v_2,y),r_3(x,y),r_3(y,x)\}
\end{align}

We first show that $\ultra$ cannot distinguish between $r_3(u,v_1)$ and $r_3(u,v_2)$. First we note that the $\lift$ operation defines on motifs of $\ultra$, i.e., $\{\textit{h2t},\textit{t2h},\textit{h2h},\textit{t2t}\}$, will generates an complete relation graph $\lift_\gF(G)$ for $G$. Formally speaking, it holds that $\forall P \in \gF, \forall r,r' \in R,  P(r,r') \in E_\lift$.

Now, it is enough to show that $\ultra$ will assign the same coloring to $(u,v_1)$ and $(u,v_2)$. We first claim that $\vh_{r_1|r_3}^{(T)} = \vh_{r_2|r_3}^{(T)}$ for all $T \geq 0$. The base case holds since $\vh_{r_1|r_3}^{(0)} = \vh_{r_2|r_3}^{(0)} = \mathbf{1}^d$. For inductive case, assume $\vh_{r_1|r_3}^{(t)} = \vh_{r_1|r_3}^{(t)}$, then it holds that
\begin{align*}
    \vh_{r_1|r_3}^{(t+1)} &= 
    \sigma\Big(\mW^{(t)}\Big[\vh_{r_1|r_3}^{(0)} \| \sum_{r_\lift \in R_\lift}\sum_{ r' \in \mathcal{N}_{r_\lift}(r_1)}\vh_{r'|r_3}^{(t)} \odot \vz_{r_\lift} )\Big] + \vb^{(t)} \Big), \\
     &=\sigma\Big(\mW^{(t)}\Big[\vh_{r_1|r_3}^{(0)} \| \sum_{r_\lift \in R_\lift}\sum_{ r' \in R}\vh_{r'|r_3}^{(t)} \odot \vz_{r_\lift} )\Big] + \vb^{(t)} \Big),\\
    &= \sigma\Big(\mW^{(t)}\Big[\vh_{r_2|r_3}^{(0)} \| \sum_{r_\lift \in R_\lift}\sum_{ r' \in R}\vh_{r'|r_3}^{(t)} \odot \vz_{r_\lift} )\Big] + \vb^{(t)} \Big),\\
    &= \sigma\Big(\mW^{(t)}\Big[\vh_{r_2|r_3}^{(0)} \| \sum_{r_\lift \in R_\lift}\sum_{ r' \in \mathcal{N}_{r_\lift}(r_2)}\vh_{r'|r_3}^{(t)} \odot \vz_{r_\lift} )\Big] + \vb^{(t)} \Big),\\
    &= \vh_{r_2|r_3}^{(t+1)}
\end{align*}

Now it is straightforward to see that the representation of $(u,v_1)$ and $(u,v_2)$ will be identical since the entity encoder of $\ultra$ cannot distinguish between relation $r_1$ and $r_2$, thus viewing node $v_1$ and $v_2$ isomorphic to each other when conditioned on $u$. 

However, we notice that $\mgnn(\gF_{3}^{\text{path}})$ can distinguish between $r_1$ and $r_2$ by computing $\vh_{r_1|r_3}^{(t)} \neq \vh_{r_2|r_3}^{(t)}$, since the generated relational hypergraph will contain an hyperedges $\textit{hft}(r_2,r_3,r_1)$ by noting the homomorphism on path $u \rightarrow r_2 \rightarrow x\rightarrow r_3\rightarrow y\rightarrow r_1\rightarrow v_1$ but it does not contain $\textit{hft}(r_1,r_3,r_2)$.

\section{Details in synthetic experiments}
\label{app:synthetics}

\textbf{Dataset construction.}
Given $2 \leq k \leq 7$, and for each knowledge graph $G_i = (V_i,E_i,R_i)$ in $\hubclass(k)$, we generate and partition the relations into two classes and a query relation $R_i = P_i \cup N_i \cup \{q\}$ where $P_i = \{p_1,\cdots,p_l\}$, and $N_i = \{n_1, \cdots, n_l\}$ for some $l > k$. Then, we construct each component as follows:
\begin{itemize}[leftmargin=.7cm]
    \item A \emph{hub} $H_i$ where we have the center node $u$, and for all relation $p \in P_i$, there exists distinct $v \in V_i$ such that the edge $p(v,u)$ exists. Note that the center node $u$ will have a degree larger than $k$ by construction.
    \item Multiple \emph{positive communities} with relations $P_i$ where for each subset of relations $\tilde{P}_i \subseteq P_i$ satisfying $|\tilde{P}_i| \leq k$, we have a center node $x$ and there will exists distinct nodes $y_1, \cdots, y_{|\tilde{P}_i|}$ such that $p_j(y_j,x)$ exists for all $1 \leq j \leq |\tilde{P}_i|$.
    \item Multiple \emph{negative communities} with relations $N_i$ where for each subset of relations $\tilde{N}_i \subseteq N_i$ satisfying $|\tilde{N}_i| \leq k$, we have a center node $x$ and there will exists distinct nodes $y_1, \cdots, y_{|\tilde{N}_i|}$ such that $n_j(y_j,x)$ exists for all $1 \leq j \leq |\tilde{N}_i|$.
\end{itemize}

Finally, the graph $G_i$ is a disjoint union of all these components. The detailed dataset statistics are reported in $\Cref{tab:synthetic_dataset_stats}$.

\textbf{Experiment details.}
We use $\ultra$ with sum aggregation on both relational and entity levels, and similarly in all variants of $\mgnn(\gF_m^\text{star})$. The message function on the entity level is chosen to be elemental-wise summation to avoid loss of information from relation types during the first message passing step. We use $2$ layers for both $\ultra$ and all variants of $\mgnn(\gF_m^\text{star})$, each with $32$ dimension on both relation and entity model, and the Adam optimizer is used with a learning rate of $0.001$, trained for $500$ epochs in all experiments.

\section{Further experimental details}
\label{sec:further-experimental-details}
\textbf{Datasets.} In this section, we report the details of all experiments reported in the body of this paper. For pertaining experiments, we train the $\mgnn$ instance on three transductive knowledge graph completion datasets, following \citet{galkin2023ultra}: FB15k237~\citep{FB15k237}, WN18RR~\citep{Dettmers2018FB}, and CoDEx Medium~\citep{safavi-koutra-2020-codex}. We then carry out zero-shot and fine-tuning experiments on the following datasets, categorized into three groups:
\begin{itemize}[leftmargin=.7cm]
\item \textbf{Inductive $e,r$.} Inductive link prediction on both new nodes and new relations. Including 13 datasets in $\ingram$~\citep{ingram}: FB-25, FB-50, FB-75, FB-100, WK-25, WK-50, WK-75, WK-100, NL-0, NL-25, NL-50, NL-75, NL-100; and 10 datasets in MTDEA~\citep{zhou2023multitaskperspetivelinkprediction}: MT1 tax, MT1 health, MT2 org, MT2 sci, MT3 art, MT3 infra, MT4 sci, MT4 health, Metafram, FBNELL. 
\item \textbf{Inductive $e$.} Inductive link prediction on nodes only. Including 12 datasets from GraIL~\citep{grail2020teru}: WN-v1, WN-v2, WN-v3, WN-v4, FB-v1, FB-v2, FB-v3, FB-v4, NL-v1, NL-v2, NL-v3, NL-v4; 4 datasets from INDIGO~\citep{INDIGO}: HM 1k, HM 3k, HM 5k, HM Indigo; and 2 datasets from Nodepiece~\citep{galkin2022nodepiece}: ILPC Small, ILPC Large.
\item \textbf{Transductive.} Transductive link prediction on seen nodes and seen relations. Including CoDEx Small, CoDEx Large~\citep{safavi-koutra-2020-codex}, NELL-995~\citep{nell995WenhanXiongDeepPath}, YAGO 310~\citep{Mahdisoltani2015YAGO3AK}, WDsinger, NELL23k, FB15k237(10), FB15k237(20), FB15k237(50)~\citep{FB15k237-10-20-50}, and Hetionet~\citep{hetionet}. 
\end{itemize}

Full tables of zero-shot performance of $\mgnn(\gF_3^\text{path})$ are presented in \Cref{app: zeroshot-v1} and \Cref{app: zeroshot-v2}, and full tables of fine-tune performance of $\mgnn(\gF_3^\text{path})$ averaged 3 runs are presented in \Cref{app: finetune-v1}, \Cref{app: finetune-v2}. 
We also carry out end-to-end experiments on all considered datasets, with full tables averaging 3 runs presented in \Cref{app: end2end-v1} and \Cref{app: end2end-v2}. We report the statistics of these datasets in \Cref{tab:inductive-e-r-statistics}, \Cref{tab:inductive-e-statistics}, and \Cref{tab:transductive-statistics}, respectively. 
We finally report the detail hyperparameters used in \Cref{tab:hyperparameter}, and the epoch used for fine-tuning and end-to-end training in \Cref{tab:hyperparameter-training-finetune}.
Note that we do not include all 13 datasets shown in \citet{galkin2023ultra} due to the size of constructed relational hypergraphs when equipped with $3$-path motifs.

\textbf{Training.}  Following convention, on each knowledge graph and for each triplet $r(u,v)$, we augment the corresponding inverse triplet $r^{-1}(v,u)$ where $r^{-1}$ is a fresh relation symbol. All trained $\mgnn$ and its variants aim to minimize the negative log-likelihood of both positive and negative facts. In line with the \emph{partial completeness assumption}~\citep{partial_completeness_assumption}, we generate negative samples by randomly corrupting either the head or the tail entity. The conditional probability of a fact $q(u,v)$ is parameterized as $\mathbb{P}(v \mid u,q) = \sigma(f(\vh^{(L)}_{v \mid u, q}))$, where $\sigma$ denotes the sigmoid function and $f$ is a two-layer MLP. We adopted layer-normalization~\citep{ba2016layernormalization} and short-cut connection after each aggregation and before applying ReLU on both encoders and a dropout rate of 0.2 on relation encoders only.  We discard the edges that directly connect query node pairs to prevent overfitting. The best checkpoint for each model instance is selected based on its performance on the validation set.

Inspired by~\citet{sun2019rotate}, we employ \emph{self-adversarial negative sampling}, drawing negative triples from the following distribution, with $\alpha$ serving as the \emph{adversarial temperature}:
\begin{equation*}
    \mathcal{L}(v \mid u,q) = -\log p(v \mid u, q) - \sum_{i=1}^k w_{i,\alpha} \log (1 - p(v_i^{\prime} \mid u_i^{\prime}, q)) \\
\end{equation*}
Here, $k$ represents the number of negative samples for each positive sample, and $(u^{\prime}_i, q, v^{\prime}_i)$ is the $i$-th negative sample. Finally, $w_i$ is the weight for the $i$-th negative sample, defined as
\[
w_{i,\alpha} := \operatorname{Softmax}\left(\frac{\log (1-p(v_i^{\prime} \mid u_i^{\prime}, q))}{\alpha}\right).
\]

\begin{table}[t]
\centering
\caption{Dataset construction parameters for $\hubclass$. }
    \label{tab:synthetic_dataset_stats}
\begin{tabular}{c|ccc}
\toprule
$\hubclass(k)$ & \# Training/Testing graphs & \# Training/Testing triplets & \# Relations  \\
\midrule
$k=2$ &    7/3    &   360/210     &    [3,13]    \\
$k=3$ &    7/3    &    1642/1078    &    [4,14]   \\
$k=4$ &   4/2     &    824/1058    &    [5,11]   \\
$k=5$ &     1/1   &    112/224    &    [6,8]   \\
$k=6$ &    1/1    &    238/476    &    [7,9] \\
\bottomrule
\end{tabular}
\end{table}

\begin{table}
\caption{Detailed zero-shot inductive link prediction MRR and hits@10 for $\ultra$ and $\mgnn(\gF_3^\text{path})$.}
\label{app: zeroshot-v1}
\centering
\begin{tabular}{cccccc}
\toprule
&\multirow{2}{*}{Dataset} & \multicolumn{2}{c}{$\ultra$} &  \multicolumn{2}{c}{$\mgnn$} \\
\cmidrule{3-6}
&& MRR& H@10&  MRR & H@10\\
\midrule
\multirow{23}{*}{\textbf{Inductive} $e,r$}&FB-100 & \textbf{0.449} &  \textbf{0.642} & 0.428 & 0.628 \\
&FB-75 & \textbf{0.403} &  0.604  & 0.399 & \textbf{0.614} \\
&FB-50 & \textbf{0.338} &  0.543  & \textbf{0.338} & \textbf{0.546} \\
&FB-25 & \textbf{0.388} &  \textbf{0.640}  & 0.384 & \textbf{0.640} \\
&WK-100 & \textbf{0.164} &  \textbf{0.286}  & \textbf{0.164} & 0.282 \\
&WK-75 & 0.365 &  0.537  & \textbf{0.366 }& \textbf{0.540} \\
&WK-50 & \textbf{0.166} &  \textbf{0.324} & 0.163 & 0.314 \\
&WK-25 & \textbf{0.316} &  \textbf{0.532 } & 0.311 & 0.493 \\
&NL-100 & \textbf{0.471} &  \textbf{0.651}  & 0.438 & 0.647 \\
&NL-75 & \textbf{0.368} &  \textbf{0.547}  & 0.314 & 0.512 \\
&NL-50 & \textbf{0.407} &  \textbf{0.570}  & 0.373 & 0.532 \\
&NL-25 & \textbf{0.395} &  \textbf{0.569}  & 0.348 & 0.498  \\
&NL-0 & \textbf{0.342} &  \textbf{0.523}  & 0.324 & 0.497 \\
&MT1 tax & 0.224 &  0.305  & \textbf{0.325} & \textbf{0.448} \\
&MT1 health & 0.298 &  0.374  & \textbf{0.326} & \textbf{0.398}\\
&MT2 org & \textbf{0.095} &  \textbf{0.159} & 0.092 & 0.154 \\
&MT2 sci & 0.258 &  0.354  & \textbf{0.286} & \textbf{0.435} \\
&MT3 art & 0.259 &  0.402 & \textbf{0.269} & \textbf{0.414} \\
&MT3 infra & 0.619 &  0.755  & \textbf{0.658} & \textbf{0.786} \\
&MT4 sci & 0.274 &  0.449  & \textbf{0.283} & \textbf{0.451} \\
&MT4 health & 0.624 &  0.737  & \textbf{0.627} & \textbf{0.762} \\
&Metafam & 0.238 &  0.644 & \textbf{0.344} & \textbf{0.829} \\
&FBNELL & \textbf{0.485} &  0.652 & 0.468 & \textbf{0.664} \\
\midrule
\multirow{18}{*}{\textbf{Inductive} $e$}&WN-v1 & 0.648 &  0.768  & \textbf{0.682} & \textbf{0.778} \\
&WN-v2 & \textbf{0.663} &  0.765  & \textbf{0.663} & \textbf{0.771}\\
&WN-v3 & 0.376 &  0.476  & \textbf{0.420} & \textbf{0.538}\\
&WN-v4 & 0.611 &  0.705 & \textbf{0.640} & \textbf{0.718} \\
&FB-v1 & 0.498 &  0.656 & \textbf{0.503} & \textbf{0.692} \\
&FB-v2 & \textbf{0.512} &  0.700 & 0.511 & \textbf{0.716} \\
&FB-v3 & 0.491 &  0.654 & \textbf{0.500} & \textbf{0.692} \\
&FB-v4 & 0.486 &  \textbf{0.677} & \textbf{0.487} & \textbf{0.677} \\
&NL-v1 & \textbf{0.785} &  \textbf{0.913}  & 0.674 & 0.871 \\
&NL-v2 & 0.526 &  0.707 & \textbf{0.564} & \textbf{0.769} \\
&NL-v3 & 0.515 &  0.702 & \textbf{0.533} & \textbf{0.724}\\
&NL-v4 & 0.479 &  \textbf{0.712}  & \textbf{0.503} & 0.711\\
&ILPC Small & \textbf{0.302} &  0.443  & 0.295 & \textbf{0.444} \\
&ILPC Large & \textbf{0.290} &  \textbf{0.424}  & 0.285 & 0.415 \\
&HM 1k & 0.059 &  0.092  & \textbf{0.063} & \textbf{0.097} \\
&HM 3k & 0.037 &  0.077 & \textbf{0.055} & \textbf{0.084} \\
&HM 5k & 0.034 &  0.071 & \textbf{0.050}& \textbf{0.073} \\
&HM Indigo & \textbf{0.440}&  \textbf{0.648}  & 0.426 & 0.637 \\
\bottomrule
\end{tabular}
\end{table}

\begin{table}
\caption{Detailed zero-shot transductive link prediction MRR and hits@10 for $\ultra$ and $\mgnn(\gF_3^\text{path})$.}
\label{app: zeroshot-v2}
\centering
\begin{tabular}{cccccc}
\toprule
&\multirow{2}{*}{Dataset} & \multicolumn{2}{c}{$\ultra$} &  \multicolumn{2}{c}{$\mgnn$} \\
\cmidrule{3-6}
&& MRR& H@10&  MRR & H@10\\
\midrule
\multirow{3}{*}{\textbf{Pretrained}} & WN18RR & 0.480 & \textbf{0.614}  & \textbf{0.507}& 0.609\\
&FB15k237 & \textbf{0.368} & \textbf{0.564}  & 0.336 & 0.532 \\
&CoDEx Medium & \textbf{0.372} & \textbf{0.525} & 0.357 & 0.509\\
\midrule
\multirow{11}{*}{\textbf{Transductive}} & CoDEx Small & 0.472 &  0.667 & \textbf{0.474} & \textbf{0.670} \\
&CoDEx Large & 0.338 & \textbf{0.469}  & \textbf{0.339} & \textbf{0.469} \\
&NELL-995 & 0.406 &  0.543 & \textbf{0.491} & \textbf{0.623} \\
&YAGO 310 & \textbf{0.451} &  \textbf{0.615}  & 0.441 & 0.607 \\
&WDsinger & 0.382 &  0.498  & \textbf{0.397} & \textbf{0.514} \\
&NELL23k & \textbf{0.239} &  \textbf{0.408}  & 0.220 & 0.384\\
&FB15k237(10) & \textbf{0.248} &  \textbf{0.398} & 0.236 & 0.384 \\
&FB15k237(20) & \textbf{0.272} &  \textbf{0.436} & 0.259 & 0.422 \\
&FB15k237(50) & \textbf{0.324} &  \textbf{0.526} & 0.312 & 0.508 \\
&Hetionet & \textbf{0.257} &  0.379  & 0.256 & \textbf{0.383} \\
\bottomrule
\end{tabular}
\end{table}

\begin{table}
\centering
\caption{Detailed finetuned inductive link prediction MRR and hits@10 for $\ultra$ and $\mgnn(\gF_3^\text{path})$.}
\label{app: finetune-v1}
\begin{tabular}
{cccccc}
\toprule
& \multirow{2}{*}{Dataset} & \multicolumn{2}{c}{$\ultra$} &  \multicolumn{2}{c}{$\mgnn$} \\
\cmidrule{3-6}
&& MRR & H@10 & MRR & H@10\\
\midrule
\multirow{23}{*}{\textbf{Inductive} $e,r$} &FB-100 & 0.444{\tiny$\pm$.003} & \textbf{0.643}{\tiny$\pm$.004}  & 0.439{\tiny$\pm$.001} & 0.642{\tiny$\pm$.002} \\
&FB-75 & \textbf{0.400}{\tiny$\pm$.003} & 0.598{\tiny$\pm$.004}  & 0.399{\tiny$\pm$.002} & \textbf{0.607}{\tiny$\pm$.002} \\
&FB-50 & 0.334{\tiny$\pm$.002} & 0.538{\tiny$\pm$.004}  & \textbf{0.340}{\tiny$\pm$.002} & \textbf{0.544}{\tiny$\pm$.002} \\
&FB-25 & 0.383{\tiny$\pm$.001} & \textbf{0.635}{\tiny$\pm$.002} & \textbf{0.388}{\tiny$\pm$.000} & \textbf{0.635}{\tiny$\pm$.002} \\
&WK-100 & 0.168{\tiny$\pm$.005} & 0.286{\tiny$\pm$.003} & \textbf{0.173}{\tiny$\pm$.003} & \textbf{0.284}{\tiny$\pm$.009} \\
&WK-75 & \textbf{0.380}{\tiny$\pm$.001} & 0.530{\tiny$\pm$.001} & 0.371{\tiny$\pm$.008} & \textbf{0.535}{\tiny$\pm$.012} \\
&WK-50 & 0.140{\tiny$\pm$.010} & 0.280{\tiny$\pm$.012} & \textbf{0.160}{\tiny$\pm$.006} & \textbf{0.304}{\tiny$\pm$.002} \\
&WK-25 & \textbf{0.321}{\tiny$\pm$.003} & \textbf{0.535}{\tiny$\pm$.007} & 0.317{\tiny$\pm$.005} & 0.505{\tiny$\pm$.004} \\
&NL-100 & 0.458{\tiny$\pm$.012} & \textbf{0.684}{\tiny$\pm$.011}  & \textbf{0.464}{\tiny$\pm$.001} & 0.682{\tiny$\pm$.001} \\
&NL-75 & \textbf{0.374}{\tiny$\pm$.007} & \textbf{0.570}{\tiny$\pm$.005}  & 0.360{\tiny$\pm$.001} & 0.548{\tiny$\pm$.004} \\
&NL-50 & \textbf{0.418}{\tiny$\pm$.005} & \textbf{0.595}{\tiny$\pm$.005}& 0.414{\tiny$\pm$.004} & 0.573{\tiny$\pm$.005} \\
&NL-25 & \textbf{0.407}{\tiny$\pm$.009} & \textbf{0.596}{\tiny$\pm$.012}  & 0.390{\tiny$\pm$.008} & 0.580{\tiny$\pm$.027} \\
&NL-0 & \textbf{0.329}{\tiny$\pm$.010} & 0.551{\tiny$\pm$.012} & 0.328{\tiny$\pm$.003} & \textbf{0.556}{\tiny$\pm$.007}\\
&MT1 tax & 0.330{\tiny$\pm$.046} & 0.459{\tiny$\pm$.056} & \textbf{0.416}{\tiny$\pm$.067} & \textbf{0.522}{\tiny$\pm$.005} \\
&MT1 health & 0.380{\tiny$\pm$.0002} & 0.467{\tiny$\pm$.006} & \textbf{0.385}{\tiny$\pm$.002} & \textbf{0.473}{\tiny$\pm$.003}\\
&MT2 org & 0.104{\tiny$\pm$.007} & \textbf{0.170}{\tiny$\pm$.001} & \textbf{0.106}{\tiny$\pm$.001} & \textbf{0.170}{\tiny$\pm$.003} \\
&MT2 sci & 0.311{\tiny$\pm$.010} & 0.451{\tiny$\pm$.042}  & \textbf{0.326}{\tiny$\pm$.002} & \textbf{0.520}{\tiny$\pm$.023} \\
&MT3 art & 0.306{\tiny$\pm$.003} & \textbf{0.473}{\tiny$\pm$.003} & \textbf{0.315}{\tiny$\pm$.013} & 0.469{\tiny$\pm$.004} \\
&MT3 infra & 0.657{\tiny$\pm$.008} & 0.807{\tiny$\pm$.007}  & \textbf{0.683}{\tiny$\pm$.007} & \textbf{0.827}{\tiny$\pm$.001}\\
&MT4 sci & 0.303{\tiny$\pm$.007} & 0.478{\tiny$\pm$.003}  & \textbf{0.309}{\tiny$\pm$.001} & \textbf{0.483}{\tiny$\pm$.005} \\
&MT4 health & \textbf{0.704}{\tiny$\pm$.002} & \textbf{0.785}{\tiny$\pm$.002}  & 0.703{\tiny$\pm$.001} & \textbf{0.787}{\tiny$\pm$.002} \\
&Metafam & 0.997{\tiny$\pm$.003} & \textbf{1.000}{\tiny$\pm$.000}  & \textbf{1.000}{\tiny$\pm$.000} & \textbf{1.000}{\tiny$\pm$.000} \\
&FBNELL & \textbf{0.481}{\tiny$\pm$.004} & 0.661{\tiny$\pm$.011} & \textbf{0.481}{\tiny$\pm$.004} & \textbf{0.664}{\tiny$\pm$.008} \\
\midrule
\multirow{18}{*}{\textbf{Inductive} $e$}&WN-v1 & 0.685{\tiny$\pm$.003} & 0.793{\tiny$\pm$.003}  & \textbf{0.703}{\tiny$\pm$.005} & \textbf{0.806}{\tiny$\pm$.005} \\
&WN-v2 & 0.679{\tiny$\pm$.002} & 0.779{\tiny$\pm$.003}  & \textbf{0.680}{\tiny$\pm$.002} & \textbf{0.781}{\tiny$\pm$.009} \\
&WN-v3 & 0.411{\tiny$\pm$.008} & 0.546{\tiny$\pm$.006}  & \textbf{0.466}{\tiny$\pm$.004} & \textbf{0.590}{\tiny$\pm$.002} \\
&WN-v4 & 0.614{\tiny$\pm$.003} & 0.720{\tiny$\pm$.001}  & \textbf{0.659}{\tiny$\pm$.003} & \textbf{0.733}{\tiny$\pm$.004} \\
&FB-v1 & 0.509{\tiny$\pm$.002} & 0.670{\tiny$\pm$.004}  & \textbf{0.530}{\tiny$\pm$.003} & \textbf{0.702}{\tiny$\pm$.001} \\
&FB-v2 & 0.524{\tiny$\pm$.003} & 0.710{\tiny$\pm$.004}  & \textbf{0.557}{\tiny$\pm$.004} & \textbf{0.744}{\tiny$\pm$.003} \\
&FB-v3 & 0.504{\tiny$\pm$.001} & 0.663{\tiny$\pm$.003}  & \textbf{0.519}{\tiny$\pm$.001} & \textbf{0.684}{\tiny$\pm$.002} \\
&FB-v4 & 0.496{\tiny$\pm$.001} & 0.684{\tiny$\pm$.001}  & \textbf{0.508}{\tiny$\pm$.002} & \textbf{0.695}{\tiny$\pm$.002} \\
&NL-v1 & \textbf{0.757}{\tiny$\pm$.021} & \textbf{0.878}{\tiny$\pm$.035}  & 0.712{\tiny$\pm$.067} & 0.873{\tiny$\pm$.012} \\
&NL-v2 & \textbf{0.575}{\tiny$\pm$.004} & 0.761{\tiny$\pm$.007}  & 0.566{\tiny$\pm$.003} & \textbf{0.765}{\tiny$\pm$.003} \\
&NL-v3 & 0.563{\tiny$\pm$.004} & 0.755{\tiny$\pm$.006}  & \textbf{0.580}{\tiny$\pm$.001} & \textbf{0.764}{\tiny$\pm$.001} \\
&NL-v4 & 0.469{\tiny$\pm$.020} & 0.733{\tiny$\pm$.011}  & \textbf{0.507}{\tiny$\pm$.062} & \textbf{0.740}{\tiny$\pm$.007} \\
&ILPC Small & \textbf{0.303}{\tiny$\pm$.001} & \textbf{0.453}{\tiny$\pm$.002}  & 0.302{\tiny$\pm$.001} & 0.449{\tiny$\pm$.001} \\
&ILPC Large & \textbf{0.308}{\tiny$\pm$.002} & 0.431{\tiny$\pm$.001} & 0.307{\tiny$\pm$.001} & \textbf{0.432}{\tiny$\pm$.001} \\
&HM 1k & 0.042{\tiny$\pm$.002} & 0.100{\tiny$\pm$.007} & \textbf{0.067}{\tiny$\pm$.009} & \textbf{0.107}{\tiny$\pm$.010} \\
&HM 3k & 0.030{\tiny$\pm$.002} & 0.090{\tiny$\pm$.003} & \textbf{0.054}{\tiny$\pm$.006} & \textbf{0.103}{\tiny$\pm$.002} \\
&HM 5k & 0.025{\tiny$\pm$.001} & 0.068{\tiny$\pm$.003} & \textbf{0.049}{\tiny$\pm$.001} & \textbf{0.091}{\tiny$\pm$.001} \\
&HM Indigo & \textbf{0.432}{\tiny$\pm$.001} & \textbf{0.639}{\tiny$\pm$.002} & 0.426{\tiny$\pm$.001} & 0.635{\tiny$\pm$.001} \\
\bottomrule
\end{tabular}
\end{table}

\begin{table}
\centering
\caption{Detailed finetuned transductive link prediction MRR and hits@10 for $\ultra$ and $\mgnn(\gF_3^\text{path})$.}
\label{app: finetune-v2}
\begin{tabular}
{cccccc}
\toprule
& \multirow{2}{*}{Dataset} & \multicolumn{2}{c}{$\ultra$} &  \multicolumn{2}{c}{$\mgnn$} \\
\cmidrule{3-6}
&& MRR & H@10 & MRR & H@10\\
\midrule
\multirow{3}{*}{\textbf{Pretrained}}& WN18RR & 0.480{\tiny$\pm$.000} & 0.614{\tiny$\pm$.000}  & \textbf{0.529}{\tiny$\pm$.002}& \textbf{0.628}{\tiny$\pm$.001} \\
&FB15k237 & \textbf{0.368}{\tiny$\pm$.000} & \textbf{0.564}{\tiny$\pm$.000}  & 0.357{\tiny$\pm$.004} & 0.550{\tiny$\pm$.005} \\
&CoDEx Medium & \textbf{0.372}{\tiny$\pm$.000} & \textbf{0.525}{\tiny$\pm$.000} & 0.361{\tiny$\pm$.001} & 0.517{\tiny$\pm$.001} \\
\midrule
\multirow{11}{*}{\textbf{Transductive}} &CoDEx Small & \textbf{0.490}{\tiny$\pm$.003} & \textbf{0.686}{\tiny$\pm$.003}  & \textbf{0.490}{\tiny$\pm$.001} & 0.680{\tiny$\pm$.003} \\
&CoDEx Large & 0.343{\tiny$\pm$.002} & 0.478{\tiny$\pm$.002} & \textbf{0.355}{\tiny$\pm$.002} & \textbf{0.489}{\tiny$\pm$.003}\\
&NELL-995 & 0.509{\tiny$\pm$.013} & \textbf{0.660}{\tiny$\pm$.006}  & \textbf{0.514}{\tiny$\pm$.006} & 0.655{\tiny$\pm$.002} \\
&YAGO 310 & 0.557{\tiny$\pm$.009} & 0.710{\tiny$\pm$.003} & \textbf{0.603}{\tiny$\pm$.010} & \textbf{0.735}{\tiny$\pm$.003}\\
&WDsinger & 0.417{\tiny$\pm$.002} & 0.526{\tiny$\pm$.002} & \textbf{0.423}{\tiny$\pm$.001} & \textbf{0.532}{\tiny$\pm$.001} \\
&NELL23k & \textbf{0.268}{\tiny$\pm$.001} & \textbf{0.450}{\tiny$\pm$.001}  & 0.256{\tiny$\pm$.001} & 0.441{\tiny$\pm$.001} \\
&FB15k237(10) & \textbf{0.254}{\tiny$\pm$.001} & \textbf{0.411}{\tiny$\pm$.001} & \textbf{0.254}{\tiny$\pm$.001} &	\textbf{0.411}{\tiny$\pm$.001}  \\
&FB15k237(20) & \textbf{0.274}{\tiny$\pm$.001} & \textbf{0.445}{\tiny$\pm$.002} & 0.273{\tiny$\pm$.001} & 0.444{\tiny$\pm$.001} \\
&FB15k237(50) & \textbf{0.325}{\tiny$\pm$.002} & \textbf{0.528}{\tiny$\pm$.002} & 0.323{\tiny$\pm$.002} & 0.523{\tiny$\pm$.005} \\
&Hetionet & 0.399{\tiny$\pm$.005} & 0.538{\tiny$\pm$.004}  & \textbf{0.446}{\tiny$\pm$.002} & \textbf{0.575}{\tiny$\pm$.004} \\
\bottomrule
\end{tabular}
\end{table}

\begin{table}
\caption{Detailed end-to-end inductive link prediction MRR and hits@10 for $\ultra$ and $\mgnn(\gF_3^\text{path})$.}
\label{app: end2end-v1}
\centering
\begin{tabular}{cccccc}
\toprule
&\multirow{2}{*}{Dataset} & \multicolumn{2}{c}{$\ultra$} &  \multicolumn{2}{c}{$\mgnn$} \\
\cmidrule{3-6}
&& MRR& H@10&  MRR & H@10\\
\midrule
\multirow{23}{*}{\textbf{Inductive} $e,r$} & FB-100 & \textbf{0.425}{\tiny$\pm$.004} & \textbf{0.628}{\tiny$\pm$.003} & 0.418{\tiny$\pm$.002} & 0.614{\tiny$\pm$.001} \\
& FB-75 & 0.385{\tiny$\pm$.001} & \textbf{0.592}{\tiny$\pm$.003} & \textbf{0.389}{\tiny$\pm$.004} & \textbf{0.592}{\tiny$\pm$.007} \\
& FB-50 & 0.323{\tiny$\pm$.005} & 0.523{\tiny$\pm$.006} & \textbf{0.334}{\tiny$\pm$.001} & \textbf{0.532}{\tiny$\pm$.001} \\
& FB-25 & 0.365{\tiny$\pm$.004} & 0.612{\tiny$\pm$.005} & \textbf{0.376}{\tiny$\pm$.001} & \textbf{0.621}{\tiny$\pm$.001} \\
& WK-100 & 0.159{\tiny$\pm$.002} & 0.269{\tiny$\pm$.007} & \textbf{0.163}{\tiny$\pm$.016}  & \textbf{0.273}{\tiny$\pm$.013} \\
& WK-75 & \textbf{0.365}{\tiny$\pm$.006} & 0.501{\tiny$\pm$.020} & 0.354{\tiny$\pm$.006} & \textbf{0.504}{\tiny$\pm$.012} \\
& WK-50 & 0.149{\tiny$\pm$.007} & 0.290{\tiny$\pm$.006} & \textbf{0.150}{\tiny$\pm$.008} & \textbf{0.297}{\tiny$\pm$.022} \\
& WK-25 & \textbf{0.305}{\tiny$\pm$.008} & 0.486{\tiny$\pm$.006} & 0.299{\tiny$\pm$.004} & \textbf{0.489}{\tiny$\pm$.011} \\
& NL-100 & 0.410{\tiny$\pm$.010} & 0.648{\tiny$\pm$.008} & \textbf{0.458}{\tiny$\pm$.021} & \textbf{0.663}{\tiny$\pm$.004} \\
& NL-75 & \textbf{0.364}{\tiny$\pm$.006} & 0.562{\tiny$\pm$.004} & 0.363{\tiny$\pm$.001} & \textbf{0.568}{\tiny$\pm$.010} \\
& NL-50 & \textbf{0.416}{\tiny$\pm$.001} & \textbf{0.597}{\tiny$\pm$.003} & 0.408{\tiny$\pm$.002} & 0.591{\tiny$\pm$.008} \\
& NL-25 & \textbf{0.405}{\tiny$\pm$.006} & \textbf{0.612}{\tiny$\pm$.018} & 0.387{\tiny$\pm$.007} & 0.601{\tiny$\pm$.023} \\
& NL-0 & 0.323{\tiny$\pm$.020} & 0.520{\tiny$\pm$.028} & \textbf{0.329}{\tiny$\pm$.001} & \textbf{0.555}{\tiny$\pm$.004} \\
& MT1 tax & 0.392{\tiny$\pm$.040} & 0.517{\tiny$\pm$.005} & \textbf{0.431}{\tiny$\pm$.045} & \textbf{0.521}{\tiny$\pm$.005} \\
& MT1 health & \textbf{0.378}{\tiny$\pm$.004} & 0.466{\tiny$\pm$.006} & \textbf{0.378}{\tiny$\pm$.001} & \textbf{0.473}{\tiny$\pm$.004} \\
& MT2 org & \textbf{0.100}{\tiny$\pm$.001} & 0.166{\tiny$\pm$.001} & \textbf{0.100}{\tiny$\pm$.001} & \textbf{0.168}{\tiny$\pm$.001} \\
& MT2 sci & 0.323{\tiny$\pm$.002} & 0.524{\tiny$\pm$.006} & \textbf{0.358}{\tiny$\pm$.013} & \textbf{0.535}{\tiny$\pm$.006} \\
& MT3 art & 0.302{\tiny$\pm$.001} & 0.459{\tiny$\pm$.001} & \textbf{0.313}{\tiny$\pm$.003} & \textbf{0.470}{\tiny$\pm$.006} \\
& MT3 infra & 0.671{\tiny$\pm$.005} & 0.815{\tiny$\pm$.006} & \textbf{0.687}{\tiny$\pm$.003} & \textbf{0.823}{\tiny$\pm$.002} \\
& MT4 sci & \textbf{0.290}{\tiny$\pm$.003} & \textbf{0.457}{\tiny$\pm$.005} & \textbf{0.290}{\tiny$\pm$.001} & \textbf{0.457}{\tiny$\pm$.001} \\
& MT4 health & 0.684{\tiny$\pm$.007} & 0.771{\tiny$\pm$.002} & \textbf{0.701}{\tiny$\pm$.001} & \textbf{0.781}{\tiny$\pm$.001} \\
& Metafam & \textbf{0.989}{\tiny$\pm$.010} & \textbf{1.000}{\tiny$\pm$.000} & 0.885{\tiny$\pm$.003} & \textbf{1.000}{\tiny$\pm$.000} \\
& FBNELL & \textbf{0.489}{\tiny$\pm$.012} & \textbf{0.678}{\tiny$\pm$.006} & 0.469{\tiny$\pm$.028} & 0.634{\tiny$\pm$.013} \\
\midrule
\multirow{18}{*}{\textbf{Inductive} $e$} & WN-v1 & 0.668{\tiny$\pm$.007} & 0.788{\tiny$\pm$.005} & \textbf{0.679}{\tiny$\pm$.002} & \textbf{0.792}{\tiny$\pm$.006} \\
& WN-v2 & 0.296{\tiny$\pm$.074} & 0.597{\tiny$\pm$.040} & \textbf{0.684}{\tiny$\pm$.013} & \textbf{0.795}{\tiny$\pm$.006} \\
& WN-v3 & 0.332{\tiny$\pm$.010} & 0.530{\tiny$\pm$.012} & \textbf{0.428}{\tiny$\pm$.017} & \textbf{0.562}{\tiny$\pm$.013} \\
& WN-v4 & 0.622{\tiny$\pm$.018} & 0.709{\tiny$\pm$.003} & \textbf{0.635}{\tiny$\pm$.031} & \textbf{0.720}{\tiny$\pm$.016} \\
& FB-v1 & 0.506{\tiny$\pm$.004} & 0.672{\tiny$\pm$.004} & \textbf{0.524}{\tiny$\pm$.014} & \textbf{0.689}{\tiny$\pm$.013} \\
& FB-v2 & 0.524{\tiny$\pm$.003} & 0.723{\tiny$\pm$.004} & \textbf{0.537}{\tiny$\pm$.002} & \textbf{0.737}{\tiny$\pm$.002} \\
& FB-v3 & 0.502{\tiny$\pm$.002} & 0.670{\tiny$\pm$.002} & \textbf{0.509}{\tiny$\pm$.003} & \textbf{0.679}{\tiny$\pm$.001} \\
& FB-v4 & 0.487{\tiny$\pm$.003} & 0.678{\tiny$\pm$.003} & \textbf{0.494}{\tiny$\pm$.002} & \textbf{0.685}{\tiny$\pm$.005} \\
& NL-v1 & 0.671{\tiny$\pm$.015} & 0.799{\tiny$\pm$.036} & \textbf{0.777}{\tiny$\pm$.007} & \textbf{0.866}{\tiny$\pm$.013} \\
& NL-v2 & 0.526{\tiny$\pm$.017} & 0.745{\tiny$\pm$.007} & \textbf{0.561}{\tiny$\pm$.008} & \textbf{0.775}{\tiny$\pm$.010} \\
& NL-v3 & 0.563{\tiny$\pm$.011} & \textbf{0.758}{\tiny$\pm$.005} & \textbf{0.565}{\tiny$\pm$.009} & 0.752{\tiny$\pm$.009} \\
& NL-v4 & 0.463{\tiny$\pm$.038} & 0.703{\tiny$\pm$.030} & \textbf{0.512}{\tiny$\pm$.008} & \textbf{0.737}{\tiny$\pm$.008}\\
& ILPC Small & \textbf{0.296}{\tiny$\pm$.002} & \textbf{0.445}{\tiny$\pm$.004} &  0.291{\tiny$\pm$.001} & 0.438{\tiny$\pm$.002}  \\
& ILPC Large & \textbf{0.290}{\tiny$\pm$.006} & \textbf{0.417}{\tiny$\pm$.013} & 0.284{\tiny$\pm$.001} & \textbf{0.417}{\tiny$\pm$.002} \\
& HM 1k & \textbf{0.029}{\tiny$\pm$.004} & \textbf{0.072}{\tiny$\pm$.006} & 0.017{\tiny$\pm$.001} & 0.040{\tiny$\pm$.002} \\
& HM 3k & 0.027{\tiny$\pm$.001} & 0.075{\tiny$\pm$.003} & \textbf{0.030}{\tiny$\pm$.001} & \textbf{0.089}{\tiny$\pm$.001} \\
& HM 5k & 0.027{\tiny$\pm$.003} & 0.065{\tiny$\pm$.001} & \textbf{0.028}{\tiny$\pm$.001} & \textbf{0.079}{\tiny$\pm$.001} \\
& HM Indigo & \textbf{0.399}{\tiny$\pm$.001} & \textbf{0.614}{\tiny$\pm$.001} & 0.369{\tiny$\pm$.006} & 0.587{\tiny$\pm$.006} \\
\bottomrule
\end{tabular}
\end{table}

\begin{table}
\caption{Detailed end-to-end transductive link prediction MRR and hits@10 for $\ultra$ and $\mgnn(\gF_3^\text{path})$.}
\label{app: end2end-v2}
\centering
\begin{tabular}{cccccc}
\toprule
&\multirow{2}{*}{Dataset} & \multicolumn{2}{c}{$\ultra$} &  \multicolumn{2}{c}{$\mgnn$} \\
\cmidrule{3-6}
&& MRR& H@10&  MRR & H@10\\
\midrule
\multirow{13}{*}{\textbf{Transductive}} &  WN18RR & 0.489{\tiny$\pm$.005} & 0.620{\tiny$\pm$.001} &  \textbf{0.531}{\tiny$\pm$.004} &  \textbf{0.636}{\tiny$\pm$.005} \\ 
& FB15k237 & \textbf{0.352}{\tiny$\pm$.001} & \textbf{0.546}{\tiny$\pm$.002} & 0.327{\tiny$\pm$.001} & 0.520{\tiny$\pm$.001} \\ 
&  CoDEx Medium & \textbf{0.367}{\tiny$\pm$.003} & \textbf{0.521}{\tiny$\pm$.001} & 0.364{\tiny$\pm$.068}  & 0.518{\tiny$\pm$.087}\\ 
&CoDEx Small & 0.488{\tiny$\pm$.001} & \textbf{0.679}{\tiny$\pm$.001} & \textbf{0.490}{\tiny$\pm$.003} & 0.678{\tiny$\pm$.001} \\
&CoDEx Large & \textbf{0.332}{\tiny$\pm$.002} & \textbf{0.473}{\tiny$\pm$.001} & \textbf{0.332}{\tiny$\pm$.001} &  0.467{\tiny$\pm$.001}\\
&NELL-995  & 0.509{\tiny$\pm$.009} & 0.646{\tiny$\pm$.016} & \textbf{0.510}{\tiny$\pm$.004} & \textbf{0.655}{\tiny$\pm$.009} \\
&YAGO 310 & 0.567{\tiny$\pm$.005} & 0.710{\tiny$\pm$.001} & \textbf{0.583}{\tiny$\pm$.001} & \textbf{0.724}{\tiny$\pm$.001} \\
&WDsinger & 0.413{\tiny$\pm$.001} & 0.525{\tiny$\pm$.001} & \textbf{0.421}{\tiny$\pm$.053} & \textbf{0.530}{\tiny$\pm$.051} \\
&NELL23k & 0.261{\tiny$\pm$.001}	& 0.445{\tiny$\pm$.002}   & \textbf{0.263}{\tiny$\pm$.003}  & \textbf{0.447}{\tiny$\pm$.001}\\
&FB15k237(10) & 0.248{\tiny$\pm$.001} & 0.398{\tiny$\pm$.001} & \textbf{0.255}{\tiny$\pm$.001} &  \textbf{0.411}{\tiny$\pm$.001} \\
&FB15k237(20) & \textbf{0.272}{\tiny$\pm$.001} & 0.436{\tiny$\pm$.002} & 0.271{\tiny$\pm$.001} & \textbf{0.442}{\tiny$\pm$.001}\\
&FB15k237(50) & \textbf{0.320}{\tiny$\pm$.002} & \textbf{0.523}{\tiny$\pm$.003} & 0.316{\tiny$\pm$.002} & 0.511{\tiny$\pm$.001} \\
&Hetionet & \textbf{0.424}{\tiny$\pm$.003} & \textbf{0.553}{\tiny$\pm$.002} & 0.419{\tiny$\pm$.003} & 0.548{\tiny$\pm$.001} \\
\bottomrule
\end{tabular}
\end{table}

\begin{table}[t]
    \scriptsize
    \centering
    \caption{Dataset statistics for \textbf{inductive}-$e,r$ link prediction datasets. Triples are the
number of edges given at training, validation, or test graphs, respectively, whereas Valid and Test denote triples to be predicted in the validation and test graphs.}
    \label{tab:inductive-e-r-statistics}
    \begin{tabular}{lccc|cccc|ccccc}
    \toprule
 \multirow{2}{*}{ \textbf{Dataset }} & \multicolumn{3}{c}{ \textbf{Training Graph} } & \multicolumn{4}{c}{ \textbf{Validation Graph} } & \multicolumn{4}{c}{ \textbf{Test Graph} }  \\
 \cmidrule{2-12}
 & \textbf{Entities} & \textbf{Rels} & \textbf{Triples} & \textbf{Entities} & \textbf{Rels} & \textbf{Triples} & \textbf{Valid} & \textbf{Entities} & \textbf{Rels} & \textbf{Triples} & \textbf{Test}  \\
 \midrule
 FB-25  & 5190 & 163 & 91571 & 4097 & 216 & 17147 & 5716 & 4097 & 216 & 17147 & 5716  \\
 FB-50  & 5190 & 153 & 85375 & 4445 & 205 & 11636 & 3879 & 4445 & 205 & 11636 & 3879  \\
 FB-75  & 4659 & 134 & 62809 & 2792 & 186 & 9316 & 3106 & 2792 & 186 & 9316 & 3106  \\
 FB-100  & 4659 & 134 & 62809 & 2624 & 77 & 6987 & 2329 & 2624 & 77 & 6987 & 2329  \\
 WK-25  & 12659 & 47 & 41873 & 3228 & 74 & 3391 & 1130 & 3228 & 74 & 3391 & 1131  \\
 WK-50  & 12022 & 72 & 82481 & 9328 & 93 & 9672 & 3224 & 9328 & 93 & 9672 & 3225  \\
 WK-75  & 6853 & 52 & 28741 & 2722 & 65 & 3430 & 1143 & 2722 & 65 & 3430 & 1144  \\
 WK-100  & 9784 & 67 & 49875 & 12136 & 37 & 13487 & 4496 & 12136 & 37 & 13487 & 4496  \\
 NL-0  & 1814 & 134 & 7796 & 2026 & 112 & 2287 & 763 & 2026 & 112 & 2287 & 763  \\
 NL-25  & 4396 & 106 & 17578 & 2146 & 120 & 2230 & 743 & 2146 & 120 & 2230 & 744  \\
 NL-50  & 4396 & 106 & 17578 & 2335 & 119 & 2576 & 859 & 2335 & 119 & 2576 & 859  \\
 NL-75  & 2607 & 96 & 11058 & 1578 & 116 & 1818 & 606 & 1578 & 116 & 1818 & 607  \\
 NL-100  & 1258 & 55 & 7832 & 1709 & 53 & 2378 & 793 & 1709 & 53 & 2378 & 793  \\
 Metafam  & 1316 & 28 & 13821 & 1316 & 28 & 13821 & 590 & 656 & 28 & 7257 & 184   \\
 FBNELL  & 4636 & 100 & 10275 & 4636 & 100 & 10275 & 1055 & 4752 & 183 & 10685 & 597   \\
 \midrule
 Wiki MT1 tax  & 10000 & 10 & 17178 & 10000 & 10 & 17178 & 1908 & 10000 & 9 & 16526 & 1834   \\
 Wiki MT1 health  & 10000 & 7 & 14371 & 10000 & 7 & 14371 & 1596 & 10000 & 7 & 14110 & 1566   \\
 Wiki MT2 org  & 10000 & 10 & 23233 & 10000 & 10 & 23233 & 2581 & 10000 & 11 & 21976 & 2441  \\
 Wiki MT2 sci  & 10000 & 16 & 16471 & 10000 & 16 & 16471 & 1830 & 10000 & 16 & 14852 & 1650  \\
 Wiki MT3 art  & 10000 & 45 & 27262 & 10000 & 45 & 27262 & 3026 & 10000 & 45 & 28023 & 3113  \\
 Wiki MT3 infra  & 10000 & 24 & 21990 & 10000 & 24 & 21990 & 2443 & 10000 & 27 & 21646 & 2405  \\
 Wiki MT4 sci  & 10000 & 42 & 12576 & 10000 & 42 & 12576 & 1397 & 10000 & 42 & 12516 & 1388  \\
 Wiki MT4 health  & 10000 & 21 & 15539 & 10000 & 21 & 15539 & 1725 & 10000 & 20 & 15337 & 1703  \\
\bottomrule
\end{tabular}
\end{table}

\begin{table}[t]
    \small
    \centering
     \caption{Dataset statistics for inductive-$e$ link prediction datasets. Triples are the
number of edges given at training, validation, or test graphs, respectively, whereas Valid and Test denote triples to be predicted in the validation and test graphs.}
    \label{tab:inductive-e-statistics}
    \begin{tabular}{lccc|ccc|ccc}
\toprule \multirow{2}{*}{ \textbf{Dataset }} & \multirow{2}{*}{ \textbf{Rels} } & \multicolumn{2}{c}{ \textbf{Training Graph} } & \multicolumn{3}{c}{ \textbf{Validation Graph} } & \multicolumn{3}{c}{ \textbf{Test Graph} }  \\
\cmidrule{3-10} & & \textbf{Entities} & \textbf{Triples} & \textbf{Entities} & \textbf{Triples} & \textbf{Valid} & \textbf{Entities} & \textbf{Triples} & \textbf{Test} \\
\midrule
 FB-v1  & 180 & 1594 & 4245 & 1594 & 4245 & 489 & 1093 & 1993 & 411 \\
 FB-v2 & 200 & 2608 & 9739 & 2608 & 9739 & 1166 & 1660 & 4145 & 947  \\
 FB-v3  & 215 & 3668 & 17986 & 3668 & 17986 & 2194 & 2501 & 7406 & 1731  \\
 FB-v4  & 219 & 4707 & 27203 & 4707 & 27203 & 3352 & 3051 & 11714 & 2840 \\
 WN-v1  & 9 & 2746 & 5410 & 2746 & 5410 & 630 & 922 & 1618 & 373  \\
 WN-v2  & 10 & 6954 & 15262 & 6954 & 15262 & 1838 & 2757 & 4011 & 852  \\
 WN-v3  & 11 & 12078 & 25901 & 12078 & 25901 & 3097 & 5084 & 6327 & 1143  \\
 WN-v4  & 9 & 3861 & 7940 & 3861 & 7940 & 934 & 7084 & 12334 & 2823  \\
 NL-v1  & 14 & 3103 & 4687 & 3103 & 4687 & 414 & 225 & 833 & 201  \\
 NL-v2  & 88 & 2564 & 8219 & 2564 & 8219 & 922 & 2086 & 4586 & 935  \\
 NL-v3  & 142 & 4647 & 16393 & 4647 & 16393 & 1851 & 3566 & 8048 & 1620  \\
 NL-v4  & 76 & 2092 & 7546 & 2092 & 7546 & 876 & 2795 & 7073 & 1447  \\
 ILPC Small  & 48 & 10230 & 78616 & 6653 & 20960 & 2908 & 6653 & 20960 & 2902  \\
 ILPC Large  & 65 & 46626 & 202446 & 29246 & 77044 & 10179 & 29246 & 77044 & 10184 \\
 HM 1k & 11 & 36237 & 93364 & 36311 & 93364 & 1771 & 9899 & 18638 & 476 \\
 HM 3k  & 11 & 32118 & 71097 & 32250 & 71097 & 1201 & 19218 & 38285 & 1349  \\
 HM 5k & 11 & 28601 & 57601 & 28744 & 57601 & 900 & 23792 & 48425 & 2124  \\
 HM Indigo& 229 & 12721 & 121601 & 12797 & 121601 & 14121 & 14775 & 250195 & 14904  \\
\bottomrule
\end{tabular}
\end{table}

\begin{table}[t]
    \centering
    
    \caption{Dataset statistics for transductive link prediction datasets. Task denotes the prediction task: $h/t$ is predicting both heads and tails, and $t$ is
predicting only tails. }
    \label{tab:transductive-statistics}
\begin{tabular}{lccccccc}
\toprule \textbf{Dataset} & \textbf{Entities} & \textbf{Rels} & \textbf{Train} & \textbf{Valid} & \textbf{Test} & \textbf{Task} \\
\midrule
FB15k237  & 14541 & 237 & 272115 & 17535 & 20466 & $h/t$ \\
WN18RR  & 40943 & 11 & 86835 & 3034 & 3134 & $h/t$  \\
 CoDEx Small  & 2034 & 42 & 32888 & 1827 & 1828 & $h/t$ \\
CoDEx Medium  & 17050 & 51 & 185584 & 10310 & 10311 & $h/t$  \\
CoDEx Large & 77951 & 69 & 551193 & 30622 & 30622 & $h/t$ \\
NELL995 & 74536 & 200 & 149678 & 543 & 2818 & $h/t$ \\
YAGO310 & 123182 & 37 & 1079040 & 5000 & 5000 & $h/t$  \\
WDsinger  & 10282 & 135 & 16142 & 2163 & 2203 & $h/t$  \\
NELL23k  & 22925 & 200 & 25445 & 4961 & 4952 & $h/t$ \\
FB15k237(10)  & 11512 & 237 & 27211 & 15624 & 18150 & $t$  \\
FB15k237(20) & 13166 & 237 & 54423 & 16963 & 19776 &$t$ \\
FB15k237(50)  & 14149 & 237 & 136057 & 17449 & 20324 & $t$ \\
Hetionet & 45158 & 24 & 2025177 & 112510 & 112510 & $h/t$  \\
\bottomrule
\end{tabular}
\end{table}

\begin{table}[t]
    \centering
    \caption{$\mgnn$ hyper-parameters for pretraining, fine-tuning, and training from end to end.}
    \label{tab:hyperparameter}
\begin{tabular}{ccc}
\toprule & \textbf{Hyperparameter} & \textbf{$\mgnn$} \\
\midrule 
\multirow{3}{*}{$\enc_1$}
& \# Layers $T$ & 6 \\
& Hidden dimension & 64 \\
& Dropout & 0.2 \\
\midrule 
\multirow{4}{*}{$\enc_{2}$} & \# Layers $L$ & 6 \\
 & Hidden dimension & 64 \\
& $\dec$ & 2-layer MLP \\
& Dropout & 0 \\
\midrule 
\multirow{7}{*}{Pre-training} & Optimizer & AdamW \\
& Learning rate & 0.0005 \\
& Training steps & 40,000 \\
& Adversarial temperature & 1 \\
& \# Negatives & 512 \\
& Batch size & 32 \\
& Training graph mixture & FB15k237, WN18RR, CoDEx Medium \\
\midrule
\multirow{5}{*}{Fine-tuning} & Optimizer & AdamW \\
& Learning rate & 0.0005 \\
& Adversarial temperature & 1 \\
& \# Negatives & 256 \\
& Batch size & 16 \\
\midrule
\multirow{5}{*}{End-to-End} & Optimizer & AdamW \\
& Learning rate & 0.0005 \\
& Adversarial temperature & 1 \\
& \# Negatives & 256 \\
& Batch size & 16 \\
\bottomrule
\end{tabular}
\end{table}

\begin{table}[t]
    \centering
    \caption{Hyperparameters for fine-tuning $\mgnn$ and from end to end. Full represents a whole epoch with all batches being used.}
    \label{tab:hyperparameter-training-finetune}
\begin{tabular}{lccccc}
\toprule
\multirow{2}{*}{\textbf{Datasets}} & \multicolumn{2}{c}{\textbf{Finetune}} & \multicolumn{2}{c}{\textbf{End-to-End}} \\
\cmidrule{2-5}
& \textbf{Epoch} & \textbf{Batch per Epoch} & \textbf{Epoch} & \textbf{Batch per Epoch} \\
\midrule
 FB 25-100 & 3 & full & 10 & full  \\
 WK 25-100 & 3 & full & 10 & full  \\
 NL 0-100 & 3 & full & 10 & full  \\
 MT1-MT4 & 3 & full & 10 & full  \\
 Metafam, FBNELL & 3 & full & 10 & full  \\
 \midrule
 FB v1-v4 & 1 & full & 10 & full  \\
 WN v1-v4 & 1 & full & 10 & full  \\
 NL v1-v4 & 3 & full & 10 & full  \\
 ILPC Small & 3 & full & 10 & full  \\
 ILPC Large & 1 & 1000 & 10 & 1000  \\
 HM 1k-5k, Indigo & 1 & 100 & 10 & 1000  \\
 \midrule
 FB15k237 & 1 & full & 10 & 2000  \\
 WN18RR & 1 & full & 10 & 2000  \\
 CoDEx Small & 1 & 4000 & 10 & 4000  \\
 CoDEx Medium & 1 & 4000 & 10 & 8000  \\
 CoDEx Large & 1 & 2000 & 10 & 4000  \\
 NELL-995 & 1 & full & 10 & 2000  \\
 YAGO310 & 1 & 2000 & 10 & 4000  \\
 WDsinger & 3 & full & 10 & 2000  \\
 NELL23k & 3 & full & 10 & 4000  \\
 FB15k237(10) & 1 & full & 10 & 2000  \\
 FB15k237(20) & 1 & full & 10 & 2000  \\
 FB15k237(50) & 1 & 1000 & 10 & 2000  \\
 Hetionet & 1 & 4000 & 10 & 2000  \\
\bottomrule
\end{tabular}
\end{table}

\end{document}